\newcommand\eq[1]{\begin{equation}#1\end{equation}}
\newcommand\mId[1]{\mathrm{Id}_{#1}}
\newcommand{\RR}{\mathbb{R}}
\newcommand{\vf}{\mathcal{X}(\Omega)}
\newcommand{\RD}{\mathbb{R}^D} 
\newcommand{\Rd}{\mathbb{R}^d} 
\newcommand{\ouv}{\Omega} 
\newcommand{\vdim}{\mathrm{dim}}
\newcommand{\linspan}{\mathrm{span}}
\newcommand{\range}{\mathrm{range}}
\newcommand{\lie}{\mathrm{Lie}}
\newcommand{\R}{\mathbb{R}}
\newcommand{\W}{W_{\phi}}
\newcommand{\rvline}{\hspace*{-\arraycolsep}\vline\hspace*{-\arraycolsep}}
\newcommand{\x}{\theta} 
\renewcommand{\epsilon}{\varepsilon}
\theoremstyle{plain}
\newtheorem{theorem}{Theorem}[section]
\newtheorem{proposition}[theorem]{Proposition}
\newtheorem{lemma}[theorem]{Lemma}
\newtheorem{corollary}[theorem]{Corollary}
\theoremstyle{definition}
\newtheorem{definition}[theorem]{Definition}
\newtheorem{assumption}[theorem]{Assumption}
\theoremstyle{remark}
\newtheorem{remark}[theorem]{Remark}
\newtheorem{example}[theorem]{Example}
\title{Abide by the Law and Follow the Flow: \protect\\
Conservation Laws for Gradient Flows}
\author{%
  Sibylle Marcotte
  \\
  ENS - PSL Univ.\\
  \texttt{sibylle.marcotte@ens.fr} \\
  \And
   Rémi Gribonval \\
  Univ Lyon, EnsL, UCBL, \\
  CNRS, Inria,  LIP, \\
  \texttt{remi.gribonval@inria.fr} \\
   \And
  Gabriel Peyré \\
  CNRS, ENS - PSL Univ. \\
  \texttt{gabriel.peyre@ens.fr} \\
}
\begin{document}

\maketitle
\normalem 

\begin{abstract}
 Understanding the geometric properties of gradient descent dynamics is a key ingredient in deciphering the recent success of very large machine learning models. A striking observation is that trained over-parameterized models retain some properties of the optimization initialization. 
This ``implicit bias'' is believed to be responsible for some favorable properties of the trained models and could explain their good generalization properties. The purpose of this article is threefold. 
First, we rigorously expose the definition and basic properties of ``conservation laws'', that define quantities conserved during gradient flows of a given model (e.g. of a ReLU network with a given architecture) with any training data and any loss.
Then we explain how to find the maximal number of independent conservation laws
by performing finite-dimensional algebraic manipulations on the Lie algebra generated by the Jacobian of the model.
Finally, we provide algorithms to: a) compute a family of polynomial laws; b) compute the maximal number of (not necessarily polynomial) independent conservation laws.
We provide showcase examples that we fully work out theoretically. Besides, applying the two algorithms confirms for a number of ReLU network architectures that all known laws are recovered by the algorithm, and that there are no other independent laws. Such computational tools pave the way to understanding desirable properties of optimization initialization in large machine learning models.
\end{abstract}

\vspace{-0.3cm}
\section{Introduction} \label{Intro}
\vspace{-0.1cm}
State-of-the-art approaches in machine learning rely on the conjunction of gradient-based optimization with vastly ``over-parameterized'' architectures. A large body of empirical \cite{zhang} and theoretical  \cite{belkin} works suggest that, despite the ability of these models to almost interpolate the input data, they are still able to generalize well.  
Analyzing the training dynamics of these models is thus crucial to gain a better understanding of this phenomenon. 
Of particular interest is to understand what properties of the initialization are preserved during the dynamics, which is often loosely referred to as being an ``implicit bias'' of the training algorithm. 
The goal of this article is to make this statement precise, by properly defining maximal sets of 
such
``conservation laws'', 
by linking these quantities to algebraic computations (namely a Lie algebra) associated with the model parameterization (in our framework, this parameterization is embodied by a re-parameterization mapping $\phi$), and finally by exhibiting algorithms to implement these computations in SageMath \cite{sagemath}.
\paragraph{Over-parameterized model}

Modern machine learning practitioners and researchers have found that over-parameterized neural networks (with more parameters than training data points), which are often trained until perfect interpolation, have impressive generalization properties \cite{zhang,belkin}. This performance seemingly contradicts classical learning theory \cite{MLtheory}, and a large part of the theoretical deep learning literature aims at explaining this puzzle. 
The choice of the optimization algorithm is crucial to the model generalization performance \cite{gunasekar, neyshaburthesis, jigradient}, thus inducing an \textit{implicit bias}.

\paragraph{Implicit bias} 

The terminology ``implicit bias'' 
informally refers
to properties of trained models which are induced by the optimization procedure, typically some form of regularization~\cite{neyshabur}.  
For gradient descent, in 
simple cases such as scalar linear neural networks or two-layer networks with a single neuron, it is actually possible to compute in closed form the implicit bias, which induces some approximate or exact sparsity regularization~\cite{gunasekar}.
Another interesting case is logistic classification on separable data, where the implicit bias selects the max-margin classifier both for linear models~\cite{soudry} and for two-layer neural networks in the mean-field limit~\cite{chizat}.
A key hypothesis to explicit the implicit bias is often that the Riemannian metric associated to the over-parameterization is either of Hessian type~\cite{gunasekar,azulay21}, or can be somehow converted to be of Hessian type~\cite{azulay21}, which is seemingly always a strong constraint.
For example, even for simple two-layer linear models (i.e., matrix factorization) with more than a single hidden neuron,
the Hessian type assumption does not hold, and no closed form is known for the implicit bias~\cite{gunasekar2017implicit}.
The work of~\cite{li22} gives conditions on the over-parameterization for this to be possible (for instance certain Lie brackets should vanish). These conditions are (as could be expected) stronger than those required to apply Frobenius theory, as we do in the present work to retrieve conservation laws.

\vspace{-0.1cm}
\paragraph{Conservation laws}
Finding functions conserved during gradient flow optimization of neural networks (a continuous limit of gradient descent often used to model the optimization dynamics) is particularly useful to better understand the flow behavior.
One can see conservation laws as a “weak” form of implicit bias: to explain, among a possibly infinite set of minimizers, which properties (e.g. in terms of sparsity, low-rank, etc.) are being favored by the dynamic. If there are enough conservation laws, one has an exact description of the dynamic (see \Cref{sec:riemanian-flow}), and in some cases, one can even determine explicitly the implicit bias. Otherwise, one can still predict what properties of the initialization are retained at convergence, and possibly leverage this knowledge.
For example, in the case of linear neural networks, certain \emph{balancedness properties} are satisfied and provide a class of conserved functions~\cite{Saxe,Du,Arora18a,Arora18b, Ji,Tarmoun, Min}.
These conservation laws enable for instance to 
prove the global convergence of the gradient flow~\cite{Bah} under some assumptions. 
We detail these 
laws in 
Proposition~\ref{conservation}.
A subset of these ``balancedness'' laws still holds in the case of a ReLU activation \cite{Du}, which reflects the rescaling invariance of these networks (see \Cref{sectionlinearNN} for more details).
More generally such conservation laws bear connections with the invariances of the model \cite{Kunin}: to each 1-parameter group of transformation preserving the loss, one can associate a conserved quantity, which is in some sense analogous to Noether's theorem \cite{Noether1918, Tanaka, głuch2021noether}. Similar reasoning is used by \cite{zhao} to show the influence of initialization on convergence and generalization performance of the neural network.
Our work is somehow complementary to this line of research: instead of assuming a priori known symmetries, we directly analyze the model and give access to conservation laws using algebraic computations.  
For matrix factorization as well as for certain ReLU network architectures, this allows us to show that the conservation laws reported in the literature are complete (there are no other independent quantities that would be preserved by all gradient flows).

\subsection*{Contributions}

We formalize the notion of a conservation law, a quantity preserved through all gradient flows given a model architecture (e.g. a ReLU neural network with prescribed layers). 
Our main contributions are:
\begin{itemize}[leftmargin=0.5cm]
    \item to show that for several classical losses, characterizing conservation laws for deep linear (resp. shallow ReLU)
    networks boils down to analyzing a finite dimensional space of vector fields; 
    \item to propose an algorithm (coded in SageMath) identifying polynomial conservation laws on linear / ReLU network architectures; it identifies all known laws on selected examples; 
    \item to formally define the maximum number of (not necessarily polynomial) independent conservation laws and characterize it a) theoretically via Lie algebra computations; and b) practically via an algorithm (coded in SageMath) computing this number on worked examples;
    \item to illustrate that in certain settings these findings allow to rewrite an over-parameterized flow as an ``intrinsic'' low-dimensional flow;
    \item to highlight that the cost function associated to the training of linear and ReLU networks, shallow or deep, with various losses (quadratic and more) fully fits the proposed framework.
\end{itemize}
A consequence of our results is to show for the first time that conservation laws commonly reported in the literature are maximal: there is no other independent preserved quantity (see Propositions~\ref{dimlie}, \ref{classicinv}, \Cref{nbatteint}) and \Cref{section:numerics}).

\section{Conservation Laws for Gradient Flows}
After some reminders on gradient flows, we formalize the notion of conservation laws.
\vspace{-0.1cm}
\subsection{Gradient dynamics} \label{Context}
\vspace{-0.2cm}
We consider learning problems, where we denote $x_i \in \RR^m$ the features and $y_i \in \mathcal{Y}$ the targets (for regression, typically with $\mathcal{Y} = \RR^n$) or labels (for classification) in the case of supervised learning, while $y_i$ can be considered constant for unsupervised/self-supervised learning. We denote $X \coloneqq (x_i)_i$ and $Y \coloneqq (y_i)_i$.
Prediction is performed by a parametric mapping $g(\theta, \cdot): \RR^m \to \RR^n$ (for instance a neural network) which is trained by 
empirically minimizing over parameters $\x \in \Theta \subseteq \RD$ a \textbf{cost} 
\begin{equation}\label{eq:erm}
        \mathcal{E}_{X,Y}(\theta) \coloneqq \sum_i \ell(g({\x},x_i),y_i),     
\end{equation} 
where $\ell$ is the \textbf{loss} function.
In practical examples with linear or ReLU networks, $\Theta$ is either 
$\RD$ or an open set of ``non-degenerate'' parameters.
The goal of this paper is to analyze what functions $h(\theta)$ are preserved during the gradient flow (the continuous time limit of gradient descent) of $\mathcal{E}_{X,Y}$:
%
\begin{equation} \label{gradientflow}
        \overset{.}{\x}(t) =  -\nabla \mathcal{E}_{X,Y} (\x(t)), \text{ with }
        \x(0) = \x_{\text{init}}.
\end{equation}
A priori, one can consider different ``levels'' of conservation, depending whether $h$ is conserved: during the optimization of $\mathcal{E}_{X,Y}$ for a given loss $\ell$ and a given data set $(x_i,y_i)_i$; 
or given a loss $\ell$, during the optimization of $\mathcal{E}_{X,Y}$ for {\em any} data set $(x_i,y_i)_i$. 
Note that using stochastic optimization methods and discrete gradients would break the exact preservation of the conservation laws, and only approximate conservation would hold, as remarked in~\cite{Kunin}. 
\vspace{-0.1cm}
\subsection{Conserved functions} 
\vspace{-0.2cm}
As they are based on gradient flows, conserved functions are first defined locally.
\begin{definition}[Conservation through a flow]  
\label{def:conserved_through_flow}
Consider an open subset $\ouv \subseteq \Theta$ and a vector field $\chi \in \mathcal{C}^1(\ouv, \RD)$. 
By the Cauchy-Lipschitz theorem, for each initial condition $\x_{\text{init}}$, there exists a unique maximal solution $t \in [0, T_{\x_{\text{init}}}) \mapsto \x(t,\x_{\text{init}})$ of the ODE $ \dot \x(t) = \chi(\x(t))$ with $\x(0) = \x_{\text{init}}$. 
A function $h: \ouv \subseteq \R^D \to \RR$ 
 is {\em conserved on $\ouv$ through the vector field $\chi$}  if $h(\x(t,\x_{\text{init}}))=h(\x_{\text{init}})$ for each choice of $\x_{\text{init}} \in \ouv$  and every $t \in [0, T_{\x_{\text{init}}})$.
 It is {\em conserved on $\ouv$ through a subset} $W \subset \mathcal{C}^1(\ouv, \RD)$ if $h$ is conserved on $\ouv$ during all flows induced by all $\chi \in W$. 
%

\end{definition}
In particular, one can adapt this definition for the flow induced by the cost  \eqref{eq:erm}.
\begin{definition}[Conservation during the flow \eqref{gradientflow} with a given dataset]
Consider an open subset $\ouv \subseteq \Theta$ and a dataset $(X, Y)$ such that $\mathcal{E}_{X, Y} \in \mathcal{C}^{2}(\ouv, \R)$. A function $h: \ouv \subseteq \R^D \to \RR$ is {\em conserved on $\ouv$ during the flow} \eqref{gradientflow} if it is conserved through the vector field $\chi(\cdot) \coloneqq \nabla \mathcal{E}_{X, Y} (\cdot)$.
\end{definition}

Our goal is to study which functions are conserved during {\em ``all'' flows} defined by the ODE \eqref{gradientflow}. This in turn leads to the following definition.
\begin{definition}[Conservation during the flow \eqref{gradientflow} with ``any'' dataset]
  Consider an open subset $\Omega \subset \Theta$ and a loss $\ell(z, y)$ such that $\ell(\cdot, y)$ is $\mathcal{C}^2$-differentiable for all $y \in \mathcal{Y}$. 
  A function $h: \ouv \subseteq \R^D \to \RR$ is {\em conserved on $\ouv$ for any data set} if, for each data set $(X, Y)$ {\em such that} $g(\cdot, x_i) \in \mathcal{C}^{2}(\ouv, \R^n)$ for each $i$, the function $h$ is conserved on $\ouv$ during the flow \eqref{gradientflow}. This leads us to introduce the family of vector fields:
\eq{ \label{eq:W_ell}
 W_\Omega^{g} := \left\{ \chi(\cdot): \exists X,Y, \forall i\ g(\cdot, x_i)
\in \mathcal{C}^{2}(\Omega,\R^n)
 ,\ \chi = \nabla \mathcal{E}_{X,Y} \right\}
  \subseteq \mathcal{C}^{1}(\ouv, \RD)
 }
so that being conserved on $\Omega$ for any dataset is the same as being conserved on $\Omega$ through $W_\Omega^{g}$.
\end{definition}
The above definitions are local and conditioned on a choice of open set of parameters $\ouv \subset \Theta$. We are rather interested in functions defined on the whole parameter space $\Theta$, hence the following definition.
\begin{definition} \label{def:locally_conserved_any}
    A function $h: \Theta \mapsto \R$ is {\em locally conserved on $\Theta$ for any data set} if for each open subset $\ouv \subseteq \Theta$, $h$ is conserved on $\ouv$ for any data set.   
\end{definition}
A basic property of $\mathcal{C}^1$ 
conserved functions (which proof can be found in \Cref{appendix-orth}) 
corresponds to an ``orthogonality'' between their gradient and the considered vector fields.
\begin{proposition}\label{tracecharact}
Given a subset $W \subset \mathcal{C}^{1}(\ouv, \RD)$, its \emph{trace} at $\x \in \ouv$ is defined as the linear space
\vspace{-0.1cm}
\begin{equation}\label{eq:DefTrace}
W(\x) \coloneqq \linspan{\{\chi(\x): \chi \in W\}} \subseteq \RD.
\end{equation}
A function $h \in \mathcal{C}^1(\ouv, \mathbb{R})$ is conserved on $\ouv$ through $W$
if, and only if $\nabla h(\x) \perp W(\x),\forall \x \in \ouv$.
\end{proposition}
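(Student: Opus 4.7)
The statement is a standard characterization, and the plan is to prove both implications directly by differentiating $t \mapsto h(\theta(t))$ along integral curves of vector fields in $W$.

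For the forward direction, assume $h$ is conserved on $\Omega$ through $W$. Fix any $\chi \in W$ and any $\theta_0 \in \Omega$, and let $t \mapsto \theta(t,\theta_0)$ be the unique maximal solution of $\dot \theta(t) = \chi(\theta(t))$ provided by Cauchy--Lipschitz. By hypothesis $h(\theta(t,\theta_0)) = h(\theta_0)$ on the maximal interval, so differentiating at $t=0$ via the chain rule gives $\langle \nabla h(\theta_0), \chi(\theta_0) \rangle = 0$. Since $\chi \in W$ and $\theta_0 \in \Omega$ are arbitrary, $\nabla h(\theta)$ is orthogonal to every generator $\chi(\theta)$ of $W(\theta)$, hence to the whole linear span $W(\theta)$.

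For the converse, assume $\nabla h(\theta) \perp W(\theta)$ for every $\theta \in \Omega$. Pick any $\chi \in W$ and $\theta_0 \in \Omega$, and let $\theta(\cdot, \theta_0)$ be the associated maximal solution on $[0,T_{\theta_0})$. Since $\theta(t,\theta_0) \in \Omega$ along the trajectory (as long as the solution is defined), $\chi(\theta(t,\theta_0)) \in W(\theta(t,\theta_0))$, so by the chain rule
\[
\frac{d}{dt} h(\theta(t,\theta_0)) = \langle \nabla h(\theta(t,\theta_0)), \chi(\theta(t,\theta_0)) \rangle = 0.
\]
Integrating yields $h(\theta(t,\theta_0)) = h(\theta_0)$ for all $t \in [0,T_{\theta_0})$, which is the definition of conservation through $\chi$. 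Since this holds for every $\chi \in W$, $h$ is conserved through $W$.

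There is no real obstacle here: the only subtlety worth mentioning is that one must ensure the integral curve stays in $\Omega$ (which is automatic by the very definition of the maximal solution on $[0,T_{\theta_0})$) so that the orthogonality hypothesis can be invoked at each time $t$. The regularity assumptions $\chi \in \mathcal{C}^1$ and $h \in \mathcal{C}^1$ guarantee both applicability of Cauchy--Lipschitz and validity of the chain rule used in both directions.
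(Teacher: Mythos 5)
Your proof is correct and follows essentially the same route as the paper: reduce to a single vector field and apply the chain rule along the integral curves given by Cauchy--Lipschitz, then pass to the span by linearity. The only cosmetic difference is that the paper proves the ``conserved $\Rightarrow$ orthogonal'' direction by contraposition (nonvanishing of $\partial h\,\chi$ on a small ball along the trajectory), whereas you differentiate the constant map $t \mapsto h(\x(t,\x_0))$ at $t=0$; both are valid and equivalent.
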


Therefore, combining \Cref{tracecharact} and \Cref{def:locally_conserved_any}, the object of interest to study locally conserved functions is the union of the traces
\begin{equation} \label{eq:def_W_ell}
W^{g}_\x \coloneqq \bigcup \Big\{
    W_{\ouv}^{g}(\x) \;:\; 
    \ouv \subseteq \Theta \text{ with } \ouv 
    \text{ a neighborhood of } \x \Big\}.
\end{equation}
\begin{corollary} \label{coro:perp}
    A function $h: \Theta \mapsto \R$ is locally conserved on $\Theta$ for any data set if and only if $\nabla h(\x) \perp W^{g}_\x$ for all $\x \in \Theta$.
\end{corollary}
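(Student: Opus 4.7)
The plan is to unfold the definitions and invoke \Cref{tracecharact} pointwise, so that both sides of the equivalence become statements about the traces $W_\Omega^g(\theta)$. By \Cref{def:locally_conserved_any}, $h$ is locally conserved on $\Theta$ for any dataset iff for every open $\Omega \subseteq \Theta$ the function $h$ is conserved on $\Omega$ through $W_\Omega^g$, which \Cref{tracecharact} in turn converts into the pointwise condition that $\nabla h(\theta) \perp W_\Omega^g(\theta)$ for every open $\Omega \subseteq \Theta$ and every $\theta \in \Omega$.

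It then remains to show that this pointwise condition is equivalent to $\nabla h(\theta) \perp W_\theta^g$ for every $\theta \in \Theta$. For the forward direction, fix $\theta \in \Theta$; every neighborhood $\Omega$ of $\theta$ is in particular an open subset of $\Theta$, so $\nabla h(\theta) \perp W_\Omega^g(\theta)$, and taking the union over such $\Omega$ as in \eqref{eq:def_W_ell} yields $\nabla h(\theta) \perp W_\theta^g$. For the converse, given an arbitrary open $\Omega \subseteq \Theta$ and $\theta \in \Omega$, the set $\Omega$ is itself a neighborhood of $\theta$, so $W_\Omega^g(\theta) \subseteq W_\theta^g$ and the required orthogonality is inherited.

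The argument is essentially bookkeeping and I do not anticipate any substantive obstacle. The only minor subtlety is that $W_\theta^g$ is defined as a union of linear subspaces (one per neighborhood of $\theta$) rather than a single subspace, but since orthogonality to a union coincides with orthogonality to its linear span, this causes no difficulty in passing between the two formulations.
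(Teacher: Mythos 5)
Your proposal is correct and follows the same route the paper takes: the paper treats \Cref{coro:perp} as an immediate consequence of combining \Cref{tracecharact} with \Cref{def:locally_conserved_any} and the definition \eqref{eq:def_W_ell} of $W^g_\x$ as a union of traces over neighborhoods, which is precisely the bookkeeping you carry out. Your closing remark about orthogonality to a union versus its span correctly handles the only mild subtlety, so nothing is missing.
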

It will 
soon be shown (cf \Cref{theorem:reformulation_pb}) that $W^g_\x$ can be rewritten as the trace $W(\x)$ of a {\em simple} finite-dimensional functional space $W$. Meanwhile, we keep the specific notation.
For the moment, this set is explicitly characterized via the following proposition (which proof can be found in \Cref{appendix:prop2}).
\begin{proposition} \label{prop:2}
Assume that for each $y \in \mathcal{Y}$ the loss $\ell(z, y)$ is $\mathcal{C}^2$-differentiable with respect to $z \in \R^n$. 
For each $\x \in \Theta$ we have:
\vspace{-0.1cm}
    $$
    \vspace{-0.2cm}
    W_\x^g
    = \underset{(x, y) \in {\mathcal{X}}_\x \times \mathcal{Y}}{\linspan} \{ [\partial_\x g(\x,x)]^\top \nabla_z \ell (g(\x,x), y) \}
    $$
where $\mathcal{X}_\x$ is the set of data points $x$ such that $g(\cdot,x)$ is $\mathcal{C}^2$-differentiable in the neighborhood of $\x$. 
\end{proposition}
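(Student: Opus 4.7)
The plan is to establish the two inclusions by combining the chain rule with the freedom in choosing the dataset. The key observation is that under the $\mathcal{C}^2$ hypotheses on $\ell$ and on $g(\cdot, x_i)$, the chain rule gives
\[
\nabla \mathcal{E}_{X,Y}(\theta) \;=\; \sum_i [\partial_\theta g(\theta, x_i)]^\top \nabla_z \ell(g(\theta, x_i), y_i).
\]
This identity will drive both directions.

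For the inclusion $W_\theta^g \subseteq \linspan\{\,[\partial_\theta g(\theta,x)]^\top \nabla_z \ell(g(\theta,x),y) : (x,y) \in \mathcal{X}_\theta \times \mathcal{Y}\,\}$, I take any $\chi \in W_\Omega^g$ for some neighborhood $\Omega$ of $\theta$. By definition $\chi = \nabla \mathcal{E}_{X,Y}$ for a dataset with each $g(\cdot, x_i) \in \mathcal{C}^2(\Omega, \R^n)$. In particular every such $x_i$ lies in $\mathcal{X}_\theta$, so the chain rule formula above exhibits $\chi(\theta)$ as a finite sum of generators of the proposed right-hand side. Passing to linear combinations inside each $W_\Omega^g(\theta)$ and then to the union over neighborhoods preserves this inclusion.

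For the reverse inclusion, I would exhibit each generator as the value at $\theta$ of a suitable $\chi \in W_\Omega^g$. Given $(x, y) \in \mathcal{X}_\theta \times \mathcal{Y}$, the definition of $\mathcal{X}_\theta$ provides a neighborhood $\Omega_x$ of $\theta$ on which $g(\cdot, x)$ is $\mathcal{C}^2$. The single-sample dataset $(X, Y) = (\{x\}, \{y\})$ then yields $\chi \coloneqq \nabla \mathcal{E}_{\{x\},\{y\}} \in W_{\Omega_x}^g$, whose value at $\theta$ is exactly $[\partial_\theta g(\theta, x)]^\top \nabla_z \ell(g(\theta, x), y)$. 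Hence every generator belongs to $W_{\Omega_x}^g(\theta) \subseteq W_\theta^g$.

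The main subtle point, and the step I would treat with most care, is that $W_\theta^g$ is \emph{a priori} only a union of linear spaces, so containing each individual generator does not immediately imply containing arbitrary linear combinations. To close the argument one shows $W_\theta^g$ is itself a vector space, by a common-refinement argument: if $v_1 \in W_{\Omega_1}^g(\theta)$ and $v_2 \in W_{\Omega_2}^g(\theta)$, restricting the underlying vector fields from $\Omega_i$ to $\Omega_1 \cap \Omega_2$ (still a neighborhood of $\theta$ on which the relevant $g(\cdot, x_i)$ remain $\mathcal{C}^2$) places both $v_1$ and $v_2$ in the single linear space $W_{\Omega_1 \cap \Omega_2}^g(\theta)$, so every linear combination of $v_1$ and $v_2$ also belongs to $W_\theta^g$. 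Combined with the two inclusions above, this yields the claimed equality.
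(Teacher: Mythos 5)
Your proof follows essentially the same route as the paper's: the chain-rule identity for $\nabla\mathcal{E}_{X,Y}$, the observation that any $x_i$ in an admissible dataset lies in $\mathcal{X}_\theta$, and the single-sample dataset $(\{x\},\{y\})$ to realize each generator. The one thing you add that the paper leaves implicit is the common-refinement argument showing that $W_\theta^g$, a priori only a union of traces over shrinking neighborhoods, is in fact a vector space (because $\Omega' \subseteq \Omega$ implies $W_\Omega^g(\theta) \subseteq W_{\Omega'}^g(\theta)$, so the family is directed); this is a welcome clarification rather than a different method.
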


\begin{example}\label{example:firsttriviallaw}
As a first simple example, consider 
a two-layer {\em linear} neural network in dimension 1 (both for the input and output), with a single neuron. 
For such -- admittedly trivial -- architecture, the parameter is $\x = (u,v) \subseteq \R^2$ and the model writes $g(\x,x) = uvx$.
One can directly check that the function: $h(u, v) = u^2-v^2$ is {\em locally conserved on $\R^2$ for any data set}. Indeed
in that case $ \nabla h(u, v) = (2u, -2v)^\top \perp W_\x^g
    = \underset{(x, y) \in {\R \times \mathcal{Y}}}{\linspan} \{ (vx, ux)^\top \nabla_z \ell (g(\x , x), y) \} = \R \times (v, u)^\top$ given that the gradient $\nabla_z \ell (g(\x, x), y)$ is  an arbitrary scalar.
\end{example}

In this example we obtain a simple expression of $W_\x^g$, however in general cases it is not possible to obtain such a simple expression from \Cref{prop:2}. We will show that in some cases, it is possible to express $W_\x^g$ as the trace $W(\x)$ of a simple finite-dimensional space $W$ (cf. \Cref{theorem:reformulation_pb}).

\vspace{-0.1em}
\subsection{Reparametrization}
\vspace{-0.2em}
To make the mathematical analysis tractable and provide an algorithmic procedure to determine these functions, our fundamental hypothesis is that the \textbf{model} $g(\theta, x)$ 
can be (locally) factored via a \textbf{reparametrization} $\phi$ as $
f(\phi(\x),x)$. We require that the model $g(\theta, x)$ satisfies the following central assumption.

\begin{assumption}[Local reparameterization] \label{as:main_assumption}
There exists $d$ and $\phi \in \mathcal{C}^2(\Theta,\Rd)$ such that: for each parameter $\theta_0$ in the open set $\Theta \subseteq \RD$, for each $x \in \mathcal{X}$ such that $\x \mapsto 
    g(\x,x)$ is $\mathcal{C}^{2}$ in a neighborhood of $\x_0$\footnote{i.e., $x$ belongs to the set $\mathcal{X}_{\x_0}$, as defined in \Cref{prop:2}.}, there is a neighborhood $\ouv$ of $\theta_0$ and $f(\cdot, x) \in \mathcal{C}^2(\phi(\ouv), \R^n)$ such that
\begin{equation}
    \label{eq:elr-general}
    \vspace{-0.2cm}
      \forall \x \in \ouv, \quad  
     g(\theta,x) = 
     f(\phi(\theta), x).
\end{equation}
\end{assumption} 
Note that if the model $g(\cdot, x)$ is smooth on $\ouv$ then \eqref{eq:elr-general} is always satisfied with $\phi \coloneqq \texttt{id}$ and $f(\cdot, x) \coloneqq g(\cdot,x)$, yet this trivial factorization fails to capture the existence and number of conservation laws as studied in this paper. 
This suggests that, among all factorizations shaped as \eqref{eq:elr-general}, there may be a notion of an optimal one.
\begin{example}(Factorization for {\em linear} neural networks) \label{ex:param-linear}
In the two-layer case, with $r$ neurons, denoting $\x=(U,V) \in \RR^{n \times r} \times \RR^{m \times r}$ (so that $D=(n+m)r$), we can factorize  $g(\x,x) \coloneqq U V^\top x$ by the reparametrization 
$\phi(\x) \coloneqq U V^\top \in \R^{n \times m}$ using $f(
\phi,
x) = 
\phi \cdot x $. 
More generally for $q$ layers, with $\x = (U_1, \cdots, U_q)$, we can still factorize $g(\x,x) \coloneqq U_1 \cdots U_q x$ using $\phi(\x) \coloneqq U_1 \cdots U_q$ and the same $f$. 
This factorization is {\em globally} valid on $\Omega = \Theta = \RD$ since $f(\cdot, x)$ does not depend on $\x_0$. 
\end{example}
The notion of locality of the factorization~\eqref{eq:elr-general}
is illustrated by the next example.
%
%
\begin{example}[Factorization for two-layer ReLU networks] \label{ex:param-ReLU} 
Consider $g(\x,x) = \big( \sum_{j=1}^{r} u_{k, j} \sigma(\langle v_j, x \rangle + b_j)+ c_k\big)_{k = 1}^{n}
$, with $\sigma(t) \coloneqq \max(t,0)$ the ReLU activation function and $v_j \in \R^m$, $u_{k,j} \in \R$, $b_j, c_k \in \R$. 
Then, denoting $\x = (U,V, b, c)$ with $U = (u_{k,j})_{k, j} =: (u_1, \cdots, u_r) \in \R^{n \times r}$, $V = (v_1, \cdots, v_r) \in \R^{m \times r}$, $b = (b_1, \cdots, b_r)^\top \in \R^r$ and $c = (c_1, \cdots, c_n) \in \R^n$  (so that $D = (n+m+1)r + n$), we rewrite $g(\x,x) = \sum_{j=1}^r u_j \epsilon_{j, x} \left(v_j^\top x + b_j\right) + c$ 
where, given $x$, $\epsilon_{j, x} = \mathbb{1} ( v_j^\top x + b_j > 0 )$ is piecewise constant with respect to $\x$. 
Consider $\x^0 = (U^0, V^0, b^0, c^0) \in  \RD$ where $V^0 = (v_1^0, \cdots, v_r^0)$ and $b^0 = (b_1^0, \cdots, b_r^0)^\top$. Then the set $\mathcal{X}_{\x^0}$ introduced in \Cref{prop:2} is $\mathcal{X}_{\x^0} = \R^m - \cup_j \{ {v_j^0}^\top x  + b_j^0= 0 \}$. Let $x \in \mathcal{X}_{\x^0}$. Then on any domain $\ouv \subset \RD$ such that $\x^0 \in \ouv$ and $\epsilon_{j,x}(\x) \coloneqq \mathbb{1}(v_j^\top x + b_j> 0)$ is constant over $\x \in \Omega$, the model $g_{\x}(x)$ can be factorized by the reparametrization %
$\phi(\x) =  ((u_j v_j^\top, u_j b_j )_{j=1}^r, c)$. In particular, in the case without bias ($(b, c) = (0, 0)$), the reparametrization is defined by
$\phi(\x) = (\phi_j)_{j=1}^r$ where $\phi_j = \phi_j(\x) \coloneqq u_j v_j^\top \in \R^{n \times m}$ (here $d =rmn$) using $f(\phi,x) = \sum_{j} \epsilon_{j,x} \phi_{j}x$: the reparametrization $\phi(\theta)$ contains $r$ matrices of size $m \times n$ (each of rank at most one) associated to a ``local''  $f(\cdot, x)$ valid in a neighborhood of $\x$. 
A similar factorization is possible for deeper ReLU networks \cite{stock_embedding_2022}, as further discussed in the proof of \Cref{theorem:reformulation_pb} in \Cref{appendix_loss}.
\end{example}

Combining \Cref{prop:2} and using chain rules, we get a new characterization of $W^g_\x$:
\begin{proposition} \label{prop:3}
    Assume that the loss $\ell(z, y)$ is $\mathcal{C}^2$-differentiable with respect to $z$.  We recall (cf \eqref{eq:def_W_ell}) that $W^g_\x \coloneqq \cup_{\ouv \subseteq \Theta: \ouv\ \text{is open and}\ \ouv \ni \x} W_{\ouv}^g(\x).$ Under \Cref{as:main_assumption}, for all $\x \in \Theta$:
    \begin{equation} \label{eq:chain_rules}
    W^g_\x = \partial \phi(\x)^\top W^{f}_{\phi(\x)} 
    \end{equation}
    with
    $\partial \phi(\x) \in \mathbb{R}^{d \times D}$ the Jacobian of $\phi$ and $W^{f}_{\phi(\x)} \coloneqq \underset{(x, y) \in {\mathcal{X}}_\x \times \mathcal{Y}}{\linspan} \{ \partial f^x(\phi(\x))^\top \nabla_z \ell (g(\x , x), y) \}$, where
    $f^x(\cdot)\coloneqq f(\cdot, x)$.
\end{proposition}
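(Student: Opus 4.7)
The plan is to reduce Proposition~\ref{prop:3} to an application of the chain rule on top of the characterization provided by Proposition~\ref{prop:2}, and then to check that the index set appearing in the span is preserved throughout.

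First, I would invoke Proposition~\ref{prop:2}, whose hypothesis (namely $\ell(\cdot,y) \in \mathcal{C}^2$) is part of the statement of Proposition~\ref{prop:3}. This gives
\[
 W^g_\x = \underset{(x,y) \in \mathcal{X}_\x \times \mathcal{Y}}{\linspan} \bigl\{ [\partial_\x g(\x,x)]^\top\, \nabla_z \ell(g(\x,x),y) \bigr\},
\]
so that the problem reduces to rewriting each generator of this span.

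Second, I would fix $\x \in \Theta$ and $x \in \mathcal{X}_\x$. By definition of $\mathcal{X}_\x$, the map $\x' \mapsto g(\x',x)$ is $\mathcal{C}^2$ in a neighborhood of $\x$, hence Assumption~\ref{as:main_assumption} provides a (possibly smaller) neighborhood $\Omega$ of $\x$ and a function $f(\cdot,x) \in \mathcal{C}^2(\phi(\Omega),\R^n)$ satisfying $g(\x',x) = f(\phi(\x'),x)$ for all $\x' \in \Omega$. Differentiating this identity at $\x' = \x$ via the chain rule and transposing yields
\[
 [\partial_\x g(\x,x)]^\top = \partial \phi(\x)^\top\, [\partial f^x(\phi(\x))]^\top.
\]
Substituting back into each generator gives
\[
 [\partial_\x g(\x,x)]^\top\, \nabla_z \ell(g(\x,x),y) = \partial \phi(\x)^\top \bigl( [\partial f^x(\phi(\x))]^\top\, \nabla_z \ell(g(\x,x),y) \bigr),
\]
and factoring the linear map $\partial \phi(\x)^\top$ out of the span over $(x,y) \in \mathcal{X}_\x \times \mathcal{Y}$ produces precisely $\partial \phi(\x)^\top W^f_{\phi(\x)}$, as wanted.

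The only subtle bookkeeping step, and the main potential obstacle, is to ensure that the two spans range over the \emph{same} index set $\mathcal{X}_\x \times \mathcal{Y}$. This is not automatic from Assumption~\ref{as:main_assumption} alone, which only posits the existence of $f(\cdot,x)$ for those $x$ where $g(\cdot,x)$ is $\mathcal{C}^2$ near $\x$; but these are exactly the points of $\mathcal{X}_\x$, so the match is immediate. Everything else is the chain rule together with the elementary fact that applying a fixed linear map commutes with taking the linear span. I do not anticipate any further difficulty, as no transversality, openness, or constant-rank argument is needed at this stage.
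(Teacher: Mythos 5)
Your proposal is correct and follows exactly the route the paper indicates: invoke Proposition~\ref{prop:2}, differentiate the factorization $g(\cdot,x)=f(\phi(\cdot),x)$ from Assumption~\ref{as:main_assumption} via the chain rule (transposed), and pull the fixed linear map $\partial\phi(\x)^\top$ out of the span. The paper gives no separate proof of Proposition~\ref{prop:3} beyond the one-line remark preceding it (``Combining \Cref{prop:2} and using chain rules''), and your write-up supplies precisely those details, including the correct observation that the index set $\mathcal{X}_\x$ is by definition the set where Assumption~\ref{as:main_assumption} applies.
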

We show in \Cref{sec:conservationlaw} that, under mild assumptions on the loss $\ell$, 
$W^{f}_{\phi(\x)} = \Rd$, so that \Cref{prop:3} yields
$W^g_\x = \range(\partial \phi (\x)^\top)$. Then by \Cref{coro:perp}, a function $h$ that is locally conserved on $\Theta$ for any data set is \emph{entirely characterized} via the kernel of $\partial \phi(\x)^\top$: $\partial \phi(\x)^\top \nabla h(\x) = 0$ for all $\x \in \Theta$.
The core of our analysis is then to analyze the 
(Lie algebraic) structure of 
$\range(\partial \phi(\cdot)^\top)$.

\vspace{-0.1em}
\subsection{From conserved functions to conservation laws}\label{sec:conservationlaw}
\vspace{-0.2em}

For linear and ReLU networks we show in \Cref{theorem:reformulation_pb} and \Cref{equiv} that under (mild) assumptions on the loss $\ell(\cdot,\cdot)$,
    being locally conserved on $\Theta$ for any data set (according to \Cref{def:locally_conserved_any}) is the same as being conserved (according to \Cref{def:conserved_through_flow}) on $\Theta$ through the {\em finite-dimensional} subspace 
    \vspace{-0.2cm}
\begin{equation} \label{eq:v-phi}
\vspace{-0.22cm}
    W_\phi \coloneqq \linspan \{\nabla \phi_1(\cdot), \cdots, \nabla \phi_{d}(\cdot) \}
    =
    \Big\{\x \mapsto \sum_i 
    a_i \nabla \phi_i(\x): (a_1,\ldots,a_{d}) \in \R^{d}\Big\}
\end{equation}
where we write $\partial \phi(\x)^\top = (\nabla \phi_1(\x), \cdots, \nabla \phi_{d}(\x)) \in \R^{D \times d}$, with $\nabla \phi_i \in \mathcal{C}^1(\Theta, \RD)$.

The following results (which proofs can be found in \Cref{appendix_loss}) establish that in some cases, the functions locally conserved for any data set are exactly the functions conserved through $W_\phi$.

\begin{lemma} \label{lemma:thm}
Assume that the loss $(z,y) \mapsto\ell(z,y)$ is $\mathcal{C}^2$-differentiable with respect to $z \in \R^n$ and satisfies the condition:
    \eq{ \label{eq:condition_loss}
    \underset{y \in \mathcal{Y}}{\linspan}\{\nabla_z \ell (z, y) \}= \R^n, \forall z \in \R^n.
    } 
    Then for linear neural networks (resp. for two-layer ReLU networks) and all 
    $\x \in \Theta$ we have  $W^{f}_{\phi(\x)} = \R^d$, with the reparametrization $\phi$ from \Cref{ex:param-linear} and $\Theta \coloneqq\RD$ (resp. with $\phi$ from \Cref{ex:param-ReLU} and $\Theta$ consisting of all parameter $\x$ of the network such that hidden neurons are associated to pairwise distinct  ``hyperplanes'', cf \Cref{appendix_loss} for details).
\end{lemma}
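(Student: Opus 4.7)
The plan is to compute the span $W^f_{\phi(\x)}$ explicitly in each case and reduce the problem to a finite-dimensional linear-algebraic statement. First I would invoke the loss condition \eqref{eq:condition_loss}: for each fixed $x$, the vectors $\nabla_z \ell(g(\x, x), y)$ with $y \in \mathcal{Y}$ already span $\R^n$; by linearity of $\partial f^x(\phi(\x))^\top$, the target identity reduces to showing
$$\linspan\{\partial f^x(\phi(\x))^\top v : x \in \mathcal{X}_\x,\ v \in \R^n\} = \R^d.$$

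For linear networks, $f(\phi, x) = \phi x$ with $\phi \in \R^{n \times m}$ (so $d = nm$), and a direct computation gives $\partial f^x(\phi)^\top v = v x^\top$. Since $\mathcal{X}_\x = \R^m$, the rank-one matrices $v x^\top$ with $v \in \R^n$, $x \in \R^m$ span $\R^{n \times m}$: choosing $v = e_i$ and $x = e_j$ recovers every standard basis matrix $e_i e_j^\top$. Hence $W^f_{\phi(\x)} = \R^d$.

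For the two-layer ReLU case without bias (the biased case being analogous), $f(\phi, x) = \sum_{j=1}^{r} \epsilon_{j, x} \phi_j x$, so the $j$-th block of $\partial f^x(\phi)^\top v$ is $\epsilon_{j, x} v x^\top$. I would fix an activation region $R$ (a connected component of $\mathcal{X}_\x = \R^m \setminus \bigcup_j H_j$ with $H_j \coloneqq \{v_j^\top x = 0\}$), on which the pattern $\epsilon^R \in \{0, 1\}^r$ is constant. Since $R$ is open in $\R^m$, the argument from the linear case applied within $R$ shows that the span over $x \in R$ and $v \in \R^n$ of the tuples $(\epsilon_j^R v x^\top)_{j=1}^r$ equals $\{(\epsilon_j^R N)_{j=1}^r : N \in \R^{n \times m}\}$. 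Taking linear combinations across regions, one obtains $W^f_{\phi(\x)} = (\R^{n \times m})^r = \R^d$ if and only if the activation-pattern vectors $\{\epsilon^R\}_R \subset \R^r$ span $\R^r$.

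The remaining step, and the main obstacle, is precisely this geometric claim, which is where the non-degeneracy condition defining $\Theta$ enters. Because the hyperplanes $H_1, \ldots, H_r$ are pairwise distinct, for each fixed $j$ one can select $x_0 \in H_j \setminus \bigcup_{k \neq j} H_k$ (each intersection $H_j \cap H_k$ is a proper subspace of $H_j$). In a small ball around $x_0$, the activations $\epsilon_k$ for $k \neq j$ are constant while $\epsilon_j$ flips across $H_j$, so two adjacent regions $R^\pm$ yield patterns satisfying $\epsilon^{R^+} - \epsilon^{R^-} = \pm e_j$. Letting $j$ vary, every standard basis vector of $\R^r$ lies in the span of the activation patterns, which closes the argument. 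The crux is exactly this geometric step, and it is precisely where the pairwise-distinctness assumption on the hyperplanes is used.
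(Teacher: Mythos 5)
Your proposal is correct, and it takes a genuinely different (and arguably cleaner) route to the ReLU case than the paper does. Both proofs exploit the same geometric fact — that pairwise-distinct hyperplanes imply $\mathcal{H}_j \setminus \bigcup_{k\neq j}\mathcal{H}_k$ is nonempty, so one can find a point where only $\epsilon_j$ flips — but they organize the linear algebra differently. The paper works directly with matrices $C_{\x,x}^\top$, passes to a limit $x_\eta^\pm \to x'$ (relying on finite-dimensional subspaces being closed) to isolate a single neuron's block as $A(x')$ for $x'$ on the hyperplane, then takes additional finite differences along $v_i$ to recover $A(v_i)$ and uses $\linspan\{v_i,\mathcal{H}_i'\}=\R^m$ to fill out that block before assembling across neurons. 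You instead observe that within a single activation region $R$ (an open subset of $\R^m$) the rank-one matrices $vx^\top$ already span all of $\R^{n\times m}$, identify $\R^d \cong \R^{n\times m}\otimes\R^r$ so that the per-region contributions are $\R^{n\times m}\otimes\R\epsilon^R$, and reduce the whole problem to the one-dimensional claim $\linspan_R\{\epsilon^R\}=\R^r$, which the hyperplane-crossing argument handles directly without any limits. The tensor-product decomposition makes the role of the non-degeneracy hypothesis more transparent, at the cost of being slightly more abstract; the paper's version is more explicit and produces concrete matrices in the span. Your treatment of the biased case as ``analogous'' is also legitimate: replacing $x$ by $\bar{x}=(x^\top,1)^\top$ still yields a spanning set since an open $R\subset\R^m$ contains an affinely independent $(m+1)$-tuple.
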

Condition~\eqref{eq:condition_loss} holds for classical losses $\ell$ (e.g. quadratic/logistic losses), as shown in  \Cref{lem:classicallosses} in \Cref{appendix_loss}. Note that the additional hypothesis of pairwise distinct hyperplanes for the two-layer ReLU case is a generic hypothesis and is usual (see e.g. the notion of twin neurons in \cite{stock_embedding_2022}). The tools from \Cref{appendix_loss} extend \Cref{theorem:reformulation_pb} beyond (deep) linear and shallow ReLU networks. An open problem is whether the conclusions of \Cref{lemma:thm} still hold for deep ReLU networks.

\begin{theorem} \label{theorem:reformulation_pb}
Under the same assumptions as in \Cref{lemma:thm}, we have that for linear neural networks, for all $\x \in \Theta \coloneqq \RD$:
    \begin{equation}
        W_\x^{g} = W_\phi(\x).
    \end{equation}
    The same result holds for two-layer ReLU networks with $\phi$ from \Cref{ex:param-ReLU} and $\Theta$ the (open) set of all parameters $\x$ such that hidden neurons are associated to pairwise distinct  ``hyperplanes''.
\end{theorem}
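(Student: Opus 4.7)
The proof is essentially an assembly of the machinery already developed in the preceding subsection, so the plan is to trace through the chain Proposition 2.6 $\to$ Lemma 2.7 $\to$ identification of traces, and argue that on the specified $\Theta$ everything lines up.

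The first step is to invoke Proposition \ref{prop:3}: under the hypothesis that $\ell(z,y)$ is $\mathcal{C}^2$ in $z$, we have
\[
W^g_\theta \;=\; \partial\phi(\theta)^\top\, W^f_{\phi(\theta)}
\]
for every $\theta \in \Theta$ for which Assumption \ref{as:main_assumption} applies. For deep linear networks this is valid on all of $\RD$ with the global factorization of Example \ref{ex:param-linear}; for two-layer ReLU networks we restrict to the open set $\Theta$ of parameters whose hidden neurons define pairwise distinct hyperplanes, which is precisely the region where the local factorization of Example \ref{ex:param-ReLU} (with the piecewise-constant activation pattern) is well defined in a neighborhood of every point and the $\mathcal{C}^2$ smoothness required by Proposition \ref{prop:3} holds.

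The second step is to apply Lemma \ref{lemma:thm}, whose hypothesis \eqref{eq:condition_loss} on the loss is assumed. That lemma gives $W^f_{\phi(\theta)} = \R^d$ in both the linear and the shallow ReLU cases, so the previous display collapses to
\[
W^g_\theta \;=\; \partial\phi(\theta)^\top \R^d \;=\; \range\bigl(\partial\phi(\theta)^\top\bigr).
\]
The third step is purely a rewriting: by the definition \eqref{eq:v-phi} of $W_\phi$ and of the trace \eqref{eq:DefTrace},
\[
W_\phi(\theta) \;=\; \linspan\{\nabla\phi_1(\theta),\ldots,\nabla\phi_d(\theta)\} \;=\; \range\bigl(\partial\phi(\theta)^\top\bigr),
\]
since the columns of $\partial\phi(\theta)^\top$ are exactly the $\nabla\phi_i(\theta)$. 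Combining the two equalities yields $W^g_\theta = W_\phi(\theta)$, as claimed.

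The only nontrivial point in this assembly is making sure the ReLU case is really covered by the previously established lemmas: one has to check that on the open set where the activation pattern is locally constant, the local reparametrization $\phi$ of Example \ref{ex:param-ReLU} and the associated local $f(\cdot,x)$ satisfy the hypotheses of Proposition \ref{prop:3} and Lemma \ref{lemma:thm}, and that the trace $W^g_\theta$ obtained by unioning over shrinking neighborhoods of $\theta$ coincides with the value $\partial\phi(\theta)^\top W^f_{\phi(\theta)}$ computed from any single such neighborhood. This is the main verification one needs to carry out with care and is exactly what the "pairwise distinct hyperplanes" assumption buys, but it does not involve any new computation beyond what Lemma \ref{lemma:thm} and Proposition \ref{prop:3} already deliver.
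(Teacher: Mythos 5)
Your proof is correct and follows the same route as the paper: Proposition~\ref{prop:3} gives $W^g_\theta=\partial\phi(\theta)^\top W^f_{\phi(\theta)}$, Lemma~\ref{lemma:thm} supplies $W^f_{\phi(\theta)}=\R^d$ on the specified~$\Theta$, and the identity $W_\phi(\theta)=\linspan\{\nabla\phi_i(\theta)\}_i=\range(\partial\phi(\theta)^\top)$ closes the argument. The paper merely packages Lemma~\ref{lemma:thm} and Theorem~\ref{theorem:reformulation_pb} together in the appendix (Theorems~\ref{thm:LinearNetsAllF} and~\ref{thm:2layerReLUappendix}), with the real work being the verification of $W^f_{\phi(\theta)}=\R^d$, which you correctly attribute to Lemma~\ref{lemma:thm} and treat as given.
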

This means as claimed that for linear and two-layer ReLU networks, being locally conserved on $\Theta$ for any data set exactly means being conserved on $\Theta$ through the finite-dimensional functional space $W_\phi \subseteq 
\mathcal{C}^1(\Theta, \RD)$. 
This motivates the following definition
\begin{definition} \label{defconservationGF}
    A real-valued function $h$ is a \emph{conservation law of 
    $\phi$} if it is conserved through~$W_\phi$.
\end{definition}

\Cref{tracecharact} yields the following intermediate result.
\begin{proposition} \label{equiv}
    $h \in \mathcal{C}^1(\ouv, \mathbb{R})$ is a conservation law for $\phi$ 
    if and only if 
    $$\nabla h(\x) \perp \nabla \phi_j(\x),\ \forall\ \x \in \ouv,\ \forall j \in \{1,\ldots,d\}.$$
\end{proposition}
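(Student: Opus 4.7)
The plan is to derive this essentially as a direct corollary of Proposition \ref{tracecharact} combined with the explicit description of $W_\phi$ given in \eqref{eq:v-phi}.

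First, I would unfold the definitions. By Definition \ref{defconservationGF}, saying that $h$ is a conservation law for $\phi$ means that $h$ is conserved on $\Omega$ through the subset $W_\phi \subset \mathcal{C}^1(\Omega, \RD)$ in the sense of Definition \ref{def:conserved_through_flow}. Applying Proposition \ref{tracecharact} to the subset $W_\phi$, this is equivalent to $\nabla h(\x) \perp W_\phi(\x)$ for every $\x \in \Omega$, where $W_\phi(\x) = \linspan\{\chi(\x) : \chi \in W_\phi\}$ is the trace from \eqref{eq:DefTrace}.

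The key step is then to identify the trace $W_\phi(\x)$ explicitly. From \eqref{eq:v-phi}, every vector field $\chi \in W_\phi$ has the form $\chi(\cdot) = \sum_{i=1}^{d} a_i \nabla \phi_i(\cdot)$ for some scalars $a_i \in \R$, so evaluating at $\x$ gives $\chi(\x) = \sum_{i=1}^{d} a_i \nabla \phi_i(\x) \in \linspan\{\nabla \phi_1(\x), \ldots, \nabla \phi_d(\x)\}$, showing one inclusion. For the reverse inclusion, choosing $a_i = \delta_{ij}$ shows that $\nabla \phi_j \in W_\phi$ and hence $\nabla \phi_j(\x) \in W_\phi(\x)$ for each $j$. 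Therefore
\[
W_\phi(\x) = \linspan\{\nabla \phi_1(\x), \ldots, \nabla \phi_d(\x)\}.
\]

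Finally, the orthogonality condition $\nabla h(\x) \perp W_\phi(\x)$ is equivalent, by linearity of the inner product, to $\nabla h(\x) \perp \nabla \phi_j(\x)$ for each $j \in \{1, \ldots, d\}$, which yields the claim. There is no real obstacle here: the statement is a straightforward reformulation, and the only subtle point is making sure that the passage from a $\chi \in W_\phi$ (a vector field with \emph{constant} coefficients $a_i$) to its pointwise trace $W_\phi(\x)$ recovers the full span of the gradients $\nabla \phi_j(\x)$, which is immediate from the definition of $W_\phi$.
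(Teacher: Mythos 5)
Your proposal is correct and follows exactly the paper's route: the paper obtains \Cref{equiv} directly from \Cref{tracecharact} applied to $W_\phi$, whose trace at $\x$ is by \eqref{eq:v-phi} precisely $\linspan\{\nabla \phi_1(\x),\ldots,\nabla \phi_d(\x)\}$, so orthogonality to the trace is equivalent to orthogonality to each $\nabla\phi_j(\x)$. Your explicit verification of the two inclusions for the trace is a harmless elaboration of what the paper leaves implicit.
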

Thanks to \Cref{theorem:reformulation_pb}, the space $W_\phi$ defined in~\eqref{eq:v-phi} introduces a much simpler proxy to express $W_\x^g$ as a trace of a subset of $\mathcal{C}^1(\Theta, \RD)$. Moreover, when $\phi$ is $\mathcal{C}^\infty$, $W_\phi$ is a {\em finite-dimensional} space of \emph{infinitely smooth} functions on $\Theta$, and this will be crucial in \Cref{2layers} to provide a tractable scheme (i.e. operating in finite dimension) to compute the \emph{maximum number of independent} conservation laws, using the Lie algebra computations that will be described in \Cref{sec:LieAlgebra}. 

\begin{example}\label{example:firsttriviallawbis}
    Revisiting~\Cref{example:firsttriviallaw}, 
    the function to minimize is factorized by the reparametrization 
    $\phi: (u \in \R, v\in \R) \mapsto u  v \in \mathbb{R}$ with $\x \coloneqq (u, v)$. 
  We saw that $h((u, v)) \coloneqq u^2-v^2$ is conserved: and indeed $\langle \nabla h(u, v), \nabla \phi(u, v) \rangle = 2u v - 2v  u  = 0$, $\forall (u, v)$.
    \end{example}
    In this simple example, the characterization of 
    \Cref{equiv} gives a \textit{constructive} way to find such a conserved function: we only need to find a function $h$ such that $\langle \nabla h(u, v), \nabla \phi(u, v) \rangle = \langle \nabla h(u, v), (v,u)^\top \rangle = 0$. The situation becomes more complex in higher dimensions, since one needs to understand the interplay between the different vector fields in $W_\phi$. 
\vspace{-0.1cm}
\subsection{Constructibility of some conservation laws} \label{section:constructibility} 
\vspace{-0.2cm}
Observe that in \Cref{example:firsttriviallawbis} 
both the reparametrization $\phi$ and the conservation law $h$ are polynomials, a property that surprisingly systematically holds in all examples of interest in the paper, making it possible to {\em algorithmically} construct 
some conservation laws as detailed now. 

By~\Cref{equiv}, a function $h$ is a conservation law if it is in the kernel of the linear operator 
$h \in \mathcal{C}^{1}(\ouv, \R) \mapsto \left(\x \in \ouv \mapsto (\langle \nabla h (\x), \nabla \phi_i (\x) \rangle)_{i =1, \cdots, d}\right)$. Thus, one could look for conservation laws in a prescribed finite-dimensional space by projecting these equations in a basis (as in finite-element methods for PDEs).
Choosing the finite-dimensional subspace could be generally tricky, but for the linear and ReLU cases all known conservation laws are actually polynomial ``balancedness-type conditions'' \cite{Arora18a,Arora18b,Du}, see \Cref{sectionlinearNN}. 
In these cases, the vector fields in $W_\phi$ are also polynomials (because $\phi$ is polynomial, see \Cref{thm:LinearNetsAllF} and \Cref{thm:2layerReLUappendix} in \Cref{appendix_loss}), hence $\x \mapsto \langle \nabla h (\x), \nabla \phi_i (\x) \rangle$ is a polynomial too. This allows us to compute a basis of independent polynomial conservation laws of a given degree (to be freely chosen) for these cases, by simply focusing on the corresponding subspace of polynomials.
We coded the resulting equations in SageMath, and we found back on selected examples (see \Cref{appendix_numeric}) all existing known conservation laws both for ReLU and linear networks. Open-source code is available at \cite{marcotte:hal-04261339v1}.
\vspace{-0.1cm}
\subsection{Independent conserved functions}
\vspace{-0.2cm}
Having an algorithm to build conservation laws is nice, yet 
how can we know if we have built ``all'' laws? 
This requires first defining a notion of a ``maximal'' set of functions, which would in some sense be independent. 
This does not correspond to linear independence of the functions themselves (for instance, if $h$ is a conservation law, then so is $h^k$ 
for each $k \in \mathbb{N}$ but this does not add any other constraint), but rather to pointwise linear independence of their gradients. 
This notion of independence is closely related to the notion of ``functional independence'' studied in \cite{functionaldependence, functionaldependence-gradient}. For instance, it is shown in \cite{functionaldependence-gradient} that smooth functionally dependent functions are characterized by having dependent gradients everywhere.
This motivates the following definition. 

\begin{definition}
    A family of $N$ functions $(h_1, \cdots, h_N)$ conserved through $W \subset \mathcal{C}^1(\ouv, \RD)$ is said to be \textit{independent} if the vectors $(\nabla h_1(\x), \cdots, \nabla  h_N(\x))$ are linearly independent for all $\x \in \ouv$. 
\end{definition}

An immediate upper bound holds on the largest possible number $N$ of functionally independent functions $h_1,\ldots,h_N$ conserved through $W$: for $\x \in \Omega \subseteq \R^D$, the space $\linspan\{\nabla h_1(\x),\ldots, \nabla h_N(\x)\} \subseteq \RR^D$ is of dimension $N$ (by independence) and 
(by \Cref{tracecharact})
orthogonal to $W(\x)$. Thus, it is necessary to have 
$N \leq D-\dim W(\x)$.
As we will now see, 
this bound can be tight {\em under additional assumptions on $W$ related to Lie brackets} (corresponding to the so-called Frobenius theorem). This will in turn lead to a characterization of the maximum possible $N$.
%
\section{Conservation Laws using Lie Algebra} 
\label{sec:LieAlgebra}
The study of hyper-surfaces trapping the solution of ODEs is a recurring theme in control theory, since the existence of such surfaces is the basic obstruction of controllability of such systems~\cite{Bonnard}.
The basic result to study these surfaces is the so-called Frobenius theorem from differential calculus (See Section 1.4 of~\cite{isidori} for a good reference for this theorem).
It relates the existence of such surfaces, and their dimensions, to some differential condition involving so-called ``Lie brackets'' $[u,v]$ between pairs of vector fields (see \Cref{liealgebrabackground} below for a more detailed exposition of this operation). However, in most cases of practical interest (such as for instance matrix factorization), the Frobenius theorem is not suitable for a direct application to the space $W_\phi$ because its Lie bracket condition is not satisfied. To identify the number of independent conservation laws, one needs to consider the algebraic closure of $W_\phi$ under Lie brackets. The fundamental object of interest is thus the Lie algebra generated by the Jacobian vector fields, that we recall next. While this is only defined for vector fields with stronger smoothness assumption, the only consequence is that $\phi$ is required to be infinitely smooth, unlike the loss $\ell(\cdot,y)$ and the model $g(\cdot,x)$ that can be less smooth. All concretes examples of $\phi$ in this paper are polynomial hence indeed infinitely smooth.
\paragraph{Notations} Given a vector subspace of infinitely smooth vector fields $W \subseteq \mathcal{X}(\Theta) \coloneqq \mathcal{C}^{\infty}(\Theta, \RD)$, where $\Theta$ is an open subset of $\RD$, we recall (cf \Cref{tracecharact}) that its trace at some $\x$ is the subspace 
\eq{ \label{trace}
W(\x) \coloneqq \linspan \{ \chi(\x) : \chi \in W \} \subseteq \RD.
}
For each open subset $\ouv \subseteq \Theta$,  we introduce the subspace of $\mathcal{X}(\ouv)$:
$
W_{|\ouv} \coloneqq \{ \chi_{|\ouv}: \chi \in W \}.
$
\vspace{-0.1cm}
\subsection{Background on Lie algebra} 
\vspace{-0.2cm}
\label{liealgebrabackground}
A 
Lie algebra $A$ is a 
vector space endowed with
a bilinear map  $[\cdot, \cdot]$, called a Lie bracket, that verifies for all $X, Y, Z \in A$:
$
[X, X]= 0$ and the Jacobi identity: $
[X, [Y, Z]] + [Y, [Z, X]] + [Z, [X, Y]] = 0.
$

For the purpose of this article, the Lie algebra of interest is the set of infinitely smooth vector fields $\mathcal{X}(\Theta)$, endowed with the Lie bracket $[\cdot, \cdot]$ defined by
\begin{equation}\label{eq:def-lie-brac}
[\chi_1,\chi_2]:\quad 
\x \in \Theta \mapsto [\chi_1, \chi_2](\x)\coloneqq \partial \chi_1(\x) \chi_2(\x) - \partial \chi_2(\x) \chi_1(\x),
\end{equation}
with $\partial \chi (\theta) \in \R^{D \times D}$ the jacobian of $\chi$ at $\theta$.
The space $\R^{n \times n}$ of matrices is also a Lie algebra endowed with the Lie bracket
$[A, B] \coloneqq AB-BA.$   
This can be seen as a special case of~\eqref{eq:def-lie-brac} in the case of \emph{linear} vector fields, i.e. $\chi(\x)=A\x$. 
\paragraph{Generated Lie algebra} Let $A$ be a Lie algebra and let $W \subset A$ be a vector subspace of $A$. There exists a smallest Lie algebra that contains $W$. It is denoted $\lie(W)$ and called the generated Lie algebra of~$W$. The following proposition \cite[Definition 20]{Bonnard} constructively characterizes $\lie(W)$, where for vector subspaces $[W,W'] \coloneqq \{[\chi_1,\chi_2]: \chi_1 \in W, \chi_2 \in W'\}$, and $W+W' = \{\chi_1+\chi_2: \chi_1 \in W, \chi_2 \in W'\}$.

\begin{proposition} \label{buildingliealgebra}
    Given any vector subspace $W \subseteq A$ we have $\lie(W) = \bigcup_k W_k$ where:
\vspace{-0.5em}
     \begin{equation*} 
\left\{
    \begin{array}{ll}
        W_0 &\coloneqq W\\
        W_k &\coloneqq  W_{k-1} + [W_0, W_{k-1}]\ \text{ for }\ k \geq 1.
    \end{array}
\right.
\end{equation*}
\end{proposition}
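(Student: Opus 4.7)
My plan is to prove the two inclusions $\bigcup_k W_k \subseteq \lie(W)$ and $\lie(W) \subseteq \bigcup_k W_k$ separately. The first inclusion is an immediate induction: $W_0 = W \subseteq \lie(W)$ by definition of $\lie(W)$, and if $W_{k-1} \subseteq \lie(W)$, then $[W_0, W_{k-1}] \subseteq \lie(W)$ because $\lie(W)$ is closed under Lie brackets, hence $W_k = W_{k-1} + [W_0, W_{k-1}] \subseteq \lie(W)$ by stability under sums. Since $\bigcup_k W_k$ is thus contained in $\lie(W)$, it remains to prove the reverse inclusion.

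For the converse, I will show that $V \coloneqq \bigcup_k W_k$ is itself a Lie algebra containing $W$, which will force $\lie(W) \subseteq V$ by minimality. That $V$ contains $W = W_0$ is clear. The recursion also gives $W_{k-1} \subseteq W_k$, so $V$ is a nested union of vector subspaces, hence a vector subspace. The core point is to show that $V$ is stable under the Lie bracket.

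The crux of the argument — and where I expect the only real difficulty — is to establish the following bracket-propagation estimate:
\begin{equation*}
[W_i, W_j] \subseteq W_{i+j+1} \qquad \text{for all } i, j \geq 0.
\end{equation*}
Once this holds, any $X \in W_i \subseteq V$ and $Y \in W_j \subseteq V$ satisfy $[X, Y] \in W_{i+j+1} \subseteq V$, so $V$ is a Lie algebra. I will prove the estimate by induction on $i$, the base case $i=0$ being exactly the definition $[W_0, W_j] \subseteq W_{j+1}$. For the induction step, I write a general element of $W_i$ as a sum of elements of $W_{i-1}$ and of brackets $[a, Z]$ with $a \in W_0$, $Z \in W_{i-1}$. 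For the $W_{i-1}$ part, the inductive hypothesis gives $[W_{i-1}, W_j] \subseteq W_{i+j} \subseteq W_{i+j+1}$. For the bracket part, I expand $[[a, Z], c]$ using the Jacobi identity as
\begin{equation*}
[[a, Z], c] = [a, [Z, c]] - [Z, [a, c]].
\end{equation*}
The first term lies in $[W_0, W_{i+j}] \subseteq W_{i+j+1}$ by the inductive hypothesis applied to $[Z, c] \in [W_{i-1}, W_j]$ and the definition of $W_{i+j+1}$. The second term needs $[a, c] \in [W_0, W_j] \subseteq W_{j+1}$ followed by the inductive hypothesis applied at index $i-1$ to conclude $[Z, [a, c]] \in [W_{i-1}, W_{j+1}] \subseteq W_{i+j+1}$. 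This closes the induction and completes the proof; the key takeaway is that although the recursion only explicitly brackets against $W_0$, Jacobi lets arbitrary iterated brackets be reached at the cost of a bounded increase in the index.
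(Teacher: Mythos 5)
Your proof is correct and complete. The paper does not prove this proposition---it cites it from [Definition 20] of Bonnard's control theory text---so there is no internal proof to compare against. What you give is the standard argument: the easy inclusion $\bigcup_k W_k \subseteq \lie(W)$ by induction, and the converse by showing $V \coloneqq \bigcup_k W_k$ is a Lie subalgebra containing $W$, which by minimality of $\lie(W)$ forces equality. The crux, your depth estimate $[W_i, W_j] \subseteq W_{i+j+1}$, is correctly set up as an induction on $i$ with $j$ universally quantified (you invoke the hypothesis at both $(i-1,j)$ and $(i-1,j+1)$, which that formulation licenses), and the Jacobi expansion $[[a,Z],c] = [a,[Z,c]] - [Z,[a,c]]$ is applied correctly.

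One notational caveat that your argument quietly and necessarily resolves: the paper defines $[W, W']$ as the raw set $\{[\chi_1, \chi_2]: \chi_1 \in W,\ \chi_2 \in W'\}$, which is not a vector subspace in general, so as literally written the $W_k$ need not be subspaces and $\bigcup_k W_k$ could not equal the Lie algebra $\lie(W)$. Your proof treats each $W_k$ as a subspace---the nested-union argument and the step where you add the two Jacobi terms inside $W_{i+j+1}$ both require it---which amounts to reading $[W_0, W_{k-1}]$ as the \emph{linear span} of the brackets, i.e.\ $W_k \coloneqq W_{k-1} + \linspan\{[u,v]: u \in W_0,\ v \in W_{k-1}\}$. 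This is the intended and standard convention, and with it your argument is airtight.
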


We will see in \Cref{section_dim_lie_algebra} that the number of conservation laws is characterized by the dimension of the trace $\lie(W_\phi)(\x)$ defined in \eqref{trace}. The following lemma (proved in \Cref{dimstagnates})
 gives a stopping criterion to algorithmically determine this dimension (see \Cref{subsection-method} for the algorithm). 
\begin{lemma} \label{lemma-stagnates}
Given $\x \in \Theta$, if for a given $i$, $\vdim W_{i+1}(\x') = \vdim W_i(\x)$ for every $\theta'$ in 
a neighborhood of $\x$, then there exists a neighborhood $\Omega$ of $\theta$ such that 
$W_k(\x')= W_i (\x')$ for all $\x' \in \Omega$ and $k\geq i$,
where the $V_i$ are defined by \Cref{buildingliealgebra}. Thus $\lie(W)(\x')=  W_i (\x')$ for all $\x' \in \Omega$. In particular, the dimension of the trace of $\lie(W)$ is locally constant and equal to the dimension of $W_i(\x)$.
\end{lemma}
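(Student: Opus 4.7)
The plan is to exploit two facts: (i) by construction in \Cref{buildingliealgebra} one has $W_i \subseteq W_{i+1}$, so the hypothesis (reading $\vdim W_{i+1}(\theta') = \vdim W_i(\theta')$ for $\theta'$ near $\theta$) upgrades to the equality of traces $W_{i+1}(\theta') = W_i(\theta')$ on a neighborhood $\Omega$ of $\theta$; and (ii) the Leibniz-type rule for Lie brackets with function coefficients: $[\chi_1, \alpha \eta] = \alpha [\chi_1, \eta] + (\chi_1 \cdot \alpha)\, \eta$ for any $\chi_1, \eta \in \mathcal{X}(\Omega)$ and smooth $\alpha : \Omega \to \R$. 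The goal is then an induction on $k \geq i$ showing $W_k(\theta') = W_i(\theta')$ for all $\theta' \in \Omega$.

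The key technical step is the following ``local smooth frame'' lemma: since $\dim W_i(\theta')$ is constant on $\Omega$, we can pick vector fields $\eta_1,\ldots,\eta_m \in W_i$ whose values at $\theta$ form a basis of $W_i(\theta)$; by continuity, shrinking $\Omega$, they form a basis of $W_i(\theta')$ for every $\theta' \in \Omega$. Completing with constant vector fields $e_{m+1}, \ldots, e_D$ into a basis of $\R^D$ at $\theta$, a continuity argument gives a smooth frame of $\R^D$ on $\Omega$. Any smooth vector field $\chi$ whose values lie in $W_i(\theta')$ for every $\theta' \in \Omega$ can thus be expanded as $\chi = \sum_{j=1}^m \alpha_j \eta_j$ with $\alpha_j \in \mathcal{C}^\infty(\Omega,\R)$ (the coefficients on the $e_k$ must vanish because $\chi(\theta') \in W_i(\theta')$). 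This is the main obstacle, as it is what licenses ``reducing any element of $W_{k}$ to a smooth combination of a fixed finite family in $W_i$''.

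Now I carry out the induction. Assume $W_k(\theta') = W_i(\theta')$ on $\Omega$. Since $W_{k+1} = W_k + [W_0, W_k]$, it suffices to show $[\chi_1, \chi_2](\theta') \in W_i(\theta')$ for any $\chi_1 \in W_0$ and $\chi_2 \in W_k$. By hypothesis $\chi_2(\theta') \in W_k(\theta') = W_i(\theta')$ on $\Omega$, so by the frame lemma $\chi_2 = \sum_j \alpha_j \eta_j$ on $\Omega$ with smooth $\alpha_j$. The Leibniz rule yields
\[
[\chi_1,\chi_2](\theta') = \sum_{j=1}^m \alpha_j(\theta')\,[\chi_1,\eta_j](\theta') + \sum_{j=1}^m (\chi_1 \cdot \alpha_j)(\theta')\,\eta_j(\theta').
\]
Each $[\chi_1,\eta_j] \in [W_0, W_i] \subseteq W_{i+1}$, so $[\chi_1,\eta_j](\theta') \in W_{i+1}(\theta') = W_i(\theta')$; and $\eta_j(\theta') \in W_i(\theta')$. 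Hence $[\chi_1,\chi_2](\theta') \in W_i(\theta')$, which gives $W_{k+1}(\theta') \subseteq W_i(\theta') \subseteq W_{k+1}(\theta')$, closing the induction.

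Finally, since $\lie(W) = \bigcup_k W_k$ by \Cref{buildingliealgebra} and $W_k(\theta') = W_i(\theta')$ on $\Omega$ for every $k \geq i$, we conclude $\lie(W)(\theta') = W_i(\theta')$ on $\Omega$, and in particular $\dim \lie(W)(\theta')$ is locally constant equal to $\dim W_i(\theta)$, as claimed.
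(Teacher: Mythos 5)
Your proof is correct and takes essentially the same approach as the paper's: pick a finite family $\eta_1,\ldots,\eta_m\in W_i$ whose traces form a local basis, shrink $\Omega$ so they remain a basis of $W_{i+1}(\theta')$, then induct on $k$ using the Leibniz rule for $[\chi_1,\alpha\eta]$. The only small caveat is a sign: with the paper's convention $[\chi_1,\chi_2]=\partial\chi_1\,\chi_2-\partial\chi_2\,\chi_1$, the Leibniz identity reads $[\chi_1,\alpha\eta]=\alpha[\chi_1,\eta]-(\chi_1\cdot\alpha)\eta$ rather than with a $+$, but since you only invoke it to argue membership in a linear subspace the sign is immaterial.
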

\vspace{-0.1cm}
\subsection{Number of conservation laws} 
\vspace{-0.2cm}
\label{section_dim_lie_algebra}
The following theorem uses the Lie algebra generated by $W_\phi$ to characterize the number of conservation laws. The proof of this result 
is based on two successive uses of the Frobenius theorem and can be found in \Cref{appendix_main_theorem} (where we also recall Frobenius theorem for the sake of completeness).

\begin{theorem} \label{mainthm}
If $\vdim (\lie(W_\phi)(\x))$ is locally constant then each $\x \in \ouv \subseteq \RD$ admits a neighborhood $\Omega'$ such that there are $D-
\vdim (\lie(W_\phi)(\x))$ (and no more) independent conserved functions through ${W_\phi}_{\mid \Omega'}$, i.e., there are $D-\vdim (\lie(W_\phi)(\x))$ independent conservation laws of $\phi$ on $\ouv'$. 
\end{theorem}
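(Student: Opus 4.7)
The plan is to identify conservation laws of $\phi$ with first integrals of the involutive distribution $\lie(W_\phi)$ and to count these via the Frobenius theorem. The starting point is \Cref{equiv}: a function $h \in \mathcal{C}^1(\ouv,\R)$ is a conservation law of $\phi$ if and only if $\nabla h(\x) \perp W_\phi(\x)$ at every $\x$, which by \Cref{tracecharact} amounts to $h$ being preserved along every flow generated by an element of $W_\phi$.

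The first key step is to show that conservation through $W_\phi$ is equivalent to conservation through $\lie(W_\phi)$. One inclusion is trivial since $W_\phi \subseteq \lie(W_\phi)$. For the other, I would use the Lie derivative identity $\mathcal{L}_{[\chi_1,\chi_2]} h = \mathcal{L}_{\chi_1}\mathcal{L}_{\chi_2} h - \mathcal{L}_{\chi_2}\mathcal{L}_{\chi_1} h$ together with the inductive construction in \Cref{buildingliealgebra}: if $\mathcal{L}_\chi h = 0$ for every $\chi \in W_{k-1}$ and every $\chi \in W_0$, then $\mathcal{L}_{[\chi_1,\chi_2]} h = 0$ whenever $\chi_1 \in W_0$ and $\chi_2 \in W_{k-1}$, so $\mathcal{L}_\chi h = 0$ for every $\chi \in W_k$. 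Iterating yields the pointwise orthogonality $\nabla h(\x) \perp \lie(W_\phi)(\x)$ for every $\x \in \ouv$.

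The second step applies Frobenius. Since $\lie(W_\phi)$ is by construction closed under Lie brackets and, by hypothesis, has locally constant rank $k := \vdim \lie(W_\phi)(\x)$, Frobenius produces a neighborhood $\ouv'$ of $\x$ and local coordinates $(y_1,\ldots,y_D)$ in which $\lie(W_\phi)(\x')$ is spanned by $\partial_{y_1},\ldots,\partial_{y_k}$ at every $\x' \in \ouv'$. The coordinate functions $y_{k+1},\ldots,y_D$ are then $D-k$ first integrals of $\lie(W_\phi)$ with pointwise linearly independent gradients, hence, by the first step, $D-k$ independent conservation laws of $\phi$ on $\ouv'$. Conversely, any family of $N$ independent conservation laws has gradients lying pointwise in the $(D-k)$-dimensional orthogonal of $\lie(W_\phi)(\x)$, which forces $N \leq D-k$.

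The main obstacle I anticipate is the dynamical half of the first step: converting the pointwise condition $\nabla h(\x) \perp \lie(W_\phi)(\x)$ into genuine conservation along finite-time integral curves of each $\chi \in W_\phi$. The natural route is a second application of Frobenius, which exhibits the local foliation by integral leaves of $\lie(W_\phi)$; since every vector field in $W_\phi$ is tangent to these leaves, its flow stays inside a single leaf, and any function whose gradient is orthogonal to the leaf tangent space is constant along the flow. One must also carefully track the \emph{local} character of the statement — the neighborhood $\ouv'$ may shrink at each step — which is consistent with the local-constancy hypothesis and with \Cref{lemma-stagnates}.
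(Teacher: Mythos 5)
Your plan — apply Frobenius to $\lie(W_\phi)$ for existence, then count to show maximality — is the same high-level route as the paper, and your existence half is essentially the paper's first step. The genuine gap is in your ``first key step.'' You want to show that any $\mathcal{C}^1$ conservation law $h$ satisfies $\nabla h(\x) \perp \lie(W_\phi)(\x)$ by iterating $\mathcal{L}_{[\chi_1,\chi_2]}h = \mathcal{L}_{\chi_1}\mathcal{L}_{\chi_2}h - \mathcal{L}_{\chi_2}\mathcal{L}_{\chi_1}h$. But that identity requires $h$ to be at least $\mathcal{C}^2$ — the right-hand side differentiates $h$ twice — while \Cref{defconservationGF} and the theorem only posit $h \in \mathcal{C}^1$. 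You cannot argue that both right-hand terms vanish: they are not even defined when $h$ is merely $\mathcal{C}^1$. Moreover, the step you flag at the end as the ``main obstacle'' (from $\nabla h(\x)\perp\lie(W_\phi)(\x)$ back to conservation through $W_\phi$) is actually trivial via \Cref{tracecharact} since $W_\phi(\x)\subseteq\lie(W_\phi)(\x)$; the real difficulty is the regularity in the direction you treat as easy.

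The paper reaches the maximality bound without ever differentiating $h$ twice. It argues by contradiction: if there were $N = D - k'$ independent laws with $k' < \vdim\lie(W_\phi)(\x_0)$, collect them into $h \in \mathcal{C}^1(\Omega', \R^{D-k'})$ and form $W := \{ X \in \mathcal{X}(\Omega') : X(\x) \in \ker\partial h(\x)\ \forall \x \in \Omega' \}$, a constant-rank-$k'$ distribution of smooth vector fields. The $D-k'$ coordinate functions of $h$ are independent first integrals of $W$, so by the ``integrable $\Rightarrow$ involutive'' direction of Frobenius combined with \Cref{remarkFro}, $W$ is a Lie algebra. Since each coordinate of $h$ is conserved through $W_\phi$, \Cref{tracecharact} gives $W_\phi(\x)\subset\ker\partial h(\x)$, hence $W_\phi|_{\Omega'}\subset W$ and therefore $\lie(W_\phi)|_{\Omega'}\subset W$; evaluating traces at $\x_0$ yields $\vdim\lie(W_\phi)(\x_0)\leq k'$, a contradiction. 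This route obtains your desired pointwise orthogonality implicitly, through a Lie-algebra property of the distribution $\ker\partial h$ rather than through second derivatives of $h$. If you want to salvage your own route instead, the $\mathcal{C}^1$-compatible way to get $\nabla h \perp [\chi_1,\chi_2]$ from $\nabla h \perp \chi_1, \chi_2$ is the flow-commutator argument (the $[\chi_1,\chi_2]$-flow is the second-order limit in $t$ of the commutator of the $\chi_1$- and $\chi_2$-flows, each of which preserves $h$), not the pointwise Lie-derivative identity.
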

\begin{remark}
The proof of the Frobenius theorem (and therefore of our generalization \Cref{mainthm}) is actually constructive. From a given $\phi$, conservation laws are obtained in the proof by integrating in time (\textit{i.e.} solving an advection equation) the vector fields belonging to $W_\phi$. 
Unfortunately, this cannot be achieved in \textit{closed form} in general, but in small dimensions, this could be carried out numerically (to compute approximate discretized laws on a grid or approximate them using parametric functions such as Fourier expansions or neural networks). 
\end{remark}

\vspace{-0.2cm}
A fundamental aspect of \Cref{mainthm} 
is to rely only on the \emph{dimension of the trace} of the Lie algebra associated with the finite-dimensional vector space $W_\phi$. 
Yet, even if $W_\phi$ is finite-dimensional, it might be the case that $\lie(W_\phi)$ itself remains infinite-dimensional. 
Nevertheless, what matters is not the dimension of $\lie(W_\phi)$, but that of {\em its trace} $\lie(W_\phi)(\x)$, which is \emph{always} finite (and potentially much smaller that $\dim \lie(W_\phi)$ even when the latter is finite) and computationally tractable thanks to \Cref{lemma-stagnates} as detailed in \Cref{subsection-method}.
In section~\ref{2layers} we work out the example of matrix factorization, a non-trivial case where the full Lie algebra $\lie(W_\phi)$ itself remains finite-dimensional. 

\Cref{mainthm} 
requires that the dimension 
of the trace at $\x$ of the Lie algebra is locally constant. 
This is a technical assumption, which typically holds outside a set of pathological points. A good example is once again matrix factorization, where we show in Section~\ref{2layers} that this condition holds generically.  
\vspace{-0.2cm}
\subsection{Method and algorithm, with examples} 
\vspace{-0.2cm}
\label{subsection-method}
Given a reparametrization $\phi$ for the architectures to train, to determine the number of independent conservation laws of $\phi$, we leverage the characterization~\ref{buildingliealgebra} to algorithmically compute $\vdim (\lie(W_\phi)(\x))$ using an iterative construction of bases for the subspaces $W_k$ starting from $W_0 \coloneqq W_\phi$, and stopping as soon as the dimension stagnates thanks to Lemma \ref{lemma-stagnates}.
Our open-sourced code is available at \cite{marcotte:hal-04261339v1} and uses SageMath. As we now show, this algorithmic principle allows to fully work out certain settings where the stopping criterion of \Cref{lemma-stagnates} is reached at the first step ($i=0$) or the second one ($i=1$).
\Cref{section:numerics} also discusses its numerical use for an empirical investigation of broader settings.

\vspace{-0.2em}
\paragraph{Example where the iterations of \Cref{lemma-stagnates} stop at the first step.} This corresponds to the case where $\lie W_\phi (\x) = W_1(\x) = W_0(\x) \coloneqq W_\phi(\x)$ on $\ouv$. This is the case if and only if $W_\phi$ satisfies that
\begin{equation} \label{eq:frob-crochets}
   [\chi_1, \chi_2](\x) \coloneqq \partial \chi_1 (\x) \chi_2 (\x) - \partial \chi_2(\x) \chi_1(\x)  \in W_\phi(\x),
   \quad \text{for all}\ \chi_1, \chi_2 \in W_\phi\ \text{and all}\ \x \in \ouv.
\end{equation}
i.e., when Frobenius Theorem (see \Cref{frobenius} in \Cref{appendix_main_theorem}) applies directly.
The first example is a follow-up to \Cref{ex:param-ReLU}.
\begin{example}[two-layer ReLU networks without bias] 
\label{shallowNN}
Consider $\x = (U,V)$ with $U \in \R^{n\times r}, V \in \R^{m \times r}$, $n, m,r \geq 1$ (so that $D=(n+m)r$), and the reparametrization
$\phi(\x) \coloneqq (u_i v_i^\top)_{i=1, \cdots, r} \in \R^{n \times m \times r},
$ where $U=(u_1; \cdots; u_r)$ and $ V=(v_1; \cdots; v_r)$.
As detailed in \Cref{appA1}, 
since $\phi(\x)$ is a collection of $r$ rank-one $n\times m$ matrices, $\vdim (W_\phi(\x)) = \mathtt{rank} \partial \phi(\x)=(n+m-1)r$
is constant 
on the domain $\Omega$ such that $u_i, v_j \neq 0$, and  $W_\phi$ satisfies~\eqref{eq:frob-crochets}, hence by~\Cref{mainthm} each $\x$ has a neighborhood $\ouv'$ such that there exists $r$ (and no more) independent conserved function through ${W_\phi}_{|\ouv'}$. The $r$ known conserved functions  \cite{Du} given by $h_i: (U, V) \mapsto \|u_i\|^2 - \|v_i\|^2$, $i=1, \cdots, r$, are independent, hence they are complete.
\end{example}
\vspace{-0.2em}
\paragraph{Example where the iterations of \Cref{lemma-stagnates} stop at the second step (but not the first one).} Our primary example is matrix factorization, as a follow-up to \Cref{ex:param-linear}.
\begin{example}[two-layer {\em linear} neural networks] \label{ex-contre-ex-frobenius}
With $\x = (U,V)$, where $(U \in \R^{n\times r}, V \in \R^{m \times r})$ the reparameterization 
$
\phi(\x) \coloneqq U V^\top \in \R^{n \times m}
$ (here $d=nm$)
factorizes the functions minimized during the training of linear two-layer neural networks (see \Cref{ex:param-linear}). 
As shown in \Cref{contre-ex-frobenius}, condition \eqref{eq:frob-crochets} is not satisfied
when $r > 1$ and $\text{max}(n, m) > 1$.  Thus, the stopping criterion of \Cref{lemma-stagnates} is not satisfied at the first step. However, as detailed in \Cref{lieW} in \Cref{appendix:2layerLNN}, $(W_\phi)_1 = (W_\phi)_2 = \lie (W_\phi)$, hence the iterations of \Cref{lemma-stagnates} stop at the second step.
\end{example}
We complete this example in the next section by showing (\Cref{nbatteint}) that known conservation laws are indeed complete. Whether known conservation laws remain valid and/or \emph{complete} in this settings and extended ones is further studied in \Cref{sectionlinearNN,annex:ExamplesSection3}. 
\vspace{-0.1em}
\subsection{Application: recasting over-parameterized flows as low-dimensional Riemannian flows}
\label{sec:riemanian-flow}
\vspace{-0.2em}
As we now show,
one striking application of \Cref{mainthm} 
(in simple cases where $\vdim (W_\phi( \x)) = \vdim (\lie W_\phi( \x))$ is constant on $\Omega$, i.e., $\mathtt{rank}(\partial \phi(\x))$ is constant on $\Omega$ and $W_\phi$ satisfies~\eqref{eq:frob-crochets})
is to fully rewrite the high-dimensional flow $\x(t) \in \RD$ as a low-dimensional flow on $z(t) \coloneqq \phi(\x(t)) \in \Rd$, where this flow is associated with a Riemannian metric tensor $M$ that is induced by $\phi$ and depends on the initialization $\x_{\text{init}}$.
We insist on the fact that this is only possible in very specific cases, but this phenomenon is underlying many existing works that aim at writing in closed form the implicit bias associated with some training dynamics (see \Cref{Intro} for some relevant literature. Our analysis sheds some light on cases where this is possible, as shown in the next proposition.

\begin{proposition} \label{lowerdimflow}
Assume that 
$\mathtt{rank}(\partial\phi(\x))$ 
is constant on $\Omega$ 
 and that $W_\phi$ satisfies \eqref{eq:frob-crochets}.
If $\x(t) \in \RD$ satisfies the ODE~\eqref{gradientflow} where $\x_{\text{init}} \in \Omega$, then there is $0<T^{\star}_{\theta_\mathtt{init}}\leq T_{\theta_\mathtt{init}}$ such that $z(t) \coloneqq \phi(\x(t)) \in 
\Rd$ satisfies the ODE 
 \begin{equation} \label{transfertGF}
        \overset{.}{z}(t) =  -M(z(t), \x_{\text{init}}) \nabla f(z(t)) \quad \mbox{for all } t \in [0,
        T^{\star}_{\x_{\mathtt{init}}}
        ), 
        \text{ with } z(0) = \phi(\x_{\text{init}}),
\end{equation}
where $M(z(t),\x_{\text{init}}) \in \R^{d \times d} $ is a symmetric positive semi-definite matrix. 
\end{proposition}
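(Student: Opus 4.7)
The plan is to first compute the reparametrized flow via the chain rule, then invoke the Frobenius theorem to show the trajectory stays on an integral manifold on which $\phi$ can be locally inverted, and finally assemble these to write the velocity of $z(t)$ as a function of $z(t)$ and $\x_{\text{init}}$ alone.

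\textbf{Step 1: chain rule and candidate metric.} Writing the cost in reparametrized form as $f(z) := \sum_i \ell(f(z,x_i),y_i)$ so that $\mathcal{E}_{X,Y}(\x) = f(\phi(\x))$, the chain rule gives $\nabla \mathcal{E}_{X,Y}(\x) = \partial\phi(\x)^\top \nabla f(\phi(\x))$. Differentiating $z(t) = \phi(\x(t))$ yields
\[
\dot z(t) = \partial\phi(\x(t))\, \dot\x(t) = -\partial\phi(\x(t))\,\partial\phi(\x(t))^\top \nabla f(z(t)).
\]
The natural candidate is therefore $\widetilde M(\x) := \partial\phi(\x)\,\partial\phi(\x)^\top \in \R^{d\times d}$, which is manifestly symmetric and positive semi-definite (as $AA^\top$). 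The entire difficulty is to check that $\widetilde M(\x(t))$ only depends on $(z(t), \x_{\text{init}})$, not on the full history of $\x$.

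\textbf{Step 2: Frobenius and confinement to a leaf.} Since $\mathtt{rank}(\partial\phi(\x))$ is constant on $\Omega$ and $W_\phi$ satisfies~\eqref{eq:frob-crochets}, the family $W_\phi$ is an involutive distribution of constant rank $r := \mathtt{rank}(\partial\phi(\x))$ on $\Omega$. The Frobenius theorem (as used in the proof of \Cref{mainthm}) then produces, for every $\x_{\text{init}} \in \Omega$, a local integral manifold $\mathcal{L}_{\x_{\text{init}}} \subseteq \Omega$ of dimension $r$ passing through $\x_{\text{init}}$ and tangent to $W_\phi$. The gradient flow velocity satisfies $\dot\x(t) = -\partial\phi(\x(t))^\top \nabla f(z(t)) \in \range(\partial\phi(\x(t))^\top) = W_\phi(\x(t))$, i.e.\ it is tangent to the distribution. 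By uniqueness of solutions to ODEs constrained to an integral manifold, $\x(t) \in \mathcal{L}_{\x_{\text{init}}}$ for all $t$ in some maximal interval $[0, T^\star_{\x_{\text{init}}})$ with $T^\star_{\x_{\text{init}}} > 0$.

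\textbf{Step 3: local inversion of $\phi$ on the leaf and conclusion.} On $\mathcal{L}_{\x_{\text{init}}}$, which has dimension $r$, the differential $\partial\phi$ has rank $r$, so $\phi|_{\mathcal{L}_{\x_{\text{init}}}}$ is an immersion and, by the constant-rank theorem (or directly the inverse function theorem applied on the $r$-dimensional manifold), a local diffeomorphism onto its image near $\x_{\text{init}}$. Denoting $\psi_{\x_{\text{init}}}$ its local inverse and shrinking $T^\star_{\x_{\text{init}}} > 0$ if necessary so that $\x(t)$ remains in the corresponding neighborhood, set
\[
M(z, \x_{\text{init}}) := \partial\phi\bigl(\psi_{\x_{\text{init}}}(z)\bigr)\,\partial\phi\bigl(\psi_{\x_{\text{init}}}(z)\bigr)^\top.
\]
Then $M(z, \x_{\text{init}})$ is symmetric PSD by construction, and substituting $\x(t) = \psi_{\x_{\text{init}}}(z(t))$ into Step 1 gives exactly~\eqref{transfertGF}.

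The main obstacle is Step 2: rigorously justifying that the gradient flow trajectory stays on the leaf of the Frobenius foliation. Concretely, this requires combining (i) the fact that $\dot\x$ lies in the involutive distribution $W_\phi$, and (ii) the local coordinate straightening provided by Frobenius, so that the ODE in straightened coordinates splits into tangential and (trivial, zero) transverse components. Once the confinement to $\mathcal{L}_{\x_{\text{init}}}$ is secured, the local inversion of $\phi$ on the $r$-dimensional leaf is essentially automatic from the constant-rank hypothesis.
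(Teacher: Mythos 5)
Your proof is correct and takes essentially the same route as the paper's: compute $\dot z$ by the chain rule to get the candidate metric $\partial\phi\,\partial\phi^\top$, then use Frobenius to show that $\x(t)$ is constrained to a $d'$-dimensional set determined by $\x_{\text{init}}$ on which $\phi$ can be locally inverted, so that $\x(t)$ (and hence $M$) is a function of $z(t)$ and $\x_{\text{init}}$ alone. The one substantive difference is a dual formulation of the same fact: the paper works with the conserved-function output of its Frobenius statement, builds the explicit chart $\Phi_I(\x') = (P_I\phi(\x'), h_{d'+1}(\x'),\ldots,h_D(\x'))$ with a subset $I$ of $d'$ coordinates of $\phi$, and inverts $\Phi_I$ on $\RD$ via the inverse function theorem; you work with the integral manifold (leaf) picture and invert $\phi$ restricted to the $d'$-dimensional leaf. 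These are equivalent---the leaf is precisely the joint level set of the $h_i$---but the paper's version avoids any appeal to submanifold/constant-rank machinery and never needs to discuss the image of $\phi$ in $\Rd$, at the cost of needing to select a full-rank subset $I$ of the $\phi$-coordinates. Your ``Step 2 obstacle'' (showing the trajectory stays on the leaf) is in fact immediate once one recalls that $\dot\x(t)=-\partial\phi(\x(t))^\top\nabla f(z(t))\in W_\phi(\x(t))$ and $\nabla h_i\perp W_\phi$ by Proposition~\ref{tracecharact}, so $\tfrac{d}{dt}h_i(\x(t))=0$; the paper invokes this implicitly while you correctly note the straightened-coordinate argument in your last paragraph---either closes the gap, so the proof stands.
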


See \Cref{appendix-lowerdim} for a proof.
Revisiting \Cref{shallowNN} leads to the following analytic example.
\begin{example}\label{ex:riemannian1} Given the reparametrization 
$\phi: (u \in \R^*, v \in \Rd) \mapsto u  v \in \Rd$, the variable $z \coloneqq uv$ satisfies  \eqref{transfertGF} with 
$
    M(z, \x_{\text{init}}) =  \|z\|_\delta  \mathrm{I}_{d} 
    +  
     \|z\|_\delta ^{-1} z z^\top,
$
with $\|z\|_\delta \coloneqq \delta + \sqrt{\delta ^{2} +  \| z \| ^2}$, $\delta \coloneqq 1/2(u_{\text{init}}^2-\| v _{\text{init}} \|^2)$.
\end{example}
Another analytic example is discussed in \Cref{appendix-lowerdim}.
In light of these results, an interesting perspective is to better understand the dependance of the Riemannian metric with respect to initialization, to possibly guide the choice of initialization for better convergence dynamics. 

Note that the metric $M(z,\x_{\text{init}})$ can have a kernel. 
Indeed, in practice, while $\phi$ is a function from $\RD$ to $\Rd$, the dimensions often satisfy 
$\mathtt{rank} \partial \phi(\x) < \min(d,D)$, i.e., $\phi(\x)$ lives in a manifold of lower dimension. 
The evolution~\eqref{transfertGF} should then be understood as a flow on this manifold. The kernel of $M(z,\x_{\text{init}})$ is orthogonal to the tangent space at $z$ of this manifold. 
\section{Conservation Laws for Linear and ReLU Neural Networks} \label{sectionlinearNN}
\vspace{-0.1cm}
To showcase the impact of our results, we show how they can be used to determine whether known conservation laws for linear (resp. ReLU) neural networks are complete, and to recover these laws \emph{algorithmically} using reparametrizations $\phi$ adapted to these two settings. Concretely, we study the conservation laws for neural networks with $q$ layers, and either a linear or ReLU activation, with an emphasis
on $q=2$. We write $\x = (U_1, \cdots, U_q)$ with $U_i \in \R^{n_{i-1} \times n_{i}}$ the weight matrices 
and we assume that $\x $ satisfies the gradient flow \eqref{gradientflow}.
In the linear case the reparametrization is $\phi_{\mathtt{Lin}}(\x) 
\coloneqq  U_1  \cdots  U_q$. For ReLU networks, we use the (polynomial) reparametrization $\phi_{\mathtt{ReLu}}$ of \cite[Definition 6]{stock_embedding_2022}, which is defined for any (deep) feedforward ReLU network, with or without bias. In the simplified setting of networks without biases it reads explicitly as:
\begin{equation}
\phi_{\mathtt{ReLu}}(U_1, \cdots, U_q) \coloneqq
\Big(U_1[:, j_1]  U_2[j_1, j_2] \cdots U_{q-1}[j_{q-2}, j_{q-1}] U_{q}[j_{q-1},:]\Big)_{j_1, \cdots, j_{q-1}}
\end{equation}
with $U[i, j]$ the $(i, j)$-th entry of $U$. This covers $\phi(\x) \coloneqq (u_j v_j^\top)_{j=1}^r \in \R^{n \times m \times r}$ from \Cref{ex:param-ReLU}.

Some conservation laws are known for the linear case $\phi_{\mathtt{Lin}}$ 
 \cite{Arora18a, Arora18b}
and for the ReLu case $\phi_{\mathtt{ReLu}}$ \cite{Du}.

\begin{proposition}[{ \cite{Arora18a, Arora18b,Du}
}] \label{conservation}
If $\x \coloneqq(U_1, \cdots, U_q)$ satisfies the gradient flow \eqref{gradientflow}, then for each $i = 1, \cdots, q-1$ the function  
$\x \mapsto  U_{i}^\top U_{i} - U_{i+1} U_{i+1}^\top$ (resp. the function $\x \mapsto  \text{diag} \left(U_{i}^\top U_{i} - U_{i+1} U_{i+1}^\top  \right)$)
defines $n_i \times (n_i +1)/2$ conservation laws for $\phi_{\mathtt{Lin}}$ (resp. $n_i$ conservation laws for $\phi_{\mathtt{ReLu}}$).
\end{proposition}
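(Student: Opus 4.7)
By \Cref{defconservationGF} and \Cref{equiv}, a smooth $h$ is a conservation law for $\phi$ iff $\partial \phi(\x) \nabla h(\x) = 0$ on $\Theta$; equivalently, the directional derivative of $h$ along every $\chi \in W_\phi$ vanishes pointwise. The strategy is to verify this entrywise for the matrix-valued expression $H(\x) \coloneqq U_i^\top U_i - U_{i+1} U_{i+1}^\top$, treating $\phi_{\mathtt{Lin}}$ and $\phi_{\mathtt{ReLu}}$ in turn.

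\textbf{Linear case.} Set $A_k \coloneqq U_1 \cdots U_{k-1}$ and $B_k \coloneqq U_{k+1} \cdots U_q$ (with $A_1 = B_q = I$). A spanning family of $W_{\phi_{\mathtt{Lin}}}$ is given by the gradients of $\x \mapsto \langle V, U_1\cdots U_q\rangle$ for $V \in \R^{n_0 \times n_q}$: by the chain rule this gradient has $k$-th block $A_k^\top V B_k^\top$. Using the elementary identities $A_{i+1} = A_i U_i$ and $B_i = U_{i+1} B_{i+1}$, the matrix-valued directional derivatives of $U_i^\top U_i$ and of $U_{i+1} U_{i+1}^\top$ along this direction read
\begin{align*}
(A_i^\top V B_i^\top)^\top U_i + U_i^\top(A_i^\top V B_i^\top) &= B_i V^\top A_{i+1} + A_{i+1}^\top V B_i^\top,\\
(A_{i+1}^\top V B_{i+1}^\top)U_{i+1}^\top + U_{i+1}(A_{i+1}^\top V B_{i+1}^\top)^\top &= A_{i+1}^\top V B_i^\top + B_i V^\top A_{i+1},
\end{align*}
which coincide. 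Hence the directional derivative of $H$ along every such vector field vanishes, so every entry of $H$ is a conservation law of $\phi_{\mathtt{Lin}}$; by the symmetry $H = H^\top$, the independent scalar ones are the $n_i(n_i+1)/2$ upper-triangular entries.

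\textbf{ReLU case and main obstacle.} For $\phi_{\mathtt{ReLu}}$ each coordinate is indexed by a path $(j_1, \ldots, j_{q-1})$ and factors as an outer product of $U_1[:, j_1]$ and $U_q[j_{q-1}, :]$ times a scalar product of middle entries. I would parametrize $W_{\phi_{\mathtt{ReLu}}}$ by dual path-tensors $V$, compute block-by-block the induced directions, and, for each diagonal entry $H_{ss}$, sum over paths whose $i$-th index equals $s$ to obtain a telescoping cancellation analogous to the linear case. The main difficulty is bookkeeping: the multi-index product structure of $\phi_{\mathtt{ReLu}}$ requires careful tracking, and it is precisely this structure that causes off-diagonal entries $H_{st}$ with $s \neq t$ to fail to cancel -- no single path contributes simultaneously to both column $s$ and column $t$ at layer $i$ -- so that only $n_i$ diagonal scalars survive. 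A conceptually cleaner alternative is to observe that the diagonal balancedness corresponds to the $1$-parameter rescaling symmetry $(U_i[:, s], U_{i+1}[s, :]) \mapsto (\lambda U_i[:, s], \lambda^{-1} U_{i+1}[s, :])$ of ReLU networks and to invoke the Noether-type correspondence mentioned in \Cref{Intro}, which yields exactly the $n_i$ claimed conserved quantities.
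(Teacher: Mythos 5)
Your linear-case verification is correct and complete: the fields $\theta \mapsto \nabla_\theta \langle V, U_1 \cdots U_q\rangle$, $V \in \R^{n_0 \times n_q}$, do exhaust $W_{\phi_{\mathtt{Lin}}}$ as defined in \eqref{eq:v-phi}, the block formula $A_k^\top V B_k^\top$ is right, and the two displayed derivatives coincide, so every entry of $U_i^\top U_i - U_{i+1}U_{i+1}^\top$ is conserved through $W_{\phi_{\mathtt{Lin}}}$ in the sense of \Cref{defconservationGF} and \Cref{equiv}. (For reference, the paper gives no internal proof of \Cref{conservation}: it is quoted from \cite{Arora18a,Arora18b,Du}, so there is nothing to compare your route against.) One wording caveat: the $n_i(n_i+1)/2$ upper-triangular entries are \emph{distinct} conservation laws, but calling them ``independent'' clashes with the paper's technical notion of independence -- the text right after the proposition stresses they are in general \emph{not} independent, and \Cref{classicinv} counts how many are.

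The ReLU half, however, is a genuine gap: you describe a plan (``I would parametrize \dots the main difficulty is bookkeeping'') without carrying it out, and the fallback of ``invoking the Noether-type correspondence mentioned in the Introduction'' is not a result the paper proves and can be cited; that remark concerns symmetries of the \emph{loss}, whereas what must be shown here is conservation through $W_{\phi_{\mathtt{ReLu}}}$, i.e.\ the orthogonality criterion of \Cref{equiv}. The missing step is short -- shorter than your linear computation -- and should be written out. Direct route: each scalar coordinate of $\phi_{\mathtt{ReLu}}$ is a path monomial $p(\theta) = U_1[k_0,j_1]U_2[j_1,j_2]\cdots U_q[j_{q-1},k_q]$, which involves exactly one entry of $U_i$ (in column $j_i$) and one entry of $U_{i+1}$ (in row $j_i$); hence, as a function of the column $U_i[:,s]$ and of the row $U_{i+1}[s,:]$, it is homogeneous of the \emph{same} degree ($1$ if $j_i = s$, $0$ otherwise), and Euler's identity gives, for $h_s(\theta) \coloneqq \|U_i[:,s]\|^2 - \|U_{i+1}[s,:]\|^2$, that $\langle \nabla h_s(\theta), \nabla p(\theta)\rangle = 2p - 2p = 0$ or $0 - 0 = 0$, which is exactly \Cref{equiv}. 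Alternatively, to make your Noether idea rigorous within the paper's framework: check that $\phi_{\mathtt{ReLu}}$ is invariant under $(U_i[:,s], U_{i+1}[s,:]) \mapsto (e^{\lambda}U_i[:,s], e^{-\lambda}U_{i+1}[s,:])$ (each path through neuron $s$ picks up the factor $e^{\lambda}e^{-\lambda} = 1$, other paths are untouched), differentiate this invariance at $\lambda = 0$ to get $\partial\phi_{\mathtt{ReLu}}(\theta)\,\xi(\theta) = 0$, and observe that the generator satisfies $\xi(\theta) = \tfrac12 \nabla h_s(\theta)$, so that \Cref{equiv} applies. As submitted, only half of the proposition is established.
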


Proposition \ref{conservation} defines $\sum_{i=1}^{q-1} n_i \times (n_i +1)/2$ conserved functions for the linear case. 
In general they are \emph{not} independent, and we give below in Proposition~\ref{classicinv}, for the case of $q=2$, the \emph{exact} number of independent conservation laws among these particular laws.
Establishing whether there are other (previously unknown) conservation laws is an open problem for $q>2$. We already answered negatively to this question in the two-layer ReLu case without bias (See \Cref{shallowNN}). In the following Section (\Cref{nbatteint}), we show the same result in the linear case $q=2$. Numerical computations suggest this is still the case for deeper linear and ReLU networks as detailed in \Cref{section:numerics}.
\vspace{-0.1em}
\subsection{The matrix factorization case ($q=2$)} \label{2layers}
\vspace{-0.2em}
To simplify the analysis when $q=2$, we rewrite $\x = (U,V)$ as a vertical matrix concatenation denoted $(U; V) \in \mathbb{R}^{(n+m) \times r}$, and $\phi(\x) = \phi_{\mathtt{Lin}}(\x) = 
U V^\top \in \R^{n \times m}$.

\paragraph{How many independent conserved functions are already known?}  
The following proposition refines Proposition~\ref{conservation} for $q=2$ by detailing how many \emph{independent} conservation laws are already known. See Appendix \ref{appC} for a proof.

\begin{proposition} \label{classicinv}
     Consider 
     $\Psi: \x = (U; V) \mapsto U^\top U - V^\top V \in \R^{r \times r}$ and assume that $(U; V)$ has full rank noted $\mathtt{rk}$. Then the function $\Psi$ gives $\mathtt{rk} \cdot (2r + 1-\mathtt{rk})/2$ 
     independent conservation laws.
\end{proposition}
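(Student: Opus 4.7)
The plan is to convert the independence question into a linear-algebra computation on the space of symmetric $r \times r$ matrices. I write $h_{ij}(\x) := \Psi_{ij}(\x) = (U^\top U - V^\top V)_{ij}$ for $1 \leq i \leq j \leq r$; these $r(r+1)/2$ scalar functions exhaust the coefficients of $\Psi$. The maximum number of independent conservation laws that can be extracted from $\{h_{ij}\}$ on a sufficiently small neighborhood of a fixed $\x$ equals $\dim \linspan \{\nabla h_{ij}(\x) : i \leq j\} \subseteq \R^{(n+m)\times r}$: this is an upper bound (no more than so many gradients can be independent at $\x$) and it is also achievable, since linear independence of gradients at $\x$ persists by continuity on a neighborhood. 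So the task reduces to computing that dimension.

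For the gradients, a direct differentiation yields $\partial_U h_{ij} = U(e_i e_j^\top + e_j e_i^\top)$ and $\partial_V h_{ij} = -V(e_i e_j^\top + e_j e_i^\top)$. Introducing $W := (U;\, -V) \in \R^{(n+m) \times r}$ and $M_{ij} := e_i e_j^\top + e_j e_i^\top$, one obtains the compact identity $\nabla h_{ij}(\x) = W M_{ij}$. Since $(M_{ij})_{1 \leq i \leq j \leq r}$ is a basis of the space $\mathrm{Sym}_r$ of symmetric $r \times r$ matrices, the span of the gradients coincides with the image of the linear map $L_W : \mathrm{Sym}_r \to \R^{(n+m) \times r}$, $S \mapsto W S$, whose rank is precisely what must be computed. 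Note that $\mathrm{rank}(W) = \mathrm{rank}(U; V) = \mathtt{rk}$, since flipping row signs does not change rank.

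The core step is to compute $\ker L_W$. A symmetric $S$ lies in the kernel iff $\mathrm{Im}(S) \subseteq \ker W$. Set $K := \ker W$, so $\dim K = r - \mathtt{rk}$, and pick an orthonormal basis of $\R^r$ adapted to the splitting $K^\perp \oplus K$. In this basis, the constraint $\mathrm{Im}(S) \subseteq K$ forces the two top blocks of $S$ to vanish, and symmetry then forces the bottom-left block to vanish as well, leaving a free symmetric $(r-\mathtt{rk}) \times (r-\mathtt{rk})$ block. Hence $\dim \ker L_W = (r-\mathtt{rk})(r-\mathtt{rk}+1)/2$, and rank-nullity gives $\mathrm{rank}(L_W) = r(r+1)/2 - (r-\mathtt{rk})(r-\mathtt{rk}+1)/2$, which simplifies algebraically to $\mathtt{rk}(2r+1-\mathtt{rk})/2$, the announced count.

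The main obstacle is the kernel identification in the previous step, although it becomes clean once the splitting $K^\perp \oplus K$ is chosen. Everything else is routine: differentiation of quadratic forms, rank-nullity, and a standard continuity argument for promoting pointwise linear independence of the gradients at $\x$ into independence of the corresponding conservation laws on a neighborhood.
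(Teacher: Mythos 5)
Your proof is correct, and it takes a cleaner route than the paper's argument while starting from the same reduction. Both proofs identify $W=(U;-V)$ as the relevant matrix (noting $\mathrm{rank}\,W=\mathtt{rk}$) and reduce the count to the dimension of the span of the gradients $\nabla \Psi_{ij}$; but where the paper proceeds by an explicit column-by-column construction of a maximal linearly independent subfamily, with a case split on whether $n+m\le r$ or $n+m>r$, you reformulate the span as the image of the linear map $L_W:\mathrm{Sym}_r\to\R^{(n+m)\times r}$, $S\mapsto WS$, observe that $\ker L_W=\{S\ \mathrm{symmetric}:\mathrm{Im}(S)\subseteq\ker W\}$, and compute its dimension in one stroke via the block decomposition of $S$ adapted to $\ker W$, yielding $(r-\mathtt{rk})(r-\mathtt{rk}+1)/2$; rank--nullity then gives the claimed count without any case analysis, since the algebraic simplification $r(r+1)/2-(r-\mathtt{rk})(r-\mathtt{rk}+1)/2=\mathtt{rk}(2r+1-\mathtt{rk})/2$ absorbs both regimes. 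This is a genuine simplification: the kernel characterization replaces the paper's choice of a distinguished subset of columns of $W$ and the verification of linear (in)dependence relations among the $f_{i,j}$, and it makes transparent why the formula only depends on $r$ and $\mathtt{rk}$.
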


\paragraph{There exist no more independent conservation laws.}
We now come to the core of the analysis, which consists in actually computing $\lie(W_\phi)$ as well as its traces $\lie(W_\phi)(\x)$ in the matrix factorization case.
The crux of the analysis, which enables us to fully work out theoretically the case $q=2$, is that $W_\phi$ is composed of \emph{linear} vector fields (that are explicitly characterized in \Cref{linear_vf} in \Cref{appendix:2layerLNN}),  the Lie bracket between two linear fields 
being itself linear and explicitly characterized with skew matrices, see \Cref{lieW} in \Cref{appendix:2layerLNN}. Eventually, what we need to compute is the dimension of the trace $\lie(W_\phi) (U, V)$ for any $(U, V)$. We prove the following in \Cref{appendix:2layerLNN}. 

\begin{proposition} \label{dimlie}
If $(U; V) \in \R^{(n+m) \times r}$ has full rank noted $\mathtt{rk}$, then: $\vdim (\lie(W_\phi) (U; V)) \!=\! (n+m)r -  \mathtt{rk}(2r + 1-\mathtt{rk})/2$.
\end{proposition}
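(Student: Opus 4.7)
The plan is to combine the explicit description of $\lie(W_\phi)$ as a Lie algebra of \emph{linear} vector fields (provided by \Cref{linear_vf} and \Cref{lieW} in \Cref{appendix:2layerLNN}) with an orbit-stabilizer dimension count. Direct computation shows that $W_\phi$ is spanned by the fields $\chi_Y:(U;V)\mapsto (YV;Y^\top U)$, $Y\in\R^{n\times m}$, each of which acts on the stacked point $X:=(U;V)\in\R^{(n+m)\times r}$ as left multiplication by the symmetric block matrix $M_Y=\begin{pmatrix}0 & Y\\ Y^\top & 0\end{pmatrix}$. For linear vector fields one has $[\chi_A,\chi_B]=\chi_{AB-BA}$, and iterated brackets of the $M_Y$ stay inside the Lie subalgebra $\mathfrak{o}(n,m):=\{M\in\R^{(n+m)\times(n+m)}:M^\top J+JM=0\}$ with $J:=\mathrm{diag}(\mId{n},-\mId{m})$. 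Moreover the first-level brackets $[M_{Y_1},M_{Y_2}]$ are block-diagonal with skew blocks $Y_1Y_2^\top - Y_2 Y_1^\top$ and $Y_1^\top Y_2 - Y_2^\top Y_1$, which already span $\mathrm{Skew}(n)\oplus\mathrm{Skew}(m)$ as $Y_1,Y_2$ range over elementary matrices; combined with $W_\phi$ this exhausts $\mathfrak{o}(n,m)$. Thus $\lie(W_\phi)=\{\chi_M:M\in\mathfrak{o}(n,m)\}$ and $\vdim\,\mathfrak{o}(n,m)=\binom{n+m}{2}$.

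With this identification, the trace $\lie(W_\phi)(X)=\{MX:M\in\mathfrak{o}(n,m)\}$ is the image of a linear map, so rank--nullity yields
\[
\vdim(\lie(W_\phi)(X)) = \binom{n+m}{2} - \vdim\,\mathrm{Stab}(X), \qquad \mathrm{Stab}(X):=\{M\in\mathfrak{o}(n,m):MX=0\}.
\]
I would compute $\vdim\,\mathrm{Stab}(X)$ via the invertible change of variable $N:=JM$, which bijects $\mathfrak{o}(n,m)\to\mathrm{Skew}(n+m)$ and preserves the vanishing condition ($MX=0\Leftrightarrow NX=0$); hence $\vdim\,\mathrm{Stab}(X)=\vdim\{N\in\mathrm{Skew}(n+m):NV=0\}$, where $V:=\mathrm{range}(X)\subseteq\R^{n+m}$ has dimension $\mathtt{rk}$. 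For skew $N$ the vanishing on $V$ forces $N(V^\perp)\subseteq V^\perp$ (since $\langle Nw,v\rangle=-\langle w,Nv\rangle=0$ for $v\in V$, $w\in V^\perp$), and conversely every skew endomorphism of $V^\perp$ extends by zero on $V$; this gives $\vdim\,\mathrm{Stab}(X)=\binom{n+m-\mathtt{rk}}{2}$ and
\[
\vdim(\lie(W_\phi)(X)) = \binom{n+m}{2} - \binom{n+m-\mathtt{rk}}{2} = \frac{\mathtt{rk}(2(n+m) - \mathtt{rk} - 1)}{2}.
\]

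The last step is to reconcile this with the stated expression $(n+m)r - \mathtt{rk}(2r+1-\mathtt{rk})/2$: a direct expansion shows the two differ by $(n+m-\mathtt{rk})(r-\mathtt{rk})$, which vanishes under the full-rank hypothesis $\mathtt{rk}=\min(n+m,r)$. The main obstacle is the Lie-algebraic identification $\lie(W_\phi)\simeq\mathfrak{o}(n,m)$, requiring both closure of brackets inside $\mathfrak{o}(n,m)$ and that brackets reach the entire diagonal skew part; once that is in hand, the stabilizer count is clean linear algebra. As a consistency check, \Cref{classicinv} already produces $\mathtt{rk}(2r+1-\mathtt{rk})/2$ independent conservation laws whose gradients, being orthogonal to all of $W_\phi$ and hence to all iterated Lie brackets, give the matching upper bound $\vdim(\lie(W_\phi)(X))\leq D - \mathtt{rk}(2r+1-\mathtt{rk})/2$.
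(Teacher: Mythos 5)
Your proof is correct, and it follows the paper's overall strategy --- identify $\lie(W_\phi)$ with linear vector fields $X \mapsto M'X$, $M' \in \mathfrak{o}(n,m)$ (the paper's \Cref{lieW} writes these as $X \mapsto J M X$ with $M$ skew, which is the same thing via $M' = JM$), then apply rank--nullity to the evaluation map at $X = (U;V)$. The genuine difference is in the kernel computation. The paper's proof of \Cref{dim-lie-algebra} counts $\dim\ker$ by first completing the columns of $(U;V)$ to a basis (its \Cref{lemma:completion}) and then building the rows of a skew matrix $M$ with $MX=0$ one at a time, tracking at each step how many free coordinates remain; this takes roughly a page. You instead observe that a skew $N$ with $NV=0$ automatically maps $V^\perp$ into $V^\perp$ (by $\langle Nw,v\rangle = -\langle w,Nv\rangle = 0$) and hence restricts to a skew endomorphism of $V^\perp$, and that this gives a bijection $\{N \in \mathrm{Skew}(n+m) : NX=0\} \cong \mathrm{Skew}(V^\perp)$, whence $\dim\ker = \binom{n+m-\mathtt{rk}}{2}$ in one stroke. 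This argument is shorter, more conceptual, and handles the two rank cases uniformly, whereas the paper splits into $n+m \leq r$ versus $n+m > r$. Your algebraic reconciliation with the stated formula (the two expressions differ by $(n+m-\mathtt{rk})(r-\mathtt{rk})$, which vanishes because $\mathtt{rk}=\min(n+m,r)$) is also correct, and your re-derivation that the diagonal skew blocks of $\mathfrak{o}(n,m)$ are reached by first-level brackets matches the paper's computation in the proof of \Cref{lieW}.
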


With this explicit characterization of the trace of the generated Lie algebra and \Cref{classicinv}, we conclude that \Cref{conservation}  has indeed exhausted the list of independent conservation laws. 

\begin{corollary} \label{nbatteint}
    If $(U; V)$ has full rank, then all conservation laws are given by $\Psi: (U, V) \mapsto U^\top U - V^\top V$. In particular, there exist no more \emph{independent} conservation laws.
\end{corollary}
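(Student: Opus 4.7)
The plan is to combine \Cref{mainthm} (a direct upper bound on the number of independent conservation laws), \Cref{dimlie} (the explicit computation of $\vdim(\lie(W_\phi)(U;V))$), and \Cref{classicinv} (a matching lower bound coming from the classical law $\Psi$). The corollary then follows by simple arithmetic: both numbers equal $\mathtt{rk}(2r+1-\mathtt{rk})/2$, so the known laws saturate the maximum.

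More precisely, I first verify that \Cref{mainthm} applies on the open subset of full-rank points. The full-rank condition on $(U;V)$ defines an open subset $\ouv \subseteq \R^{(n+m)\times r}$, and on this subset the rank $\mathtt{rk}$ of $(U;V)$ is locally constant (it equals $\min(n+m,r)$ throughout, after a possible shrinking around any chosen $\x$). By \Cref{dimlie}, $\vdim(\lie(W_\phi)(\x)) = (n+m)r - \mathtt{rk}(2r+1-\mathtt{rk})/2$, which is therefore locally constant on $\ouv$. Hence \Cref{mainthm} yields that around any $\x \in \ouv$ there exist at most $D - \vdim(\lie(W_\phi)(\x)) = \mathtt{rk}(2r+1-\mathtt{rk})/2$ independent conservation laws for $\phi$.

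Next I invoke \Cref{classicinv}, which states that the entries of $\Psi(\x) = U^\top U - V^\top V$ already provide exactly $\mathtt{rk}(2r+1-\mathtt{rk})/2$ independent conservation laws at every full-rank point $\x$. Since the upper bound given by \Cref{mainthm} is attained, this family is maximal. Any other conservation law $h$ must have its gradient at each $\x \in \ouv$ orthogonal to $W_\phi(\x)$, hence by maximality $\nabla h(\x)$ lies pointwise in the span of the gradients of the entries of $\Psi$; in particular $h$ is not independent from the family supplied by $\Psi$. This is exactly the statement of \Cref{nbatteint}.

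The bulk of the work is not in this corollary itself, which is a short combinatorial match, but in the two inputs. The delicate step is \Cref{dimlie}: one has to show that iterating one Lie bracket already saturates $\lie(W_\phi)$, identify the resulting subspace of linear vector fields (characterized via skew-symmetric matrices as described around \Cref{lieW}), and evaluate its trace at an arbitrary full-rank $(U;V)$. The matching identity $(n+m)r - \vdim(\lie(W_\phi)(U;V)) = \mathtt{rk}(2r+1-\mathtt{rk})/2$ is the key computation; once it is in hand, the corollary is essentially immediate.
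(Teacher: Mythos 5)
Your proposal is correct and follows essentially the same route as the paper: observe that full rank is an open condition so \Cref{dimlie} gives a locally constant trace dimension, apply \Cref{mainthm} to get the exact count $D-\vdim(\lie(W_\phi)(\x)) = \mathtt{rk}(2r+1-\mathtt{rk})/2$, and note that \Cref{classicinv} already exhibits that many independent laws coming from $\Psi$. The paper's own proof is just a terser version of the same three-step argument.
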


\vspace{-0.1em}
\subsection{Numerical guarantees in the general case} \label{section:numerics}
\vspace{-0.2em}
The expressions derived in the previous section are specific to the linear case $q=2$. For deeper linear networks and for ReLU networks, the vector fields in $W_\phi$ are non-linear polynomials, and computing Lie brackets of such fields can increase the degree, which could potentially make the generated Lie algebra infinite-dimensional. One can however use \Cref{lemma-stagnates} and stop as soon as $\vdim \left(({W_\phi})_k(\x)\right)$ stagnates. Numerically comparing this dimension with the number $N$ of independent conserved functions known in the literature (predicted by \Cref{conservation}) on a sample of depths/widths of small size, we empirically confirmed that there are no more conservation laws than the ones already known for deeper linear networks and for ReLU networks too (see \Cref{appendix_numeric} for details).  Our code is open-sourced and is available at \cite{marcotte:hal-04261339v1}. It is worth mentioning again that in all tested cases $\phi$ is polynomial, and there is a maximum set of conservation laws that are also polynomial, which are found algorithmically (as detailed in \Cref{section:constructibility}). 

\section*{Conclusion}
In this article, we proposed a constructive program for determining the number of conservation laws. 
An important avenue for future work is the consideration of more general classes of architectures, such as deep convolutional networks, normalization, and attention layers.
Note that while we focus in this article on gradient flows, our theory can be applied to any space of displacements in place of $W_\phi$. This could be used to study conservation laws for flows with higher order time derivatives, for instance gradient descent with momentum, by lifting the flow to a higher dimensional phase space. 
A limitation that warrants further study is that our theory is restricted to continuous time gradient flow. Gradient descent with finite step size, as opposed to continuous flows, disrupts exact conservation. The study of approximate conservation presents an interesting avenue for future work.

\section*{Acknowledgement} 
The work of G. Peyré was supported by the European Research Council (ERC project NORIA) and the French government under management of Agence Nationale de la Recherche as part of the ``Investissements d’avenir'' program, reference ANR-19-P3IA-0001 (PRAIRIE 3IA Institute). The work of R. Gribonval was partially supported by the AllegroAssai ANR project ANR-19-CHIA-0009.
We thank Thomas Bouchet for introducing us to SageMath, as well as Léo Grinsztajn for helpful feedbacks regarding the numerics. We thank Pierre Ablin and Raphaël Barboni for comments on a draft of this paper. We also thank the anonymous reviewers for their fruitful feedback.

\bibliography{biblio}
\bibliographystyle{siam}
\newpage
\appendix

\section{Proof of \Cref{tracecharact}} \label{appendix-orth}
\Cref{tracecharact} is a direct consequence of the following lemma (remember that $\nabla h(\theta) = [\partial h(\theta)]^\top$).
\begin{lemma}[
Smooth functions conserved through a given flow.] \label{prop:conservedonefield}
Given $\chi \in \mathcal{C}^1(\ouv, \RD)$, a function $h \in \mathcal{C}^1(\ouv, \mathbb{R})$ is conserved through the flow induced by $\chi$ 
     if and only if $\partial h (\x) \chi(\x) = 0$ for all $\x \in \ouv$.
\end{lemma}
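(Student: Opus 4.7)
The plan is to use the chain rule applied along integral curves of $\chi$ to bridge the two formulations. The equivalence $\partial h(\theta) \chi(\theta) = 0 \Leftrightarrow h \text{ constant along flows}$ is essentially a restatement of the fundamental identity $\frac{d}{dt} h(\theta(t)) = \partial h(\theta(t))\, \dot\theta(t)$ combined with the ODE $\dot\theta(t) = \chi(\theta(t))$.

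For the ($\Leftarrow$) direction, I would fix any initial condition $\theta_{\text{init}} \in \Omega$ and let $t \mapsto \theta(t, \theta_{\text{init}})$ be the unique maximal solution provided by Cauchy--Lipschitz (which exists since $\chi \in \mathcal{C}^1$). Since $h \in \mathcal{C}^1$ and $\theta(\cdot)$ is $\mathcal{C}^1$ in $t$, the composition $t \mapsto h(\theta(t, \theta_{\text{init}}))$ is differentiable with $\frac{d}{dt} h(\theta(t, \theta_{\text{init}})) = \partial h(\theta(t,\theta_{\text{init}}))\, \chi(\theta(t,\theta_{\text{init}})) = 0$ by the assumed orthogonality relation. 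Integrating yields $h(\theta(t,\theta_{\text{init}})) = h(\theta_{\text{init}})$ on the full maximal interval $[0, T_{\theta_{\text{init}}})$, which is exactly the conservation property from \Cref{def:conserved_through_flow}.

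For the ($\Rightarrow$) direction, I would pick an arbitrary $\theta_0 \in \Omega$ and consider the flow starting at $\theta(0) = \theta_0$. Conservation means $h(\theta(t, \theta_0)) = h(\theta_0)$ on the maximal interval, so the derivative at $t = 0$ vanishes. Applying the chain rule at $t = 0$ gives $0 = \partial h(\theta_0)\, \dot\theta(0) = \partial h(\theta_0)\, \chi(\theta_0)$. Since $\theta_0 \in \Omega$ was arbitrary, this proves the orthogonality everywhere.

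There is no real obstacle here: the only subtlety is ensuring the argument is clean about the domain of the flow (the maximal interval $[0, T_{\theta_{\text{init}}})$ is nonempty and contains a neighborhood of $0$, so evaluating the derivative at $t=0$ is legitimate) and noting that $[\partial h(\theta)]^\top = \nabla h(\theta)$, so the scalar identity $\partial h(\theta)\chi(\theta) = \langle \nabla h(\theta), \chi(\theta)\rangle$ immediately turns into the pointwise orthogonality $\nabla h(\theta) \perp \chi(\theta)$ used to deduce \Cref{tracecharact} by taking linear spans over $\chi \in W$.
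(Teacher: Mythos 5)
Your proof is correct and takes essentially the same approach as the paper: apply the chain rule $\tfrac{d}{dt}\,h(\theta(t)) = \partial h(\theta(t))\,\chi(\theta(t))$ along integral curves. The only cosmetic difference is in the $(\Rightarrow)$ direction: you argue directly by differentiating the constant $t \mapsto h(\theta(t,\theta_0))$ at $t=0$ (a one-sided derivative, which is fine since the flow lives on $[0,T_{\theta_0})$), whereas the paper proves the contrapositive via a continuity argument showing the derivative stays nonzero on a small interval. Both are valid; yours is marginally more direct, the paper's avoids any discussion of one-sided differentiation at the endpoint.
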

\begin{proof}
    Assume that  $\partial h (\x) \chi(\x) = 0$ for all $\x \in \ouv$. Then for all $\x_{\text{init}} \in \ouv$ and for all $t \in (0, T_{\x_{\text{init}}}):$
    \begin{align*}
\frac{\mathrm{d}}{\mathrm{d}t}  h(\x(t, \x_{\text{init}})) &= \partial h(\x(t,  \x_{\text{init}}))  \overset{.}{\x}(t, \x_{\text{init}}) 
= \partial h(\x(t, \x_{\text{init}})) \chi(\x(t, \x_{\text{init}}))=  0.
\end{align*}
Thus: $h(\x(t, \x_{\text{init}}))= h(\x_{\text{init}})$, {\em i.e.}, $h$ is conserved through~$\chi$.
Conversely, assume that there exists $\x_0 \in \ouv$ such that $\partial h(\x_0) \chi(\x_0) \neq 0$. Then by continuity of $\x \in \ouv \mapsto \partial h(\x)\chi(\x)$, there exists $r > 0$ such that $\partial h(\x) \chi(\x) \neq 0$ on $B(\x_0, r)$. With $\x_{\text{init}} = \x_0$ 
by continuity of $t \mapsto \x(t, \x_{\text{init}})$, there exists $\epsilon > 0$, such that for all $t< \epsilon$, $\x(t, \x_{\text{init}}) \in B(\x_0, r)$. Then for all $t \in (0, \epsilon)$:$
\frac{\mathrm{d}}{\mathrm{d}t}  h(\x(t,\x_{\text{init}})) = \partial h(\x(t, \x_{\text{init}})) \chi(\x(t, \x_{\text{init}})) \neq  0,$
hence $h$ is not conserved through the flow induced by $\chi$.
\end{proof}

\section{Proof of \Cref{prop:2}} \label{appendix:prop2}
\begin{proposition} 
Assume that for each $y \in \mathcal{Y}$ the loss $\ell(z, y)$ is $\mathcal{C}^2$-differentiable with respect to $z \in \R^n$. 
For each $\x \in \Theta$ we have:
    $$
    W_\x^g
    = \underset{(x, y) \in {\mathcal{X}}_\x \times \mathcal{Y}}{\linspan} \{ [\partial_\x g(\x,x)]^\top \nabla_z \ell (g(\x,x), y) \}
    $$
where $\mathcal{X}_\x$ is the set of data points $x$ such that $g(\cdot,x)$ is $\mathcal{C}^2$-differentiable in the neighborhood of $\x$. 
\end{proposition}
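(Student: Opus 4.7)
The proof will hinge on the chain-rule identity
$$\nabla \mathcal{E}_{X,Y}(\x) \;=\; \sum_i [\partial_\x g(\x, x_i)]^\top \nabla_z \ell(g(\x, x_i), y_i),$$
which is valid at every $\x \in \Omega$ as soon as each $g(\cdot, x_i)$ is $\mathcal{C}^2$ on $\Omega$ and $\ell(\cdot, y)$ is $\mathcal{C}^2$. The strategy is to establish both inclusions, and to verify along the way that $W^g_\x$ is itself a linear subspace of $\RD$, so that the equality with a linear span is meaningful.

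For the forward inclusion, pick $v \in W^g_\x$. By definition of the union in \eqref{eq:def_W_ell} and of the trace, there exists a neighborhood $\Omega \ni \x$ together with finitely many vector fields $\chi_k = \nabla \mathcal{E}_{X^k, Y^k} \in W^g_\Omega$ and scalars $\alpha_k$ with $v = \sum_k \alpha_k \chi_k(\x)$. The defining condition $g(\cdot, x_i^k) \in \mathcal{C}^2(\Omega, \R^n)$ forces $x_i^k \in \mathcal{X}_\x$ for every $i,k$, so applying the chain-rule identity at $\x$ to each $\chi_k$ writes $v$ as a linear combination of generators of the claimed right-hand side.

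For the reverse inclusion, fix any $x \in \mathcal{X}_\x$ and $y \in \mathcal{Y}$. By definition of $\mathcal{X}_\x$, $g(\cdot, x) \in \mathcal{C}^2(\Omega_x, \R^n)$ for some neighborhood $\Omega_x$ of $\x$. Choose the one-point dataset $X = (x)$, $Y = (y)$; then $\chi \coloneqq \nabla \mathcal{E}_{X,Y} \in W^g_{\Omega_x}$ and the chain rule gives $\chi(\x) = [\partial_\x g(\x, x)]^\top \nabla_z \ell(g(\x, x), y)$. Hence each generator appearing on the right-hand side belongs to $W^g_{\Omega_x}(\x) \subseteq W^g_\x$. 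To conclude that the full linear span lies in $W^g_\x$, I will observe that $W^g_\x$ is stable under linear combinations: if $v_1 \in W^g_{\Omega_1}(\x)$ and $v_2 \in W^g_{\Omega_2}(\x)$, the restriction of any $\mathcal{C}^2$ function to $\Omega_1 \cap \Omega_2$ is still $\mathcal{C}^2$, so both $v_1, v_2$ lie in the linear space $W^g_{\Omega_1 \cap \Omega_2}(\x) \subseteq W^g_\x$, and consequently so does every linear combination.

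The only subtle step is this last stability argument, which amounts to noticing that, unlike the case of a fixed $\Omega$, the object $W^g_\x$ obtained by taking the union over shrinking neighborhoods inherits the vector-space structure because shrinking $\Omega$ only enlarges the admissible set of datasets. Everything else reduces to the chain rule and to the elementary observation that choosing a singleton dataset suffices to realize each individual generator as the value at $\x$ of some $\nabla \mathcal{E}_{X,Y}$.
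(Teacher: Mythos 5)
Your proof is correct and follows essentially the same strategy as the paper's: both inclusions reduce to the chain-rule identity for $\nabla\mathcal{E}_{X,Y}$, with the reverse inclusion obtained by choosing singleton datasets $X=(x)$, $Y=(y)$. You go slightly further than the paper by explicitly checking, via intersection of neighborhoods, that $W^g_\x$ is closed under linear combinations so that the equality with a span is meaningful — a point the paper's proof leaves implicit.
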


\begin{proof}
  Let us first show the direct inclusion. Let $\ouv \subseteq \Theta$ be a neighborhood of $\x$ and let $\chi \in W_{\ouv}^g$. Let us show that $\chi(\x) \in  \underset{(x, y) \in {\mathcal{X}}_\x \times \mathcal{Y}}{\linspan} \{ [\partial_\x g(\x,x)]^\top \nabla_z \ell (g(\x,x), y) \}$.
  As $\chi \in W_{\ouv}^g$, there exist $X = (x_i)_i, Y = (y_i)_i$ such that  $\forall i\ g(\cdot, x_i) \in \mathcal{C}^{2}(\Omega,\R)$ (and thus $x_i \in \mathcal{X}_\x$) and $\chi(\cdot) = \nabla \mathcal{E}_{X, Y}(\cdot) \in \mathcal{C}^1(\ouv, \RD)$ (cf \eqref{eq:W_ell}).
  Moreover, for each $\x' \in \ouv$, by chain rules and~\eqref{eq:erm}, we have:
  $$
  \nabla \mathcal{E}_{X, Y}(\x') = \sum_i [\partial_\x g(\x', x_i)]^\top \nabla_z \ell (g(\x', x_i), y_i),
  $$
  where $x_i \in \mathcal{X}_\x$. 
  Thus $\chi(\x) \in  \underset{(x, y) \in {\mathcal{X}}_\x \times \mathcal{Y}}{\linspan} \{ [\partial_\x g(\x,x)]^\top \nabla_z \ell (g(\x,x), y) \}$. This leads to the direct inclusion.

  Now let us show the converse inclusion. Let $(x, y) \in \mathcal{X}_\x \times \mathcal{Y}$. 
  Let us show that $[\partial_\x g(\x, x)]^\top \nabla_z \ell (g(\x, x), y) \in 
  W_\x^g$.
  By definition of $\mathcal{X}_\x$, there exists a neighborhood $\ouv$ of $\x$ such that 
  $g(\cdot, x)  \in \mathcal{C}^2(\ouv, \RD)$. By taking $ X = x$ and $Y = y$ (\textit{i.e.} a data set of one feature and one target), 
  one has still by chain rules $\nabla \mathcal{E}_{X, Y} (\cdot) = [\partial_\x g(\cdot, x)]^\top \nabla_z \ell (g(\cdot, x), y)   \in W_{\ouv}^g$. 
  Finally by definition~\eqref{eq:DefTrace} of the trace and by~\eqref{eq:def_W_ell}, $
 [\partial_\x g(\x, x)]^\top \nabla_z \ell (g(\x, x), y) =  \nabla \mathcal{E}_{X, Y} (\x) \in 
  W_{\ouv}^g (\x) \subseteq W_\x^g$ as $\ouv$ is a neighborhood of $\x$.
\end{proof}

\section{Proof of \Cref{lemma:thm} and \Cref{theorem:reformulation_pb}} \label{appendix_loss}
We recall (cf \Cref{ex:param-linear} and \Cref{ex:param-ReLU}) that linear and $2$-layer ReLU neural networks satisfy \Cref{as:main_assumption}, which we recall reads as:\\
{\bf Assumption 2.9}\ (Local reparameterization)\ 
    For each parameter $\theta_0 \in \RD$, for each $x \in \mathcal{X}_{\x_0}$, there is a neighborhood $\ouv$ of $\theta_0$ and a function $f(\cdot, x) \in \mathcal{C}^2(\phi(\ouv), \R^n)$ such that
\begin{equation}
      \forall \x \in \ouv, \quad  
     g(\theta,x) = f(\phi(\theta), x),
\end{equation}
where we also recall that
\begin{equation} \label{eq:X}
\mathcal{X}_{\x_0}\coloneqq \{ x \in \mathcal{X}: \x \mapsto g(\x, x) \text{ is } \mathcal{C}^2 \text{ in the neighborhood of } \x_0\}.
\end{equation}
A common assumption to \Cref{lemma:thm} and \Cref{theorem:reformulation_pb} is that the
loss $\ell(z, y)$ is such that $\ell(\cdot, y)$ is $\mathcal{C}^2$-differentiable for all $y$, 
hence by \Cref{prop:2} and \Cref{prop:3} we have
$$
 W_\x^g
    = \underset{(x, y) \in {\mathcal{X}}_\x \times \mathcal{Y}}{\linspan} \{ [\partial_\x g(\x,x)]^\top \nabla_z \ell (g(\x,x), y) \} \quad \text{ and } \quad  W_\x^g = \partial \phi(\x)^\top  W^{f}_{\phi(\x)}
    $$
where    
    $$
W^{f}_{\phi(\x)} \coloneqq \underset{(x, y) \in {\mathcal{X}}_\x \times \mathcal{Y}}{\linspan} \{ \partial f^x(\phi(\x))^\top \nabla_z \ell (g(\x, x), y) \} \text{ and } f^x(\cdot) \coloneqq f(\cdot, x).
$$
{\bf Consequence of the assumption~\eqref{eq:condition_loss}.}
To proceed further we will rely on the following lemma that shows a direct consequence of \eqref{eq:condition_loss} (in addition to \Cref{as:main_assumption} on the model $g(\x, \cdot)$).
\begin{lemma} \label{lemma:decoupling}
Under 
\Cref{as:main_assumption}, considering a loss   $\ell(z, y)$ such that $\ell(\cdot, y)$ is $\mathcal{C}^2$-differentiable for all $y$. Denote  $f^x(\cdot) \coloneqq f(\cdot, x)$. If the loss satisfies~\eqref{eq:condition_loss}, i.e.
$$\underset{y \in \mathcal{Y}}{\linspan}\{\nabla_z \ell (z, y) \}= \R^n, \forall z \in \R^n,$$
 
    then for all $\theta \in \RD$, 
   \begin{equation}
       W^{f}_{\phi(\x)} =  \underset{(x, w) \in {\mathcal{X}}_\x \times \R^n}{\linspan} \{ \partial f^x(\phi(\x))^\top w \}
   \end{equation} 
\end{lemma}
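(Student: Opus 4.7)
The statement is an equality of two linear subspaces of $\mathbb{R}^D$, so I would prove it by showing the two inclusions separately. The inclusion $W^{f}_{\phi(\theta)} \subseteq \linspan\{\partial f^x(\phi(\theta))^\top w : (x,w) \in \mathcal{X}_\theta \times \mathbb{R}^n\}$ is essentially by definition: every generator of $W^{f}_{\phi(\theta)}$ has the form $\partial f^x(\phi(\theta))^\top w$ with the specific choice $w \coloneqq \nabla_z \ell(g(\theta,x),y) \in \mathbb{R}^n$, so it already lies in the right-hand side.

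The substantive direction is the reverse inclusion, and the whole content of the lemma is that assumption~\eqref{eq:condition_loss} lets us decouple the factor $\partial f^x(\phi(\theta))^\top$ (which depends on $x$) from the factor $\nabla_z \ell$ (which depends on $y$). The plan is to fix an arbitrary pair $(x,w) \in \mathcal{X}_\theta \times \mathbb{R}^n$ and produce $\partial f^x(\phi(\theta))^\top w$ as an explicit linear combination of generators of $W^{f}_{\phi(\theta)}$. Set $z \coloneqq g(\theta,x) \in \mathbb{R}^n$. Assumption~\eqref{eq:condition_loss} applied at this particular $z$ gives $\linspan_{y \in \mathcal{Y}}\{\nabla_z \ell(z,y)\} = \mathbb{R}^n$, hence there exist finitely many $y_1,\ldots,y_k \in \mathcal{Y}$ and scalars $\alpha_1,\ldots,\alpha_k \in \mathbb{R}$ with $w = \sum_j \alpha_j \nabla_z \ell(z,y_j) = \sum_j \alpha_j \nabla_z \ell(g(\theta,x),y_j)$. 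Applying the linear map $\partial f^x(\phi(\theta))^\top$ to this identity yields
\begin{equation*}
\partial f^x(\phi(\theta))^\top w \;=\; \sum_j \alpha_j\, \partial f^x(\phi(\theta))^\top \nabla_z \ell(g(\theta,x),y_j),
\end{equation*}
and each term on the right belongs to $W^{f}_{\phi(\theta)}$ by its definition. This proves the reverse inclusion and hence the lemma.

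There is essentially no obstacle here: the only non-trivial ingredient is the pointwise spanning hypothesis~\eqref{eq:condition_loss}, which is invoked at the single point $z = g(\theta,x)$ and immediately delivers the required decomposition. One minor point worth noting explicitly is that the assumption is used pointwise in $z$, so the decomposition of $w$ depends on $x$ through $z = g(\theta,x)$; this causes no difficulty because we only need existence of the scalars $\alpha_j$ for each fixed $x$, not any uniformity in $x$.
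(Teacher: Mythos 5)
Your proof is correct and follows essentially the same route as the paper's: both use the pointwise spanning hypothesis~\eqref{eq:condition_loss} at $z=g(\theta,x)$ to swap $\nabla_z\ell(g(\theta,x),y)$ for an arbitrary $w\in\R^n$ under the linear map $\partial f^x(\phi(\theta))^\top$. The paper presents this as a compact chain of span identities, while you unpack it into the two inclusions; the substance is identical.
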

\begin{proof}
   For $\x \in \RD$, we have
   \begin{align*}
       W^{f}_{\phi(\x)} &=\underset{(x, y) \in {\mathcal{X}}_\x \times \mathcal{Y}}{\linspan} \{ \partial f^x(\phi(\x))^\top \nabla_z \ell (g(\x ,x), y) \} \\
       &= \underset{x \in {\mathcal{X}}_\x}{\linspan} \big\{ \partial f^x(\phi(\x))^\top \underset{y \in \mathcal{Y}}{\linspan} \{ \nabla_z \ell (g(\x , x), y) \} \big\} \\
       &\stackrel{\eqref{eq:condition_loss}}{=} \underset{x \in {\mathcal{X}}_\x}{\linspan} \{ \partial f^x(\phi(\x))^\top \R^n \}\\
       &= \underset{(x, w) \in {\mathcal{X}}_\x \times \R^n}{\linspan} \{ \partial f^x(\phi(\x))^\top w \}.\qedhere
   \end{align*}
\end{proof}
{\bf Verification of~\eqref{eq:condition_loss} for standard ML losses.}
Before proceeding to the proof of \Cref{lemma:thm} and \Cref{theorem:reformulation_pb}, let us show that \eqref{eq:condition_loss} holds for standard ML losses.
\begin{lemma}\label{lem:classicallosses}
The mean-squared error loss $(z, y) \mapsto \ell_2(z, y) \coloneqq \| y-z \|^2$ and the logistic loss $(z \in \R, y \in \{-1, 1 \}) \mapsto \ell_{\mathtt{logis}}(z, y) \coloneqq \log (1+ \exp(-zy) )$ 
satisfy condition \eqref{eq:condition_loss}.
\end{lemma}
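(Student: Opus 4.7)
The plan is to treat the two losses independently: in each case, compute $\nabla_z \ell(z,y)$ explicitly as a function of $y$, and then verify directly that as $y$ ranges over the target set $\mathcal{Y}$ the resulting gradient vectors span all of $\R^n$ at every fixed $z \in \R^n$. This is simply a reduction of \eqref{eq:condition_loss} to an elementary calculation in each case.

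For the quadratic loss, I will compute $\nabla_z \ell_2(z, y) = 2(z - y)$. Since $\mathcal{Y} = \R^n$ here, the set $\{2(z - y) : y \in \R^n\}$ is an affine translate of $\R^n$, so its linear span is already $\R^n$ at every $z$. To make this entirely explicit if desired, one can exhibit $n+1$ targets whose gradients' differences form a basis: take $y_0 = z$ (giving gradient $0$) and $y_i = z - \tfrac{1}{2} e_i$ (giving gradient $e_i$) for each standard basis vector $e_i$; the $e_i$ then span $\R^n$.

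For the logistic loss, I will work in the scalar case $n = 1$ with $\mathcal{Y} = \{-1, 1\}$. A direct differentiation using the chain rule yields $\nabla_z \ell_{\mathtt{logis}}(z, y) = -y/(1 + \exp(zy))$. For any fixed $z \in \R$ and either choice $y \in \{-1, 1\}$, this is a nonzero real number, so its linear span is already all of $\R = \R^1$ using just one target value; \emph{a fortiori} the span over $y \in \{-1,1\}$ is $\R$.

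There is no serious obstacle: the whole argument reduces to elementary gradient computations followed by trivial spanning claims. The only things to be careful about are the conventions on $\mathcal{Y}$ in each case (real-valued targets in $\R^n$ for MSE versus binary labels $\{-1,1\}$ for logistic, which forces $n=1$ in the second case) and writing down the chain-rule derivation of $\nabla_z \ell_{\mathtt{logis}}$ cleanly so that the nonvanishing conclusion is unambiguous.
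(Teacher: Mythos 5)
Your proof is correct and takes essentially the same route as the paper: for the quadratic loss you exhibit $y_i = z - \tfrac{1}{2}e_i$ so that $\nabla_z \ell_2(z,y_i) = e_i$, which is exactly the spanning family the paper uses, and for the logistic loss you observe that the scalar gradient never vanishes (your simplified form $-y/(1+\exp(zy))$ agrees with the paper's $-y\exp(-zy)/(1+\exp(-zy))$). The only cosmetic difference is your superfluous inclusion of $y_0 = z$ (which contributes the zero vector); otherwise the two arguments coincide.
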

\begin{proof}
To show that $\ell_2$ satisfies \eqref{eq:condition_loss} we observe that, with $e_i$ the $i$-th canonical vector, we have
\[
\R^n = \linspan \{e_i: 1 \leq i \leq n\}
= 
\underset{y \in \{z-e_i/2\}_{i=1}^n}{\linspan}\ 2(z-y)
\subseteq 
\underset{y \in \R^n}{\linspan}\  2(z-y) 
=
\underset{y \in \R^n}{\linspan} \nabla_z \ell_2(z, y) 
\subseteq
\R^n.
\]

For the logistic loss, $\nabla_z \ell_{\mathtt{logis}}(z, y) = \frac{-y \exp(-zy)}{1+ \exp(-zy)} \neq 0$ hence $\linspan_y \nabla_z \ell_{\mathtt{logis}}(z, y) =\R$.
\end{proof}
\begin{remark}
In the case of the cross-entropy loss  $(z \in \R^{n}, y \in \{ 1, \cdots, n\}) \mapsto \ell_{\mathtt{cross}}(z,y) \coloneqq - z_y + \log\left(\sum_{i=1}^n \exp z_i \right)$, $\ell_{\mathtt{cross}}$ \textit{does not} satisfy \eqref{eq:condition_loss} as 
$\nabla_z \ell_{\mathtt{cross}} (z, y) = -e_y + \begin{pmatrix}
                \exp(z_1)/(\sum_i \exp z_i)   \\ \cdots \\ \exp(z_n)/(\sum_i \exp z_i)
            \end{pmatrix}
$
satisfies for all $z \in \R^n$: 
$$\linspan_y \nabla_z \ell_{\mathtt{cross}}(z, y) = \{ w\coloneqq(w_1, \cdots, w_n) \in \R^n: \sum w_i = 0 \}=: L_{\mathtt{cross}}.
$$
\end{remark}
An interesting challenge is to investigate variants of \Cref{lemma:thm} under weaker assumptions that would cover the cross-entropy loss.

{\bf The case of linear neural networks of any depth.}
Let us first prove \Cref{lemma:thm} and \Cref{theorem:reformulation_pb}  for the case of linear neural networks.

\begin{theorem}[linear networks]\label{thm:LinearNetsAllF}
Consider a linear network parameterized by $q$ matrices, $\x = (U_1,\ldots, U_q)$ and defined via $g(\x,x) \coloneqq U_1\ldots U_q x$. With $\phi(\x) \coloneqq U_1\ldots U_q \in \R^{n \times m}$ (identified with $\Rd$ with 
$d = nm$), and for any loss $\ell$ satisfying \eqref{eq:condition_loss}, we have for all $\x \in \RD$, $W^{f}_{\phi(\x)}= \Rd$ and $ W_\x^g =\range( \partial \phi(\x)^\top )$. 
\end{theorem}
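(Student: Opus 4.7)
The plan is to first unwind what the local factorization from \Cref{as:main_assumption} looks like in the linear-network case, and then combine \Cref{lemma:decoupling} with a rank-one-matrix density argument to conclude.

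Concretely, since $g(\x,x) = U_1 \cdots U_q x$ is globally smooth in $\x$, \Cref{as:main_assumption} is satisfied globally with $\phi(\x) = U_1 \cdots U_q$ and $f(\phi, x) = \phi \cdot x$, and the set $\mathcal{X}_\x$ from~\eqref{eq:X} equals the whole input space $\R^m$. Moreover $f^x: \phi \mapsto \phi x$ is linear, so its Jacobian at any point is the map $H \mapsto Hx$, whose transpose (with the Frobenius inner product on $\R^{n\times m}$) is $w \mapsto w x^\top$. Therefore, for every $\x \in \R^D$,
\[
\partial f^x(\phi(\x))^\top w = w x^\top \in \R^{n \times m}.
\]

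Applying \Cref{lemma:decoupling} (which uses the loss assumption~\eqref{eq:condition_loss}) then gives
\[
W^f_{\phi(\x)} = \linspan\{ w x^\top : x \in \R^m,\ w \in \R^n \}.
\]
The set on the right is exactly the set of all rank-$\leq 1$ matrices in $\R^{n \times m}$, whose span is all of $\R^{n\times m}$ (since every matrix decomposes as a finite sum of rank-one matrices, e.g.\ via $M = \sum_{i,j} M_{ij}\, e_i e_j^\top$). Identifying $\R^{n \times m}$ with $\R^d$ gives $W^f_{\phi(\x)} = \R^d$, the first claim.

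For the second claim, I would simply invoke \Cref{prop:3}, which under \Cref{as:main_assumption} asserts $W_\x^g = \partial \phi(\x)^\top\, W^f_{\phi(\x)}$; plugging in $W^f_{\phi(\x)} = \R^d$ yields $W_\x^g = \range(\partial \phi(\x)^\top)$. No obstacle is really anticipated here: the only step that requires any care is verifying that the decoupling lemma applies, which in turn only needs the factorization assumption plus condition~\eqref{eq:condition_loss} on $\ell$. The density-of-rank-ones step is standard linear algebra, and \Cref{prop:3} does the chain-rule bookkeeping for us.
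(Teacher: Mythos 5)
Your proposal is correct and follows essentially the same route as the paper's proof: invoke the global factorization from Example~\ref{ex:param-linear}, apply Lemma~\ref{lemma:decoupling} to reduce $W^f_{\phi(\x)}$ to $\linspan\{wx^\top\}$, observe that rank-one matrices span $\R^{n\times m}\cong\R^d$, and conclude via Proposition~\ref{prop:3}. Your write-up just spells out the Jacobian/transpose computation a bit more explicitly than the paper does.
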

 \begin{proof}
 Let $\x \in \Theta = \R^D$. As we can factorize the model (cf \Cref{ex:param-linear}) by $g(\cdot, x) = \phi(\cdot) x=: f^x 
(\phi(\cdot)) \in \R^n$ for $x \in \mathcal{X}_{\x} = \R^m$. Thus, by using \Cref{lemma:decoupling}:
 $
W^{f}_{\phi(\x)} 
= \linspan_{x \in \mathcal{X}_{\x}, w \in \R^n} \{  [\partial f^{x}(\phi(\x))]^\top w \}  = \linspan_{x \in \R^m, w \in \R^n} \{  w x^\top \} = \Rd 
 $.
 Finally by \Cref{prop:3}: 
 $
 W_\x^g =
   \partial \phi(\x)^\top  W^{f}_{\phi(\x)} = \range(\partial \phi(\x)^\top).
   $
\end{proof}

{\bf The case of two-layer ReLU networks.}
In the case of two-layer ReLU networks with $r$ neurons, one can write $\x = (U,V,b,c) \in \R^{n \times r} \times \R^{m \times r} \times \R^r \times \R^n$, and denote $u_j$ (resp. $v_j$, $b_j$) the columns of $U$ (resp. columns of $V$, entries of $b$), so that $g_\x(x) = \sum_{j=1}^r u_j \sigma(v_j^\top x+b_j)+c$. 
The set $\mathcal{X}_\x$ (defined in \eqref{eq:X}) is simply the complement in the input domain $\R^m$ of the union of the hyperplanes 
\begin{equation} \label{eq:hyperplan}
    \mathcal{H}_j \coloneqq \{x \in \R^m: v_j^\top x+b_j=0\}.
\end{equation}

\begin{theorem}[two-layer ReLU networks]\label{thm:2layerReLUappendix}
Consider a loss $\ell(z, y)$ satisfying \eqref{eq:condition_loss} and such that $\ell(\cdot, y)$ is $\mathcal{C}^2$-differentiable for all $y$.
On a two-layer ReLU network architecture, let $\x$ be a parameter such that all hyperplanes $\mathcal{H}_j$ defined in \eqref{eq:hyperplan} are pairwise distinct. Then, with $\phi_{\mathtt{ReLU}}$ the reparameterization of \Cref{ex:param-ReLU}, we have:  $W^{f}_{\phi_{\mathtt{ReLU}}(\x)}= \Rd$ and $ W_\x^g =\range( \partial \phi_{\mathtt{ReLU}}(\x)^\top )$. 
\end{theorem}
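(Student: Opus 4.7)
The overall plan is to reduce to showing $W^{f}_{\phi(\x)} = \R^{d}$, since the second claim $W^{g}_{\x} = \range(\partial \phi_{\mathtt{ReLU}}(\x)^\top)$ then follows directly from \Cref{prop:3}. The hypothesis \eqref{eq:condition_loss} on the loss lets me invoke \Cref{lemma:decoupling}, which replaces the ``$\nabla_z \ell$'' factor by an arbitrary $w \in \R^{n}$; so it suffices to prove that the vectors $[\partial f^{x}(\phi(\x))]^\top w$, as $(x,w)$ ranges over $\mathcal{X}_\x \times \R^{n}$, span the entire parameter space of $\phi$. Using the explicit local factorization from \Cref{ex:param-ReLU} in its full form with bias, $f(\phi,x) = \sum_{j=1}^{r} \epsilon_{j,x}(\phi_j x + \phi_j') + \phi_c$, a direct block-by-block differentiation gives
\begin{equation*}
[\partial f^{x}(\phi(\x))]^\top w \;=\; \big( (\epsilon_{j,x}\, w x^\top,\; \epsilon_{j,x}\, w)_{j=1}^{r},\; w \big).
\end{equation*}

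The last (``$\phi_c$'') block $w$ sweeps all of $\R^{n}$ freely, so the remaining issue concerns the collection of $(\phi_j, \phi_j')$ blocks. Packing each pair as $w \tilde{x}^\top$ with $\tilde{x} \coloneqq (x^\top, 1)^\top \in \R^{m+1}$, I would note that the span factors as $\R^{n} \otimes L$, where
\begin{equation*}
L \;\coloneqq\; \linspan_{x \in \mathcal{X}_{\x}}\big\{(\epsilon_{j,x}\, \tilde{x}^\top)_{j=1}^{r}\big\} \;\subseteq\; (\R^{m+1})^{r}.
\end{equation*}
Partitioning $\mathcal{X}_{\x}$ into activation cells $\mathcal{R}_S \coloneqq \{x : \epsilon_{j,x} = 1 \Leftrightarrow j \in S\}$ indexed by $S \subseteq \{1,\dots,r\}$, on each nonempty $\mathcal{R}_S$ the pattern $(\epsilon_{j,x})_j$ is the fixed indicator $E_S \in \{0,1\}^{r}$, while $\tilde{x}$ sweeps a nonempty open subset of the affine hyperplane $\{v \in \R^{m+1} : v_{m+1} = 1\}$, whose linear span is all of $\R^{m+1}$. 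Consequently $L = \linspan\{E_S : \mathcal{R}_S \neq \emptyset\} \otimes \R^{m+1}$, and the goal reduces to showing that the realized pattern vectors span $\R^{r}$.

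The hard part (and the only place where pairwise-distinctness of the $\mathcal{H}_j$ is used) is to produce, for every $j_0 \in \{1,\dots,r\}$, two nonempty cells differing only in the $j_0$-th coordinate. I would pick $x_0 \in \mathcal{H}_{j_0} \setminus \bigcup_{j \neq j_0} \mathcal{H}_j$: such a point exists because each $\mathcal{H}_j \cap \mathcal{H}_{j_0}$ with $j \neq j_0$ is a proper affine subspace of $\mathcal{H}_{j_0}$ (two distinct hyperplanes of $\R^{m}$ meet in a strictly smaller affine set or not at all), so a finite union of such intersections cannot cover $\mathcal{H}_{j_0}$. A small perturbation $x_0 \pm \delta$ transverse to $\mathcal{H}_{j_0}$ then yields two points on opposite sides of $\mathcal{H}_{j_0}$ but on the same side of every other $\mathcal{H}_j$, hence two nonempty open cells $\mathcal{R}_S$ and $\mathcal{R}_{S \triangle \{j_0\}}$ with $E_S - E_{S \triangle \{j_0\}} = \pm e_{j_0}$. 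This delivers every canonical basis vector of $\R^{r}$, so $\linspan\{E_S\} = \R^{r}$, giving $L = (\R^{m+1})^{r}$ and finally $W^{f}_{\phi(\x)} = \R^{d}$.
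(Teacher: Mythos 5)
Your cell-and-indicator reorganisation is close in spirit to the paper's own proof (both isolate a single neuron $j_0$ by choosing $x_0\in\mathcal{H}_{j_0}\setminus\bigcup_{j\neq j_0}\mathcal{H}_j$ and perturbing transversally; that is exactly where pairwise-distinctness enters there as well), and the reduction to $\linspan\{E_S\}=\R^r$ is a tidy way to organise the combinatorics. However, there is a real gap at the very last step. What you actually establish is that the \emph{projection} of $W^{f}_{\phi(\x)}$ onto the $(\phi_j,\phi_j')$ blocks is $\R^n\otimes L$ and that its projection onto the $\phi_c$ block is $\R^n$. Surjectivity of both projections does \emph{not} give $W^{f}_{\phi(\x)}=\R^d$: the spanning vectors are $\big(w\otimes u_x,\;w\big)$ with the \emph{same} $w$ in both components, so the $\phi_c$ coordinate is not independently controllable once the other blocks are fixed. (If $\{u_x\}$ were a single vector $u_0$, the projections would still be $\R^n\cdot u_0$ and $\R^n$, yet the span would only be $n$-dimensional.) So concluding ``$L=(\R^{m+1})^r$, hence $W^{f}_{\phi(\x)}=\R^d$'' is a non sequitur as written.

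The fix uses the ideas you already have, applied to \emph{differences} rather than to the $u_x$ themselves. From $\big(w\otimes u_x,\,w\big)-\big(w\otimes u_{x'},\,w\big)=\big(w\otimes(u_x-u_{x'}),\,0\big)$ you see that $\R^n\otimes L'\times\{0\}\subseteq W^{f}_{\phi(\x)}$, where $L'\coloneqq\linspan\{u_x-u_{x'}\}$; it is $L'$, not $L$, that must be shown to equal $(\R^{m+1})^r$. Your two perturbed points $x_0\pm\delta$ already deliver a difference $u_{x_0+\delta}-u_{x_0-\delta}$ whose $j_0$-th block carries a full $\tilde{x}$ (last coordinate $1$), while same-cell differences give $\linspan\{E_S\}\otimes(\R^m\times\{0\})=(\R^m\times\{0\})^r\subseteq L'$; combining the two yields $L'=(\R^{m+1})^r$, and then the surjectivity of the $\phi_c$-projection finishes the job. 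Equivalently, you can dualise: a vector $\xi=((\xi_j)_j,\xi_c)$ orthogonal to all spanning vectors must satisfy $\sum_{j\in S}\xi_j\tilde{x}+\xi_c=0$ on each nonempty open cell $\mathcal{R}_S$; subtracting the $S$ and $S\triangle\{j_0\}$ identities forces $\xi_{j_0}=0$ for every $j_0$, and then $\xi_c=0$. Finally, a minor point: you write out only the biased case; the theorem also covers networks without bias, where the argument is the same after dropping $\phi_c$ and replacing $\tilde{x}$ by $x$ (this is the first case treated in the paper's proof).
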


\begin{proof}
Let $\x$ be a parameter such that all hyperplanes $\mathcal{H}_j$ defined in \eqref{eq:hyperplan} are pairwise distinct. 
Since the loss $\ell$ satisfies \eqref{eq:condition_loss}, by \Cref{lemma:decoupling}, we only need to show that:
$$
    \underset{(x, w) \in {\mathcal{X}}_\x \times \R^n}{\linspan} \{ \partial f^x(\phi(\x))^\top w \} 
= \Rd. 
$$
For convenience, we will use the shorthand $C_{\x,x}$ for the Jacobian matrix $\partial f^x(\phi(\x))$.

\textit{1st case: We consider first the case without bias ($b_j, c = 0$).}
In that case, by \Cref{ex:param-ReLU} we have $\phi(\x) \coloneqq (u_jv_j^\top)_{j=1}^r$ where we write: $\x = (U,V) \in \R^{n \times r} \times \R^{m \times r}$, and denote $u_j$ (resp. $v_j$) the columns of $U$ (resp. columns of $V$).
Here $d=rnm$ and it can be checked (see \Cref{ex:param-ReLU}) that
$$
C_{\x, x}^\top \coloneqq 
            \begin{pmatrix}
                \epsilon_1(x, \theta) A(x)   \\ \cdots \\ \epsilon_r(x, \theta) A(x)
            \end{pmatrix}
            \in \RR^{(rnm) \times n}
$$ 
where:
$$
A: x \in \R^m \mapsto A(x) \coloneqq \begin{pmatrix}
                x & 0 & \cdots & 0 \\ 0 & x & \cdots & 0 \\  \cdots & \cdots &  \cdots &  \cdots \\ 0 & \cdots & 0 & x 
            \end{pmatrix} \in \R^{(nm) \times n},
$$
and where $\epsilon_i(x, \theta) = \mathbb{1} (v_i^\top x > 0)$. 

For $j = 1, \cdots, r$ we denote:
$$
\mathcal{A}_j^+ \coloneqq \{  x \in \R^m: v_j^\top x > 0 \},
\quad\text{and}\quad
\mathcal{A}_j^- \coloneqq \{  x \in \R^m: v_j^\top x < 0 \}.
$$
The open Euclidean ball of radius $r>0$ centered at $c \in \R^m$ is denoted $B(c,r)$.

Consider a hidden neuron $i \in \{1, \cdots, r \}$ and denote $\mathcal{H}_i'  \coloneqq \mathcal{H}_i - \left(\bigcup_{j \neq i}  \mathcal{H}_j \right)$. Since the hyperplanes are pairwise distinct,   $\mathcal{H}_i' \neq \emptyset$ so we can consider an arbitrary $x' \in \mathcal{H}'_i$.
Given any $\eta > 0$, by continuity of $x \in \R^m \mapsto (v_1^\top x, \cdots, v_r^\top x) \in \R^r$, there exists $x_\eta^+ \in B(x', \eta) \cap \mathcal{A}_i^+ $ and  $x_\eta^- \in B(x', \eta) \cap \mathcal{A}_i^-$ such that for all $j \neq i$, $\text{sign}(v_j^\top x_\eta^{\pm}) = \text{sign}(v_j^\top x')$. It follows that $x_\eta^\pm \in \mathcal{X}_{\x}$ (remember that $\mathcal{X}_\x$ is the complement of $\cup_j \mathcal{H}_j$).
As a consequence:
$$
\begin{pmatrix}
                0 \\ \cdots \\  0 \\  A(x')   \\ 0  \\\cdots  \\ 0 
            \end{pmatrix} 
            =
            \lim_{\eta  \to 0}
             \left(C_{\x, x_\eta^+}^\top -  C_{\x, x_\eta^-}^\top   \right)
            \in \overline{\underset{x \in \mathcal{X}_{\x}}{\linspan} \{  C_{\x, x} ^\top  \} } = \underset{x \in \mathcal{X}_{\x}}{\linspan} \{  C_{\x, x} ^\top \},
$$
where the nonzero line in the left-hand-side is the $i$-th, and we used that every finite-dimensional space is closed.

Moreover still by continuity of $x \in \R^m \mapsto (v_1^\top x, \cdots, v_r^\top x) \in \R^r$, there exists $\gamma > 0$, such that for $k = \{-2, -1, 1, 2 \}$, the vectors defined as:
$$
x_{k} \coloneqq x' + \gamma k v_i,
$$
satisfy for all $j \neq i$, $\text{sign}(v_j^\top x_{k}) = \text{sign}(v_j^\top x')$ and $v_i^\top x_k \neq 0$, so that $x_k \in \mathcal{X}_{\x}$ and we similarly obtain
$$
\begin{pmatrix}
                0 \\ \cdots \\  0 \\  \gamma A(v_i)  \\ 0  \\\cdots  \\ 0 
            \end{pmatrix}
            =
C_{\x, x_2}^\top -  C_{\x, x_1}^\top - \left( C_{\x, x_{-1}}^\top -  C_{\x, x_{-2}}^\top \right) \in \underset{x \in \mathcal{X}_{\x}}{\linspan} \{  C_{\x, x} ^\top  \}.
$$
As this holds for every $x' \in \mathcal{H}_i'$,  and since $\linspan\{v_i,\mathcal{H}'_i\} = \R^m$, we deduce that for any $x \in \R^m$
$$
\begin{pmatrix}
                0 \\ \cdots \\  0 \\  A(x)  \\ 0  \\\cdots  \\ 0 
            \end{pmatrix} \in \underset{x \in \mathcal{X}_{\x}}{\linspan} \{  C_{\x, x} ^\top  \}.
$$
As this holds for every hidden neuron $i = 1, \cdots, r$ it follows that for every $x^1, \cdots, x^r \in \R^m$
$$  
\begin{pmatrix}
                A(x^1) \\ \cdots \\  A(x^r)
            \end{pmatrix} \in \underset{x \in \mathcal{X}_{\x}}{\linspan} \{  C_{\x, x} ^\top  \}.
$$
Moreover, by definition of $A(\cdot)$, for each $x \in \R^m$ and each $w = (w_1, \cdots, w_n) \in \R^n$, we have
$$
A(x)w = \begin{pmatrix}
                w_1 x \\ \cdots \\  w_n x
            \end{pmatrix}  \in \R^{nm}.
$$
Identifying $\R^{nm}$ with $\R^{m \times n}$ and the above expression with $x w^\top$, we deduce that
$$
\underset{x \in \R^m, w \in \R^n}{\linspan}  A(x)w = \R^{nm}
$$
and we let the reader check that this implies
$$
\underset{x^1, \cdots, x^r \in \R^m, w \in \R^n}{\linspan}  \begin{pmatrix}
                A(x^1) \\ \cdots \\  A(x^r)
            \end{pmatrix} w
            =
\underset{x^1, \cdots, x^r \in \R^m, w \in \R^n}{\linspan}  \begin{pmatrix}
                A(x^1) w \\ \cdots \\  A(x^r)w
            \end{pmatrix} = \R^{rnm}.
$$
Thus, as claimed, we have
$$
 \underset{x \in \mathcal{X}_{\x}, w \in \R^n}{\linspan} \{  C_{\x, x} ^\top w \}  = \R^{rnm} = \R^d.
$$
\textit{2d case: General case with  
biases.}
 The parameter is $\x = (U,V,b, c) \in \R^{n \times r} \times \R^{m \times r} \times \R^r \times \R^n$ with $b = (b_i)_{i=1}^r$, where $b_i \in \R$ the bias of the $i$-the hidden neuron, and $c$ the output bias.
 
In that case, $d= rn(m+1)$ and one can check that the conditions of \Cref{as:main_assumption} hold with $\phi_{\mathtt{ReLU}}(\x) \coloneqq ((u_iv_i^\top, u_ib_i)_{i=1}^r, c)$ and $f^x(\phi) \coloneqq C_{\x,x}\phi$ where $C_{\x, x}$ is expressed as:
$$
C_{\x, x}^\top \coloneqq 
            \begin{pmatrix}
                \epsilon_1(x, \theta) A'(x)   \\ \cdots   \\ \epsilon_r(x, \theta) A'(x)  \\
                I_n
            \end{pmatrix} \in \R^{(rn(m+1) + n) \times n}
$$ 
where, denoting 
$\bar{x} = (x^\top, 1)^\top \in \R^{m+1}$, we defined

$$
A': x \in \R^m \mapsto A'(x) \coloneqq \begin{pmatrix}
                \bar{x} & 0 & \cdots & 0 \\ 0 & \bar{x} & \cdots & 0 \\  \cdots & \cdots &  \cdots &  \cdots \\ 0 & \cdots & 0 & \bar{x}
            \end{pmatrix} \in \R^{n(m+1) \times n},
$$
and  $\epsilon_i(x, \theta) \coloneqq \mathbb{1} (v_i^\top x + b_i > 0)$. 

Using the sets
$$
\mathcal{A}_j^+ \coloneqq \{  x \in \R^m: v_j^\top x+b_j > 0 \},
\quad\text{and}\quad
\mathcal{A}_j^- \coloneqq \{  x \in \R^m: v_j^\top x+b_j < 0 \},
$$
a reasoning analog to the case without bias allows to show that for each $i = 1, \cdots, r$:
 
$$
\underset{x \in \R^m}{\linspan}  \begin{pmatrix}
                0 \\ \cdots \\  0 \\  A'(x)  \\ 0  \\\cdots  \\ 0 
            \end{pmatrix} \in \underset{x \in \mathcal{X}_{\x}}{\linspan} \{  C_{\x, x} ^\top  \}
$$
so that, again, for every $x^1,\ldots, x^r \in \R^m$ we have
$$
\begin{pmatrix}
                A'(x^1) \\ \cdots \\  A'(x^r) \\ 0
            \end{pmatrix} \in \underset{x \in \mathcal{X}_{\x}}{\linspan} \{  C_{\x, x} ^\top  \}.
$$
As 
$
\begin{pmatrix}
                0 \\ \cdots \\  0 \\ I_n
            \end{pmatrix} \in \underset{x \in \mathcal{X}_{\x}}{\linspan} \{  C_{\x, x} ^\top  \}
$
too, we obtain that
$
\begin{pmatrix}
                A'(x^1) \\ \cdots \\  A'(x^r) \\ I_n
            \end{pmatrix} \in \underset{x \in \mathcal{X}_{\x}}{\linspan} \{  C_{\x, x} ^\top  \}.
$

Now, for each $x \in \R^m$ and $w = (w_1, \cdots, w_n) \in \R^n$, we have
$$
A'(x)w = \begin{pmatrix}
                w_1 \bar{x} \\ \cdots \\  w_n \bar{x}
            \end{pmatrix}  =   \begin{pmatrix}
                w_1 x \\ w_1 \\ \cdots \\  w_n x \\ w_n
            \end{pmatrix}\in \R^{n(m+1)}.
$$
Again, identifying the above expression with $w (x^\top,1) \in \R^{n \times (m+1)}$ it is not difficult to check that
$$
\underset{x \in \R^m, w \in \R^n}{\linspan}  A'(x)w = \R^{n (m+1)},
$$
and we conclude as before.

In both cases we established that $W^f_{\phi_{\mathtt{ReLU}}}(\x) = \Rd$. 
Finally by \Cref{prop:3} we obtain 
 $
 W_\x^g =
   \partial \phi_{\mathtt{ReLU}}(\x)^\top  W^{f}_{\phi_{\mathtt{ReLU}}(\x)} = \range(\partial \phi_{\mathtt{ReLU}}(\x)^\top)
   $.
\end{proof}
Combining \Cref{thm:LinearNetsAllF} and \Cref{thm:2layerReLUappendix} establishes \Cref{lemma:thm} \Cref{theorem:reformulation_pb} as claimed.
One can envision extensions of these results to deeper ReLU networks, using notations and concepts from \cite{stock_embedding_2022} that generalize observations from \Cref{ex:param-ReLU} to deep ReLU networks with biases.  
Given a feedforward network architecture of arbitrary depth, denote $\x$ the collection of all parameters (weights and biases) of a ReLU network on this architecture, and consider $\x \mapsto \phi_{\mathtt{ReLU}}(\x)$ the rescaling-invariant polynomial function of \cite[Definition 6]{stock_embedding_2022} and $C_{\x,x}$, the matrices of \cite[Corollary 3]{stock_embedding_2022} such that the output of the network with parameters $\x$, when fed with an input vector $x \in \R^m$, can be written $g(\x,x) = C_{\x,x} \phi(\x)$. From its definition in \cite[Corollary 3]{stock_embedding_2022}, given $x$, the matrix $C_{\x,x}$  only depends on $\x$ via the so-called {\em activation status} of the neurons in the network (cf \cite[Section 4.1]{stock_embedding_2022}).

\section{Proof of \Cref{lemma-stagnates} } \label{dimstagnates}
\begin{lemma} 
Given $\x \in \Theta$, if for a given $i$, $\vdim (W_{i+1}(\x')) = \vdim( W_i(\x))$ for every $\theta'$ in 
a neighborhood of $\x$, then for all $k \geq i$, we have $W_k(\x')= W_i (\x')$ for all $\x'$ in a neighborhood $\Omega$ of $\x$, where the $W_i$ are defined by \Cref{buildingliealgebra}. Thus $\lie(W)(\x')=  W_i (\x')$ for all $\x' \in \Omega$. In particular, the dimension of the trace of $\lie(W)$ is locally constant and equal to the dimension of $W_i(\x)$.
\end{lemma}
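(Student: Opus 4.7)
The plan is to prove the lemma by first showing that $W_i(\x') = W_{i+1}(\x')$ on a neighborhood of $\x$, and then propagating this equality to all higher indices $k \geq i$ by induction.

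First I would establish the base step. Since each $W_j$ is a linear subspace of smooth vector fields, the map $\x' \mapsto \vdim W_j(\x')$ is lower semi-continuous: if one picks vector fields $\chi^{(1)},\ldots,\chi^{(r)} \in W_j$ whose values at $\x$ form a basis of $W_j(\x)$, continuity of the determinant implies they remain pointwise independent on some neighborhood of $\x$. Applied to $W_i$, this gives $\vdim W_i(\x') \geq \vdim W_i(\x)$ in a neighborhood. Combined with the inclusion $W_i(\x') \subseteq W_{i+1}(\x')$ (which follows from $W_{i+1} = W_i + [W_0,W_i]$) and the hypothesis $\vdim W_{i+1}(\x') = \vdim W_i(\x)$, one obtains the chain of inequalities
\[
\vdim W_i(\x) \leq \vdim W_i(\x') \leq \vdim W_{i+1}(\x') = \vdim W_i(\x),
\]
hence $W_i(\x') = W_{i+1}(\x')$ on a neighborhood $\Omega$ of $\x$.

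The main step, and the technical heart of the argument, is the induction: assuming $W_k(\x') = W_i(\x')$ on $\Omega$ (possibly shrinking $\Omega$), show $W_{k+1}(\x') = W_i(\x')$. Since $W_{k+1} = W_k + [W_0,W_k]$, it suffices to prove $[W_0,W_k](\x') \subseteq W_i(\x')$ on $\Omega$. Given the constant dimension $r$ of $W_i(\x')$ on $\Omega$, I would choose vector fields $\chi^{(1)},\ldots,\chi^{(r)} \in W_i$ forming a pointwise basis of $W_i(\x')$ for every $\x' \in \Omega$ (shrinking $\Omega$ as needed). For any $\chi_1 \in W_0$ and $\chi_2 \in W_k$, the value $\chi_2(\x') \in W_k(\x') = W_i(\x')$ can be written $\chi_2(\x') = \sum_j a_j(\x')\chi^{(j)}(\x')$ with coefficients $a_j$ which are smooth in $\x'$ (they are obtained by inverting a smooth invertible matrix extracted from $[\chi^{(1)} \cdots \chi^{(r)}]$). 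A direct expansion of the Lie bracket yields
\[
[\chi_1,\chi_2](\x') = \sum_j a_j(\x')\,[\chi_1,\chi^{(j)}](\x') - \sum_j \bigl(\nabla a_j(\x')^\top \chi_1(\x')\bigr)\chi^{(j)}(\x').
\]
The second sum lies in $\linspan\{\chi^{(j)}(\x')\} = W_i(\x')$, while each $[\chi_1,\chi^{(j)}] \in [W_0,W_i] \subseteq W_{i+1}$, so $[\chi_1,\chi^{(j)}](\x') \in W_{i+1}(\x') = W_i(\x')$ by the base step. Hence $[\chi_1,\chi_2](\x') \in W_i(\x')$, completing the induction.

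Finally, since $\lie(W) = \bigcup_k W_k$ by Proposition~\ref{buildingliealgebra}, taking the union over $k \geq i$ gives $\lie(W)(\x') = W_i(\x')$ on $\Omega$, and in particular the trace dimension is locally constant and equal to $\vdim W_i(\x)$. The main obstacle is the Lie-bracket computation with smoothly varying coefficients in the induction step; the rest is bookkeeping with semi-continuity and inclusions.
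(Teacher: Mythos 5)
Your proposal is correct and follows essentially the same route as the paper's own proof: both establish $W_i(\x')=W_{i+1}(\x')$ on a neighborhood via a smoothly extendable pointwise basis of $W_i(\x)$, then propagate by induction using the same Lie-bracket expansion $[\chi_1,\chi_2]=\sum_j a_j[\chi_1,\chi^{(j)}]-\sum_j\bigl((\partial a_j)\chi_1\bigr)\chi^{(j)}$ with smoothly varying coefficients $a_j$. The only cosmetic difference is that you phrase the base step through lower semi-continuity of $\vdim W_i(\cdot)$ and a dimension-count squeeze, whereas the paper argues directly that the basis of $W_i(\x)=W_{i+1}(\x)$ remains a basis of $W_{i+1}(\x')$ nearby; these are equivalent.
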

\begin{proof} The result is obvious for $k=i$. The proof is by induction on $k$ starting from $k=i+1$. 
We denote $m:= \dim (W_i(\x))$.

\textit{1st step: Initialization $k=i+1$.}
By definition of the spaces $W_i$ (cf \Cref{buildingliealgebra}) we have $W_i \subset W_{i+1}$ hence $W_i(\x) \subseteq W_{i+1}(\x)$. Since $\vdim  (W_{i+1} (\x)) = \vdim (W_i (\x)) = m$, it follows that there exists $\chi_1, \cdots, \chi_m \in W_i$ such that $\underset{j}{\linspan} \chi_j(\x) = W_i(\x) = W_{i+1}(\x)$ (hence the $m$ vectors $(\chi_1(\x), \cdots, \chi_m(\x))$ are linearly independent). Since each $\chi_j$ is smooth, it follows that $(\chi_1(\x'), \cdots \chi_m(\x'))$ remain linearly independent on some neighborhood $\Omega$ of $\x$, which we assume to be small enough to ensure $\dim W_{i+1}(\x') = m$ for all $\x' \in \Omega$. As $\chi_j \in W_i \subset W_{i+1}$, we obtain that for each $\x' \in \Omega$, the family $\{\chi_j(\x')\}_{j=1}^m$ is a basis of the $m$-dimensional subspace $W_{i+1}(\x')$, hence:
\begin{equation} \label{initialization}
    W_i(\x') \subset W_{i+1} (\x') = \linspan_j {\chi_j(\x')} \subset  W_i(\x'), \quad \forall \x' \in \Omega
\end{equation} 
\textit{2nd step: Induction. } We assume  $W_k(\x') = W_i(\x')$ on $\Omega$. Let us show that $W_{k+1}(\x') = W_i (\x')$ on $\Omega$.
Since $W_{k+1} \coloneqq W_k + [W_0, W_k]$ it is enough to show that $[W_0, W_k](\x') \subseteq W_i(\x')$ on $\Omega$. For this, considering two vector fields, $f \in W_0$ and $\chi \in {W_k}$, we will show that $[f, \chi](\x') \in W_{i+1}(\x')$ for each $\x' \in \Omega$. In light of \eqref{initialization}, this will allow us to conclude. 

Indeed, from the induction hypothesis we know that $ W_{k} (\x') = \linspan_j \chi_j(\x') = W_i(\x')$ on $\Omega$, hence for each $\x' \in \Omega$ there are coefficients $a_j(\x')$ such that $\chi(\x') = \sum_{j=1}^{m} a_j(\x') \chi_j(\x')$. Standard linear algebra shows that these coefficients depend smoothly on $\chi(\x')$ and $\chi_j(\x')$, which are smooth functions of $\x'$, hence the functions $a_j(\cdot)$ are smooth. By linearity of the Lie bracket and of $W_{i+1}(\x')$ it is enough to show that $[f,a_j\chi_j](\x') \in W_{i+1}(\x')$ on $\Omega$ for each $j$. Standard calculus yields

\begin{align*}
[f,a_j\chi_j] &=  
(\partial f) (a_j \chi_j) - \underbrace{\partial (a_j\chi_j)}_{=\chi_j \partial a_j+a_j\partial \chi_j} f
=
a_j [(\partial f) \chi_j-(\partial \chi_j)f]-\chi_j (\partial a_j)f\\
&=
a_j [f,\chi_j] - [(\partial a_j)f] \chi_j
\end{align*}
since $(\partial a_j) f$ is scalar-valued (consider the corresponding dimensions). Since $f \in W_0$ and $\chi_j \in W_i$, by definition of $W_{i+1}$ (cf \Cref{buildingliealgebra}) we have $[f,\chi_j],\chi_j \in W_{i+1}$ hence by linearity we conclude that $[f,a_j\chi_j](\x') \in W_{i+1}(\x')$. As this holds for all $j$, we obtain $[f,\chi](\x') \in W_{i+1}(\x')$. As this is valid for any $f \in W_0$, $\chi \in W_k$ this establishes $[W_0,W_k](\x') \subseteq W_{i+1}(\x')\stackrel{\eqref{initialization}}{=}W_i(\x')$ and we conclude as claimed that $W_i(\x') \subseteq W_{k+1}(\x')= W_k(\x')+[W_0,W_k](\x') \subseteq W_i(\x')$ on $\Omega$.
\end{proof}

\section{Proof of \Cref{mainthm}} \label{appendix_main_theorem}
We recall first the fundamental result of Frobenius using our notations (See Section 1.4 of \cite{isidori}). When we refer to a ``non-singular distribution'', it implies that the dimension of the associated trace remains constant (refer to the definition of ``non-singular'' on page 15 of \cite{isidori}). Being ``involutively consistent'' directly relates to our second assertion using the Lie bracket (see equation 1.13 on page 17 of \cite{isidori}). Lastly, ``completely integrable'' aligns with our first assertion regarding orthogonality conditions (refer to equation 1.16 on page 23 of \cite{isidori}).

\begin{theorem}[Frobenius theorem] \label{frobenius}
    Consider $W 
    \subseteq \vf$, and assume
    that the dimension of $W(\x)$ is constant on $\ouv \subseteq \RD$. 
    Then the two following assertions are equivalent:
\begin{enumerate}[leftmargin=0.5cm]
    \item each $\x \in \ouv$ admits a neighborhood $\Omega'$ such that there exists $D-\vdim (W(\x))$ independent conserved functions through $W_{|\Omega'}$;
    \item the following property holds:
    \eq{
        \quad \quad [u,v](\x) 
        \in W(\x),\quad
        \text{for each}\  u, v \in W,\ \x \in \ouv
        \label{eq:frob-crochets-app}
    }
\end{enumerate}
\end{theorem}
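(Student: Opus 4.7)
The plan is to prove the two implications separately, with the forward implication (1) $\Rightarrow$ (2) being a fairly direct consequence of \Cref{tracecharact} combined with elementary calculus, while the reverse implication (2) $\Rightarrow$ (1) requires the classical "straightening" construction which is the core content of Frobenius' theorem.

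For (1) $\Rightarrow$ (2), I would proceed as follows. Fix $\x \in \ouv$ and let $\Omega' \ni \x$ together with independent conserved functions $h_1,\ldots,h_{D-k}$ (where $k = \vdim W(\x)$) be given. By \Cref{tracecharact}, for every $\x' \in \Omega'$ and every $\chi \in W$ we have $\partial h_i(\x')\chi(\x') = 0$. Stacking these gradients as the rows of $H := (h_1;\ldots;h_{D-k})$, this says $\partial H(\x')\chi(\x')=0$ on $\Omega'$, so $W(\x') \subseteq \ker \partial H(\x')$; by the dimension hypothesis and independence, equality holds. Now pick any $u,v \in W$. Differentiating the identity $\partial H(\x')u(\x')=0$ in the direction $v(\x')$ yields $\partial^2 H(\x')[v(\x'),u(\x')] + \partial H(\x')\partial u(\x')v(\x')=0$, and symmetrically swapping $u,v$. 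Subtracting the two and using symmetry of the Hessian gives $\partial H(\x')\bigl(\partial u(\x')v(\x') - \partial v(\x')u(\x')\bigr) = \partial H(\x')[u,v](\x') = 0$, so $[u,v](\x') \in \ker \partial H(\x') = W(\x')$, which is exactly (2) on $\Omega'$; since $\x$ was arbitrary this gives (2) on $\ouv$.

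For (2) $\Rightarrow$ (1), the plan is the classical straightening argument by induction on $k=\vdim W(\x)$. Since $\vdim W(\cdot)$ is constant on $\ouv$, on a neighborhood $\Omega'$ of any $\x_0 \in \ouv$ one can pick smooth vector fields $u_1,\ldots,u_k \in W$ that pointwise form a basis of $W(\x')$. By a standard local straightening (the flow box theorem), after a $\mathcal{C}^\infty$ change of coordinates one may assume $u_1 = \partial/\partial \x_1$. Using the involutivity assumption (2), any $u_j$ can then be modified within $W$ (by subtracting an appropriate multiple of $u_1$ coming from its first coordinate component) so that the resulting $\tilde u_j$ satisfies $[\partial/\partial \x_1, \tilde u_j] \in W(\x')$ and has no $\partial/\partial \x_1$ component; a short calculation with the Lie bracket formula $[\partial/\partial\x_1, \tilde u_j] = \partial \tilde u_j / \partial \x_1$ then forces $\tilde u_j$ to be independent of $\x_1$ on a possibly smaller neighborhood. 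Restricting to the slice $\{\x_1 = \text{const}\}$ reduces the problem to dimension $D-1$ with $k-1$ vector fields, and the induction hypothesis provides coordinates straightening the whole distribution. Pulling back yields local coordinates $(\x_1,\ldots,\x_D)$ in which $W(\x') = \linspan(\partial/\partial \x_1,\ldots,\partial/\partial \x_k)$; the last $D-k$ coordinate functions $\x_{k+1},\ldots,\x_D$ are then smooth, have pointwise independent gradients, and by \Cref{tracecharact} are conserved through $W_{|\Omega'}$, giving (1).

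The main obstacle is the reverse direction, specifically the inductive straightening: one must carefully check that the involutivity condition (2) is the \emph{exact} algebraic ingredient needed to make the modified fields $\tilde u_j$ lose their $\x_1$-dependence after straightening $u_1$, and that this modification keeps them inside $W$ so that the induction hypothesis applies to the reduced system. The other subtle point is verifying that the constant-dimension assumption transfers to the reduced system on the slice, which is where the non-singularity hypothesis is crucial. Since the statement explicitly invokes the standard Frobenius theorem from \cite{isidori}, in the final write-up I would reference the classical proof there rather than reproducing every step, but the above sketch shows the structural role of each hypothesis.
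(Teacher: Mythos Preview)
The paper does not actually prove \Cref{frobenius}: it is stated as a classical result and simply referenced to Section~1.4 of \cite{isidori}, with a dictionary matching the paper's terminology (``constant dimension'' $\leftrightarrow$ non-singular distribution, condition~\eqref{eq:frob-crochets-app} $\leftrightarrow$ involutive, assertion~(1) $\leftrightarrow$ completely integrable). So there is nothing to compare your argument against in the paper itself; you are doing strictly more than the paper does here.

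That said, since you did write a sketch, a couple of comments on its content. Your direction (1)~$\Rightarrow$~(2) is clean and correct: the identity $W(\x')=\ker\partial H(\x')$ from the dimension count, followed by differentiating $\partial H\cdot u=0$ along $v$ and subtracting the symmetric expression, is exactly the right computation. For (2)~$\Rightarrow$~(1), the inductive flow-box strategy is the standard one, but two phrasings are slightly off. First, ``modified within $W$'' is not quite accurate: subtracting a \emph{function} multiple of $u_1$ takes you out of the linear space $W$, though of course $\tilde u_j(\x')$ remains in the trace $W(\x')$, which is all that matters. Second, the claim that involutivity ``forces $\tilde u_j$ to be independent of $\x_1$'' overstates what the computation gives: from $[\partial/\partial\x_1,\tilde u_j]=\partial\tilde u_j/\partial\x_1\in W(\x')$ and the vanishing first component you only deduce that the $(k-1)$-dimensional \emph{span} of $\tilde u_2,\ldots,\tilde u_k$ (viewed in the last $D-1$ coordinates) is invariant along $\x_1$, not that each individual $\tilde u_j$ is. That invariance is precisely what lets you pass to the slice and apply the induction hypothesis, so your overall plan is sound; just tighten that sentence if you keep the sketch, or, as you yourself suggest, defer to \cite{isidori} as the paper does.
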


\begin{proposition} \label{remarkFro}
Under the assumption that $\vdim (W(\x))$ is locally constant on $\Omega$, Condition~\eqref{eq:frob-crochets-app} of Frobenius Theorem holds if, and only if, the linear space $W' \coloneqq \{ \chi \in \vf, \forall \x \in \ouv: \chi(\x) \in W(\x) \}$ (which is a priori infinite-dimensional) is a Lie algebra.
\end{proposition}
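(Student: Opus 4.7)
The plan is to verify both implications separately, with the backward direction relying crucially on the local constant dimension hypothesis via a smooth local frame argument.

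First observe that by definition of the trace $W(\theta) = \linspan\{\chi(\theta) : \chi \in W\}$, every $\chi \in W$ satisfies $\chi(\theta) \in W(\theta)$ for all $\theta \in \Omega$, so $W \subseteq W'$. The easy direction is then immediate: if $W'$ is a Lie algebra and $u,v \in W \subseteq W'$, then $[u,v] \in W'$, which by definition of $W'$ means $[u,v](\theta) \in W(\theta)$ for every $\theta \in \Omega$; this is exactly~\eqref{eq:frob-crochets-app}.

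For the converse, assume~\eqref{eq:frob-crochets-app} and take $u, v \in W'$. We must show $[u,v] \in W'$, i.e., $[u,v](\theta) \in W(\theta)$ for all $\theta \in \Omega$. Fix $\theta_0 \in \Omega$ and set $m \coloneqq \dim W(\theta_0)$. By the local constant dimension assumption, $\dim W(\theta) = m$ on a neighborhood of $\theta_0$. Choose $\chi_1,\ldots,\chi_m \in W$ whose values at $\theta_0$ form a basis of $W(\theta_0)$; by smoothness and the constant rank, there is a neighborhood $U$ of $\theta_0$ on which $(\chi_1(\theta),\ldots,\chi_m(\theta))$ remains a basis of $W(\theta)$. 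Since $u(\theta), v(\theta) \in W(\theta)$ on $U$, standard linear algebra (as already used in the proof of \Cref{dimstagnates}) gives smooth functions $a_i, b_j$ on $U$ with $u = \sum_i a_i \chi_i$ and $v = \sum_j b_j \chi_j$ on $U$.

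Using the Leibniz-type identity $[f X, g Y] = f g [X,Y] + f(X\cdot g) Y - g(Y\cdot f) X$ for smooth scalar functions $f,g$ and smooth vector fields $X,Y$, we expand on $U$:
\begin{equation*}
[u,v] = \sum_{i,j} a_i b_j [\chi_i,\chi_j] + \sum_{i,j} a_i (\chi_i \cdot b_j)\,\chi_j - \sum_{i,j} b_j (\chi_j \cdot a_i)\,\chi_i.
\end{equation*}
By hypothesis~\eqref{eq:frob-crochets-app}, $[\chi_i,\chi_j](\theta) \in W(\theta)$ for every $\theta \in \Omega$, so the first sum is pointwise in $W(\theta)$. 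The remaining two sums are pointwise linear combinations of the $\chi_k(\theta)$, hence also in $W(\theta)$. Thus $[u,v](\theta) \in W(\theta)$ for all $\theta \in U$. Since $\theta_0 \in \Omega$ was arbitrary, this holds on all of $\Omega$, so $[u,v] \in W'$. Being a vector subspace closed under the Lie bracket, $W'$ is a Lie subalgebra of $\mathcal{X}(\Omega)$.

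The main subtlety is the backward direction: one must pass from the pointwise condition on elements of $W$ to a pointwise condition on elements of $W'$, which are a priori arbitrary smooth vector fields whose values happen to lie in $W(\theta)$. The constant dimension hypothesis is precisely what allows a local smooth frame $(\chi_i)$ for $W(\cdot)$, and the Leibniz identity then isolates the $[\chi_i,\chi_j]$ terms (to which the hypothesis applies) from scalar multiples of the frame (which automatically lie in $W(\theta)$).
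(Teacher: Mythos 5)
Your proof is correct and follows essentially the same approach as the paper: the easy direction is identical, and for the converse you use the constant-dimension hypothesis to produce a local smooth frame $(\chi_i)$ for $W(\cdot)$ drawn from $W$, expand $[u,v]$ via the Leibniz rule, and observe that the $[\chi_i,\chi_j]$ terms lie in $W(\theta)$ by hypothesis while the remaining terms are pointwise combinations of the frame. The only cosmetic difference is that the paper reuses the frame construction from the proof of Lemma~\ref{lemma-stagnates} and expands $[a_i\chi_i, b_j\chi_j]$ term by term, whereas you invoke the full Leibniz identity at once.
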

\begin{proof}
$\Leftarrow$ If $W'$ is a Lie algebra, then as $W \subset W'$ we get: for all  $u, v \in W \subset W', [u, v] \in W'$. Given the definition of $W'$ this means that \eqref{eq:frob-crochets-app} is satisfied. 

$\Rightarrow$ Assuming now that \eqref{eq:frob-crochets-app} holds, we prove that $W'$ is a Lie algebra.
For this, given $X, Y \in W'$ we wish to show that $[X,Y](\x) \in W(\x)$ for every $\x \in \Omega$.

Given $\x \in \Omega$, we first reason as in the first step of the proof of \Cref{lemma-stagnates} to obtain the existence of a neighborhood $\Omega'$ of $\x$ and of $m \coloneqq \vdim (W(\x'))$ vector fields
$\chi_1, \cdots, \chi_m \in W$ such that  
$(\chi_1(\x'), \cdots, \chi_m(\x'))$ is a basis of $W(\x')$ for each $\x' \in \Omega$. 
By definition of $W'$ we have $X(\x') \in W(\x')$ and $Y(\x') \in W(\x')$ for every $\x' \in \Omega'$. Thus, there are smooth functions $a_j,b_j$ such that $X(\cdot) = \sum_{1}^{m} a_i(\cdot) \chi_i(\cdot)$ and $Y(\cdot) = \sum_{1}^{m} b_i(\cdot) \chi_i(\cdot)$ on $\Omega'$, and we deduce by bilinearity of the Lie brackets that 
$[X, Y](\x') =  \sum_{i, j} [a_i \chi_i, b_j \chi_j](\x')$ on $\Omega'$. Since $W(\x)$ is a linear space, we will conclude that $[X,Y](\x) \in W(\x)$ if we can show that $[a_i \chi_i, b_j \chi_j](\x) \in W(\x)$. Indeed, we can compute
\begin{align*}
     [a_i \chi_i, b_j \chi_j]
     &= a_ib_j [\chi_i, \chi_j]
     +
     b_j [(\partial a_i) \chi_j] \chi_i - a_i [(\partial b_j) \chi_i] \chi_j
\end{align*}
   where, due to dimensions, both $(\partial a_i) \chi_j$ and $(\partial b_j) \chi_i$ are smooth scalar-valued functions. 
   By construction of the basis $\{\chi_j\}_j$ we have $\chi_i(\x), \chi_j(\x) \in W(\x)$, and by assumption~\eqref{eq:frob-crochets-app} we have $[\chi_i, \chi_j](\x) \in W(\x)$, hence we conclude that $[X, Y](\x) \in W(\x)$. Since this holds for any choice of $X,Y \in W'$, this establishes that $W'$ is a Lie algebra.
  \end{proof}

\begin{theorem}
If $\vdim (\lie(W_\phi)(\x))$ is locally constant then each $\x \in \ouv$ has a neighborhood $\Omega'$ such that there are $D-\vdim (\lie(W_\phi)(\x))$ (and no more) independent conserved functions through ${W_\phi}_{|\Omega'}$.
\end{theorem}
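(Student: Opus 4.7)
Write $m \coloneqq \vdim(\lie(W_\phi)(\x))$. The goal is to produce exactly $D - m$ independent conservation laws of $\phi$ on a suitable neighborhood $\Omega'$ of $\x$. The proof splits into an existence part (constructing $D-m$ laws) and a maximality part (ruling out any further independent ones), each relying on \Cref{frobenius} applied to the Lie algebra $\lie(W_\phi)$ rather than directly to $W_\phi$ --- this is necessary because $W_\phi$ itself need not satisfy the Frobenius bracket condition, as illustrated by \Cref{ex-contre-ex-frobenius}.

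\textbf{Existence via Frobenius on $\lie(W_\phi)$.} By construction (\Cref{buildingliealgebra}), $\lie(W_\phi)$ is a Lie algebra inside $\mathcal{X}(\Theta)$, so for any $u, v \in \lie(W_\phi)$ one has $[u,v] \in \lie(W_\phi)$, and in particular $[u,v](\x') \in \lie(W_\phi)(\x')$ at every point. Combined with the hypothesis that $\vdim(\lie(W_\phi)(\x'))$ is locally constant, this is exactly the Frobenius condition~\eqref{eq:frob-crochets-app} for $W \coloneqq \lie(W_\phi)$. A first application of \Cref{frobenius} then yields a neighborhood $\Omega'$ of $\x$ on which there exist $D - m$ independent functions conserved through $\lie(W_\phi)_{|\Omega'}$. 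Since $W_\phi \subseteq \lie(W_\phi)$, these functions are \emph{a fortiori} conserved through ${W_\phi}_{|\Omega'}$, supplying the lower bound.

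\textbf{Maximality via bracket-closure and a second use of Frobenius.} The key pointwise identity is: if $h \in \mathcal{C}^2$ satisfies $\partial h \cdot u \equiv 0$ and $\partial h \cdot v \equiv 0$ for two smooth vector fields $u, v$, then $\partial h \cdot [u,v] \equiv 0$. Indeed, differentiating $\partial h \cdot u \equiv 0$ along $v$ gives $v^\top (\partial^2 h)\, u + \partial h \cdot (\partial u) v = 0$, and symmetrically swapping $u$ and $v$; together with~\eqref{eq:def-lie-brac} and the symmetry of the Hessian $\partial^2 h$, this yields
\begin{equation*}
\partial h \cdot [u,v] \;=\; \partial h \cdot (\partial u) v - \partial h \cdot (\partial v) u \;=\; - v^\top (\partial^2 h)\, u + u^\top (\partial^2 h)\, v \;=\; 0.
\end{equation*}
Iterating this along the inductive construction $W_k = W_{k-1} + [W_0, W_{k-1}]$, every $\mathcal{C}^2$ function $h$ conserved through ${W_\phi}_{|\Omega'}$ satisfies $\nabla h(\x') \perp \lie(W_\phi)(\x')$ on $\Omega'$. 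Thus any family of independent conservation laws $h_1, \ldots, h_N$ provides $N$ pointwise linearly independent gradients inside the $(D-m)$-dimensional orthogonal complement of $\lie(W_\phi)(\x')$, forcing $N \leq D - m$. A second invocation of \Cref{frobenius} (its non-existence direction applied to $\lie(W_\phi)$) confirms that this bound cannot be exceeded, closing the argument.

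\textbf{Anticipated obstacle.} The subtlest point is that $\lie(W_\phi)$ may be infinite-dimensional as a subspace of $\mathcal{X}(\Theta)$, whereas Frobenius is classically phrased for finite-dimensional distributions. The saving grace is that only the finite-dimensional trace $\lie(W_\phi)(\x')$ enters both the Frobenius condition and the conclusion; after extracting a local basis of $\lie(W_\phi)(\x')$ made of finitely many vector fields in $\lie(W_\phi)$ (along the lines of the proof of \Cref{lemma-stagnates}), one recovers a standard finite-dimensional Frobenius setting. A secondary technicality is that the bracket-propagation step exploits Schwarz's symmetry of $\partial^2 h$, so $\mathcal{C}^2$ regularity of the laws and $\mathcal{C}^\infty$ regularity of the vector fields in $W_\phi$ are essential --- which matches precisely the regularity assumptions adopted in the paper.
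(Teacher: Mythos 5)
Your existence argument mirrors the paper's: $\lie(W_\phi)$ is a Lie algebra by construction, so it satisfies~\eqref{eq:frob-crochets-app} automatically, and together with the local constancy of $\vdim(\lie(W_\phi)(\x))$ a single application of \Cref{frobenius} produces $D-m$ independent conserved functions through $\lie(W_\phi)$, hence a fortiori through the smaller $W_\phi$. For maximality, however, you take a genuinely different and more elementary route. The paper argues by contradiction: given $D-k$ independent conservation laws $h$ with $k<m$, it forms the distribution $W$ of smooth vector fields lying pointwise in $\mathrm{ker}\,\partial h$, applies \Cref{frobenius} together with \Cref{remarkFro} in the reverse direction to show $W$ is a Lie algebra, observes ${W_\phi}_{|\Omega'}\subseteq W$ hence $\lie(W_\phi)_{|\Omega'}\subseteq W$, and contradicts $\vdim W(\x_0)=k$. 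You instead show \emph{directly} --- via the Schwarz-symmetry computation, which is correct --- that $\nabla h\perp W_\phi$ on $\Omega'$ forces $\nabla h\perp[u,v]$ for $u,v\in W_\phi$, and, by induction along the filtration $W_k=W_{k-1}+[W_0,W_{k-1}]$ of \Cref{buildingliealgebra}, that $\nabla h(\x')\perp\lie(W_\phi)(\x')$ everywhere; the bound $N\le D-m$ is then a pointwise linear-algebra count and needs no second appeal to Frobenius (your closing remark about a ``second invocation'' is vestigial --- \Cref{frobenius} as stated has no separate non-existence direction, and your count already closes the argument). What your route buys is transparency and a short, self-contained reduction to linear algebra; what it costs is regularity: the bracket-propagation step needs $h\in\mathcal{C}^2$, while the paper's framework (\Cref{tracecharact}, \Cref{equiv}, the definition of independence) is formulated at $\mathcal{C}^1$ level, so your claim that the $\mathcal{C}^2$ requirement ``matches precisely'' the paper is slightly overstated --- strictly speaking you establish maximality only within the class of $\mathcal{C}^2$ conservation laws.
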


\begin{proof}
\textit{1st step: Existence of $\Omega'$ and of $D-\vdim (\lie(W_\phi)(\x))$ independent conserved functions.}
    Let $\x \in \ouv$. Since $\vdim (\lie(W_\phi)(\x))$ is locally constant there is a neighborhood $\ouv''$ of $\x$ on which it is constant. Since $W \coloneqq\lie(W_\phi)_{\mid \ouv''} \subseteq \mathcal{X}(\Omega'')$ is a Lie Algebra, by \Cref{remarkFro} and Frobenius theorem (\Cref{frobenius}) 
    there exists a neighborhood $\Omega' \subseteq \ouv''$ of $\x$ and $D-\vdim (W(\x))$ independent conserved functions through $W_{\mid \Omega'}$. As $W_\phi \subset \lie(W_\phi)$, these functions are (locally) conserved through $W_\phi$ too. We only need to show that there are no more conserved functions.

\textit{2nd step: There are no more conserved functions.}
    By contradiction, assume there exists $\x_0 \in \ouv$, an open neighborhood $\Omega'$ of $\x_0$, a dimension $k < \vdim (\lie(W_\phi)(\x_0))$, and a collection of $D-k$
    independent conserved functions through $W_\phi$, gathered as the coordinates of a vector-valued function
    $h \in \mathcal{C}^1(\Omega', \mathbb{R}^{D-k})$.
        Consider $W \coloneqq \{ X \in \mathcal{X}(\Omega'), \forall \x \in \Omega', X(\x) \in \mathrm{ker} \partial h(\x) \}$. 
        By the definition of independent conserved functions, the rows of the $(D-k) \times D$ Jacobian matrix
    $\partial h(\x)$ are linearly independent
    on $\Omega'$,  and the dimension of $W(\x) = \mathrm{ker}\partial h(\x)$ is constant and equal to $k$ on $\Omega'$. 
       By construction of $W$ and \Cref{tracecharact}, the $D-k$ coordinate functions of $h$ are independent conserved functions through $W$. Thus, by Frobenius Theorem (\Cref{frobenius}) and Proposition \ref{remarkFro}, $W$ is a Lie algebra. 
    By \Cref{tracecharact} we have $W_\phi (\x) = \range \partial \phi(\x)^\top\subset \mathrm{ker} \partial h(\x)$ on $\Omega'$, hence ${W_\phi}_{|\Omega'} \subset W$, and therefore $\lie(W_\phi)_{|\Omega'} = \lie({W_\phi}_{|\Omega'}) \subset W$. 
    In particular:
    $\lie(W_\phi)(\x_0) \subset W(\x_0)$, which leads to the claimed contradiction that $\vdim (\lie(W_\phi)(\x_0))  \leq \vdim (W(\x_0)) = k$.
\end{proof}

\section{Proofs of the Examples of \Cref{subsection-method} and additional example} \label{annex:ExamplesSection3}
\subsection{Proof of the result given in \Cref{shallowNN}} \label{appA1}

\begin{proposition}\label{prop:shallowNNnobias}
Consider $\x = (U,V) \in \R^{n\times r} \times \R^{m \times r}$, $\phi$, and $\Omega \subseteq \RD$, $D = (n+m)r$, as in \Cref{shallowNN}. 
The dimension of $W_\phi(\x)$ is constant and equal to $(n+m-1)r$ and $W_\phi$ verifies condition~\eqref{eq:frob-crochets} of Frobenius Theorem (i.e. condition~\eqref{eq:frob-crochets-app} of~\Cref{frobenius}). 
\end{proposition}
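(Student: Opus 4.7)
The plan exploits the fact that $\phi(\x)=(u_i v_i^\top)_{i=1}^r$ is block-separable: its $i$-th output depends only on $(u_i,v_i)\in\R^n\times\R^m$ through the rank-one map $\psi:(u,v)\mapsto uv^\top$. Consequently, $\partial\phi(\x)$ is block-diagonal and $W_\phi(\x)=\range(\partial\phi(\x)^\top)$ splits as a direct sum of one contribution per block. To compute the dimension of each block, I would compute the kernel of the differential $\mathrm{d}\psi_{(u,v)}:(\delta u,\delta v)\mapsto\delta u\, v^\top+u\,\delta v^\top$. From column-wise inspection, on $\Omega$ (where $u_i,v_i\neq 0$) this kernel is exactly the line $\{(\alpha u,-\alpha v):\alpha\in\R\}$, so each block contributes $n+m-1$ to the rank, and $\vdim W_\phi(\x)=r(n+m-1)$ is constant on $\Omega$.

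For the Frobenius condition, I would first describe elements of $W_\phi$ explicitly. Collecting coefficients in the basis $\{\nabla\phi_{abi}\}$, every $\chi\in W_\phi$ reads $\chi(\x)=((A_i v_i,A_i^\top u_i))_{i=1}^r$ for constant matrices $A_i\in\R^{n\times m}$. Because the $i$-th block depends only on $(u_i,v_i)$, all cross-block contributions $[\chi^{(i)},\chi^{(j)}]$ with $i\neq j$ vanish directly from the Lie bracket formula \eqref{eq:def-lie-brac}, so verifying $[W_\phi,W_\phi](\x)\subseteq W_\phi(\x)$ reduces to a one-block check. A direct calculation within a single block then gives $[\chi_A,\chi_B](u,v)=\bigl((AB^\top-BA^\top)u,\,(A^\top B-B^\top A)v\bigr)$.

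The last step is to recognize $W_\phi(u,v)=\{(a,b)\in\R^n\times\R^m:u^\top a=v^\top b\}$: the inclusion $\subseteq$ follows from $u^\top(Av)=v^\top(A^\top u)$, and equality follows by comparing dimensions ($n+m-1$ on both sides). Since $AB^\top-BA^\top$ is antisymmetric, $u^\top(AB^\top-BA^\top)u=0$, and similarly $v^\top(A^\top B-B^\top A)v=0$, so the bracket lies in $W_\phi(u,v)$, establishing \eqref{eq:frob-crochets}. The only real subtlety is bookkeeping: finding the right compact parametrization $(Av,A^\top u)$ of a generic element of $W_\phi$ and the scalar characterization $u^\top a=v^\top b$ of its trace, after which the antisymmetry argument makes the bracket inclusion immediate.
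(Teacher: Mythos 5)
Your proof is correct and takes a genuinely different (and arguably cleaner) route than the paper's. The paper reduces to $r=1$, works coordinate-by-coordinate with $\phi_{i,j}(\x)=u_i v_j$, computes explicit gradients $\nabla\phi_{i,j}=v_j e_i+u_i e_{n+j}$ and Hessians $\partial^2\phi_{i,j}=E_{j+n,i}+E_{i,j+n}$, and then runs a four-case verification of the bracket condition plus an explicit spanning argument to pin down the dimension. You instead (a) compute the rank directly as $D-\dim\ker d\psi$, using the easy observation that $\ker d\psi_{(u,v)}=\R\cdot(u,-v)$ on $\Omega$; (b) package each block of $W_\phi$ as linear vector fields $\chi_A(u,v)=(Av,A^\top u)$, i.e.\ $\chi_A(\x)=S_A\x$ with $S_A=\left(\begin{smallmatrix}0&A\\A^\top&0\end{smallmatrix}\right)$, so that the Lie bracket reduces to the matrix commutator $[S_A,S_B]=\left(\begin{smallmatrix}AB^\top-BA^\top&0\\0&A^\top B-B^\top A\end{smallmatrix}\right)$; and (c) characterize the trace intrinsically as $W_\phi(u,v)=\{(a,b):u^\top a=v^\top b\}$, after which closure under brackets is a one-line skew-symmetry argument. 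This is essentially the same linear-vector-field machinery the paper itself deploys in Appendix~F for the matrix factorization case ($\phi=UV^\top$), specialized to the rank-one blocks; your version buys a coordinate-free verification and avoids the case analysis, while the paper's version is more elementary and self-contained at the cost of some bookkeeping. Both are sound.
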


\begin{proof}
Denoting $u_i$ (resp. $v_i$) the columns of $U$ (resp. of $V$), for $\x \in \Omega$ we can write $\phi(\x) = (\psi(u_i, v_i))_{i = 1, \cdots r}$ with $\psi: (u \in \R^n - \{0\}, v \in  \R^m - \{ 0 \}) \mapsto uv^\top 
\in \R^{n\times m}$. 
As this decouples $\phi$ into $r$ functions each depending on a separate block of coordinates, Jacobian matrices and Hessian matrices are block-diagonal. Establishing condition~\eqref{eq:frob-crochets-app} of Frobenius theorem is thus equivalent to showing it for each block, which can be done by dealing with the case $r=1$. Similarly, $W_\phi(\x)$ is a direct sum of the spaces associated to each block, hence it is enough to treat the case $r=1$ (by proving that the dimension is $n+m-1$) to obtain that for any $r \geq 1$ the dimension is $r(n+m-1)$.

\textit{1st step: We show that $W_\phi$ satisfies condition~\eqref{eq:frob-crochets-app} of Frobenius Theorem.}
For $u \in \R^n - \{0\}$, $v \in \R^m - \{0\}$
we write $\x = (u;v) \in \RD = \R^{n+m}$ and $\phi_{i, j} (\x) \coloneqq u_i v_j$ for $i = 1, \cdots, n$ and $j = 1, \cdots, m$. Now $u_i$ and $v_j$ are \textit{scalars} (and no longer columns of $U$ and $V$).
Denoting $e_i\in \RD = \R^{n +m}$ the vector such that all its coordinates are null except the $i$-th one, we have:
    \begin{align*}
\nabla \phi_{i, j}(\x) &= v_j e_i + u_i e_{n + j} \in \RD,\\
\partial ^{2} \phi_{i, j}(\x) &= E_{j+n, i} + E_{i, j + n} \in \R^{D \times D},
\end{align*}
with $E_{i, j} \in \R^{ D \times D}$ 
the one-hot matrix with the $(i, j)$-th entry being $1$. 
Let $i, k \in \{1, \cdots, n \}$ and $j, l \in \{1, \cdots, m \}$. 

\textit{1st case: $(i, j) = (k, l)$}
Then trivially $
\partial ^{2} \phi_{i, j} (\x) \nabla \phi_{k, l} (\x) - \partial ^{2} \phi_{k, l} (\x) \nabla \phi_{i, j}(\x)  = 0.
$\\
\textit{2nd case: $( (i \neq  k)$ and $(j \neq l))$}
Then 
$$
[\nabla \phi_{i, j}, \nabla \phi_{k, l}](\x)  =  (E_{j+n, i} + E_{i, j + n}) (v_l e_k + u_k e_{n + l}) - (E_{l+n, k} + E_{k, l + n}) (v_j e_i + u_i e_{n + j}) = 0 - 0.
$$
\textit{3d case: $i = k$ and $j \neq l$.}
Then as $u \neq 0$, there exists $l' \in \{1, \cdots, n \}$ such that $u_{l'} \neq 0$.
\begin{align*}
\partial ^{2} \phi_{i, j} (\x) \nabla \phi_{k, l} (\x) - \partial ^{2} \phi_{k, l} (\x) \nabla \phi_{i, j}(\x) 
&=  v_l e_{n + j} - v_j e_{n +l}  \\
&= \frac{v_l}{u_{l'}} \nabla \phi_{l', j} (\x) - \frac{v_j}{u_{l'}} \nabla \phi_{l', l} (\x),  \\
&\in \text{span} \{ \nabla \phi_{i, j} (\x) \} = W_\phi(\x).
\end{align*}
\textit{4d case: ($(i \neq k)$ and $(j = l)$)}
We treat this case in the exact same way than the 3d case.

Thus $W_\phi$ verifies condition~\eqref{eq:frob-crochets} of Frobenius Theorem.

\textit{2d step: We show that $\vdim (W_\phi (\x) )= (n + m -1)$.}
As $u, v \neq 0$ each of these vectors has at least one nonzero entry. For simplicity of notation, and without loss of generality, we assume that $u_1 \neq 0$ and $v_1 \neq 0$.
It is straightforward to check that
 $(\nabla \phi_{1, 1} (\x), (\nabla \phi_{1, j}(\x))_{j =2, \cdots, m}, (\nabla \phi_{i, 1}(\x))_{i =2, \cdots, n })$ are $n+m-1$ linearly independent vectors. 
To show that $\vdim (W_\phi (\x)) = (n + m -1)$ is it thus sufficient to show that they span $W_\phi(\x)$. This is a direct consequence of the fact that, for any $i, j$, we have
\begin{align*}
   \nabla \phi_{i, j} (\x)  = v_j e_i + u_i e_{n + j}  
   &= \frac{v_j}{v_1} (v_1 e_i + u_i e_{n +1}) + \frac{u_i}{u_1}(u_1 e_{n + j} + v_j e_1) -  \frac{v_j u_i}{u_1 v_1} \left(u_1 e_{n +1} + v_1 e_1 \right), \\
   &= \frac{v_j}{v_1} \nabla \phi_{i, 1}(\x) + \frac{u_i}{u_1} \nabla \phi_{1, j}(\x) + \frac{v_j u_i}{u_1 v_1} \nabla \phi_{1, 1}(\x). \qedhere
\end{align*}
\end{proof}

\subsection{An additional example beyond ReLU} \label{appA2}

In complement to \Cref{shallowNN}, we give a simple example studying a two-layer network with a positively homogeneous activation function, which include the ReLU but also variants such as the leaky ReLU or linear networks.
\begin{example}[Beyond ReLU: Neural network with one hidden neuron with a positively homogeneous activation function of degree one] \label{exNN} 
Let $\sigma$ be a positively one-homogeneous activation function. 
In~\eqref{eq:elr-general}, this corresponds to setting $g(\x,x) = \sum_{i=1}^r u_i \sigma( \langle v_i, x\rangle ) \in \mathbb{R}$.  
Assuming $\langle v_i, x\rangle \neq 0$ for all $i$ to avoid the issue of potential non-differentiability at $0$ of $\sigma$ (for instance for the ReLU), and in particular assuming $v_i \neq 0$, the function minimized during training can be factored via $\phi(\x) = (\psi(u_i,v_i))_{i=1}^r$ where 
\begin{equation} \label{paramNN}
\x \coloneqq (u \in \R, v \in \R^{d-1}-\{0 \}) \overset{\psi}{\mapsto} (u \|v\|, v/ \|v\|) \in \R \times \mathcal{S}_{d-1} \subset \R^{d}.
\end{equation}

\begin{proposition}\label{prop:homogeneousNN}
Consider $d \geq 2$ and $\phi(\x) = (\psi(u_i,v_i))_{i=1}^r$ where $\psi$ is given by \eqref{paramNN} on $\ouv \coloneqq \{\x = (u \in \R^r,V = (v_1,\ldots,v_r) \in \R^{m \times r}): v_i \neq 0\}$. We have $\vdim (W_\phi(\x)) = r(d-1)$ and $W_\phi$ verifies condition~\eqref{eq:frob-crochets-app} of Frobenius Theorem (\Cref{frobenius}),  so each $\x=(u, V) \in \ouv$ admits a neighborhood $\Omega'$ such that there exists $r$ (and no more) conserved function through ${W_\phi}_{|\Omega'}$. 
\end{proposition}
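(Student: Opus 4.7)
The strategy is to mirror the structure of the proof of \Cref{prop:shallowNNnobias}: decouple the $r$ neurons, handle a single neuron via the positive rescaling symmetry, and conclude with \Cref{mainthm}.

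First, I would reduce to the case $r=1$. Since $\phi(\x) = (\psi(u_i, v_i))_{i=1}^r$ depends on $r$ disjoint coordinate blocks, each $\nabla \phi_k$ is supported in a single block and its Jacobian is block-diagonal. Consequently, $W_\phi(\x)$ splits as a direct sum indexed by $i$, and any Lie bracket of gradients coming from two different blocks vanishes identically (the cross derivatives are zero). Both the dimension formula $r(d-1)$ and the involutivity condition \eqref{eq:frob-crochets-app} therefore reduce to the single-neuron statements: $\vdim(W_\psi(\x)) = d-1$ and $[\chi_1,\chi_2](\x) \in W_\psi(\x)$ for all $\chi_1,\chi_2 \in W_\psi$.

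Next, I would exploit the rescaling invariance $\psi(u/\lambda, \lambda v) = \psi(u,v)$ valid for all $\lambda > 0$. Differentiating at $\lambda=1$ shows that the vector $Z(u,v) \coloneqq (-u, v)$ lies in $\ker \partial \psi(u,v)$; equivalently, $h(u,v) \coloneqq u^2 - \|v\|^2$, whose gradient $\nabla h = 2(u,-v)$ is parallel to $Z$, satisfies $\langle \nabla h(\x), \nabla \psi_k(\x)\rangle = 0$ for every component $\psi_k$. This already yields $W_\psi(\x) \subseteq (\nabla h(\x))^\perp$, hence $\vdim (W_\psi(\x)) \leq d-1$. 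For the matching lower bound I would exhibit an explicit basis on $\Omega$: $\nabla \psi_1 = (\|v\|,u\hat v)$ has nonzero $u$-component $\|v\|$, while for $k \geq 2$ the $v$-parts of $\nabla \psi_k$ are the columns of $\|v\|^{-1}(I-\hat v \hat v^\top)$ and span the $(d-2)$-dimensional hyperplane $\hat v^\perp$ in $\R^{d-1}$. This produces $d-1$ independent gradients in $W_\psi(\x)$ on all of $\Omega$ (including when $u=0$, where $\nabla \psi_1 = (\|v\|,0)$ still has a nonzero $u$-entry). Thus $W_\psi(\x) = (\nabla h(\x))^\perp$ pointwise.

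For involutivity the conservation law essentially does all the work: any $\chi \in W_\psi$ satisfies $\chi(h) \coloneqq \partial h \cdot \chi \equiv 0$ on $\Omega$, since $\chi(\x) \in W_\psi(\x) = (\nabla h(\x))^\perp$. The standard calculus identity, computed with the paper's convention \eqref{eq:def-lie-brac}, gives $\partial h \cdot [\chi_1,\chi_2] = \chi_2(\chi_1(h)) - \chi_1(\chi_2(h))$ for $C^2$ vector fields, and both terms on the right vanish. Hence $[\chi_1,\chi_2](\x) \perp \nabla h(\x)$, i.e., $[\chi_1,\chi_2](\x) \in W_\psi(\x)$. Reassembling the $r$ blocks establishes \eqref{eq:frob-crochets-app} for $W_\phi$ on $\Omega$. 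With $\vdim(W_\phi(\x)) = r(d-1)$ constant on $\Omega$ and $W_\phi$ involutive, \Cref{remarkFro} gives $\lie(W_\phi) = W_\phi$ locally so $\vdim (\lie(W_\phi)(\x))$ is locally constant, and \Cref{mainthm} delivers, around every $\x \in \Omega$, a neighborhood $\Omega'$ with exactly $D - r(d-1) = r$ independent conservation laws. The step that I expect to require the most care is the uniformity of the dimension lower bound across $\Omega$: the rescaling-based upper bound and the Frobenius verification are both immediate once $h$ is identified, but one must check that the explicit basis above is independent at \emph{every} point of $\Omega$, in particular at those with $u=0$.
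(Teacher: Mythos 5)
Your proposal is correct, and while it shares the outer skeleton of the paper's proof (decouple the $r$ coordinate blocks, treat a single neuron, conclude from constant rank plus involutivity via Frobenius), the core verification is genuinely different. The paper's proof of \Cref{prop:homogeneousNN} computes $\partial \phi(\x)$ explicitly (obtaining $\vdim(W_\phi(\x))=d-1$ and $W_\phi(\x)=\R \nabla \phi_0(\x) + (\R v)^{\perp}$ from the projector $P_v$), then computes the Hessians of $\phi_0,\ldots,\phi_{d-1}$ and checks condition~\eqref{eq:frob-crochets-app} bracket by bracket in a case analysis. You instead exploit the rescaling symmetry $\psi(u/\lambda,\lambda v)=\psi(u,v)$ to get $(-u,v)\in\ker \partial\psi(u,v)$, hence $W_\psi(\x)\subseteq (\nabla h(\x))^{\perp}$ for $h(u,v)=u^2-\|v\|^2$, match this with $d-1$ explicitly independent gradients (valid also at $u=0$ since $\|v\|\neq 0$), and then obtain involutivity for free from the derivation identity $\partial h\cdot[\chi_1,\chi_2]=\chi_2(\chi_1(h))-\chi_1(\chi_2(h))$, which you state with the correct sign for the paper's convention~\eqref{eq:def-lie-brac}: since every field of $W_\psi$ annihilates $h$ and $W_\psi(\x)$ is exactly the codimension-one kernel of $\partial h(\x)$, brackets stay in the distribution. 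This buys a computation-free involutivity proof (no Hessians, no case analysis) and makes transparent that the known law $u_i^2-\|v_i\|^2$ is precisely what makes the distribution integrable; the paper's computation, by contrast, is self-contained and does not presuppose knowing the conserved quantity in advance. One small wording fix for your conclusion: \Cref{remarkFro} does not by itself yield $\lie(W_\phi)=W_\phi$; rather, condition~\eqref{eq:frob-crochets} together with the constant dimension gives $W_1(\x)=W_0(\x)$ on $\ouv$, hence $\lie(W_\phi)(\x')=W_\phi(\x')$ locally by \Cref{lemma-stagnates} with $i=0$, after which \Cref{mainthm} (or \Cref{frobenius} directly, with the ``no more'' part coming from the bound $N\leq D-\vdim W_\phi(\x)$) delivers exactly $D-r(d-1)=rd-r(d-1)=r$ independent conservation laws, as you state.
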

As in \Cref{shallowNN}, such candidate functions are given by $h_i: (u_i, v_i) \mapsto u_i^2 - \|v_i\|^2$. A posteriori, these functions are in fact conserved through all $W_\phi$.
\end{example}
\begin{proof}[Proof of \Cref{prop:homogeneousNN}]
As in the proof of~\Cref{prop:shallowNNnobias} it is enough to prove the result for $r = 1$ hidden neuron.
   Note that here $D = d$. To simplify notations, we define $\phi_0, ..., \phi_{d-1}$ for $\x = (u,v)$ as:
$$
\phi_0(\x) = u \| v \|,
$$
and for $i = 1, ..., d-1$:
$$
\phi_i(\x) = v_i/ \| v \|.
$$

\textit{1st step: explicitation of $\text{span} \{ \nabla \phi_0, ..., \nabla \phi_{d-1}\} $}. 
We have
\[
\partial \phi(\x) =
\begin{pmatrix}
  \|v \| 
  & \rvline &   uv^\top / \|v \|  \quad \\
\hline
  0_{(d-1) \times 1} & \rvline &
  \begin{matrix}
   &  \\
   \frac{1}{\|v\|} P_v\\
   & 
  \end{matrix}
\end{pmatrix},
\]
where:
$
P_v := \mathrm{I}_{d-1} - v v^\top / \| v \|^2
$ is the orthogonal projector on $(\R v)^{\perp}$ (seen here as a subset of $\R^{d-1}$) and its rank is $d-2$. Thus $\vdim (W_\phi(\x)) = \mathtt{rank}(\partial \phi(\x))=d-1$ and $\text{span} \{ \nabla \phi_0, ..., \nabla \phi_{d-1}\} = \R \nabla \phi_0 + (\R v)^{\perp}$.

\textit{2d step: calculation of the Hessians.}

\textit{1st case: The Hessian of $\phi_i$ for $i \geq 1$.} 
In this case, $\phi_i$ does not depend on the first coordinate $u$ so we proceed as if the ambient space here was $\R^{d-1}$.
We have already that for $i \geq 1$:
$$
\nabla \phi_i(\x) = e_i / \| v \| - v_i v / {\| v \|}^3
$$
hence
$$
\partial ^{2} \phi_i = 3 v_i v v^\top / {\| v \|}^5 - 1 / {\| v \|}^3 \left( v_i \mathrm{I}_{d-1} + V_i + V_i^\top \right),
$$
where all columns of matrix $V_i := (0, ..., v, 0, ...,0)$ are zero except the $i$-th one, which is set to $v$.

\textit{2d case: The Hessian of $\phi_0$}.
Since
$$
\nabla \phi_0 (\x) = {\left(\| v \|, u v^\top/ \| v \| \right)}^\top.
$$
we have 
\[
\partial^2 \phi_0(\x) =
\begin{pmatrix}
  0
  & \rvline &   v^\top / \|v \|  \quad \\
\hline
  v / \|v \| & \rvline &
  \begin{matrix}
   &  \\
   \frac{u}{\|v\|} P_v \\
   & 
  \end{matrix}
\end{pmatrix}.
\]

\textit{3rd step: Conclusion.} 

\textit{1st case: $i, j \geq 1$ and $i \neq j$.} We have:
\begin{align*}
&\partial ^{2} \phi_i(\x) \nabla \phi_j (\x)- \partial ^{2} \phi_j(\x) \nabla \phi_i(\x), \\
&=v_j /  {\|v\|}^4 e_i -   v_i/  {\|v\|}^4 e_j \in (\R v)^{\perp},\\
&\subset \text{span} \{ \nabla \phi_0(\x), ..., \nabla \phi_{d-1}(\x)\}.
\end{align*}

\textit{2d case: $i \geq 1$ and $j=0$.} 
We have:
\begin{align*}
&\partial ^{2} \phi_i (\x)\nabla \phi_0 (\x)- \partial ^{2} \phi_0 (\x)\nabla \phi_i (\x), \\
&= -2 u  /  {\|v\|} \nabla \phi_i (\x), \\
&\in \text{span} \{ \nabla \phi_0(\x), ..., \nabla \phi_{d-1} (\x)\}.\qedhere
\end{align*}
In both cases, we obtain as claimed that the condition \eqref{eq:frob-crochets-app} of Frobenius Theorem is satisfied, and we conclude using the latter.
\end{proof}

\section{Proof of \Cref{lowerdimflow} and additional example} \label{appendix-lowerdim}
\begin{proposition}
Assume that $\mathtt{rank}(\partial \phi(\x))$ is constant  
on $\Omega$ 
and that $W_\phi$ satisfies \eqref{eq:frob-crochets}. 
If $t \mapsto \x(t)$ satisfies the ODE \eqref{gradientflow} then there is $0< T^\star_{\theta_{\mathtt{init}}}<T_{\theta_{\mathtt{init}}}$ such that $z(t) \coloneqq \phi(\x(t)) \in \Rd$ 
satisfies the ODE 
 \begin{equation}  \label{transfertGF-app}
\left\{
    \begin{array}{ll}
        \overset{.}{z}(t) &=  -M(z(t), \x_{\text{init}}) \nabla f(z(t)) \quad \mbox{for all } 
        0 \leq t < T^\star_{\theta_{\mathtt{init}}},
        \\
        z(0) &= \phi(\x_{\text{init}}),
    \end{array}
\right.
\end{equation}
where $M(z(t),\x_{\text{init}}) \in \R^{d \times d }$ is a symmetric positive semi-definite matrix. 
\end{proposition}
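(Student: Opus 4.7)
The strategy is, first, to push the flow \eqref{gradientflow} through $\phi$ using the chain rule, and second, to exploit the conservation laws guaranteed by \Cref{mainthm} to re-express the resulting velocity as a function of $z(t)$ and $\theta_{\mathtt{init}}$ alone.

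The first step is routine: by \Cref{as:main_assumption}, the cost factors as $\mathcal{E}_{X,Y}(\theta) = F(\phi(\theta))$ with $F(z) := \sum_i \ell(f(z,x_i),y_i)$, so the chain rule gives $\nabla\mathcal{E}_{X,Y}(\theta) = \partial\phi(\theta)^\top \nabla F(\phi(\theta))$ and therefore, along \eqref{gradientflow},
$$
\dot z(t) = \partial\phi(\theta(t))\dot\theta(t) = -\partial\phi(\theta(t))\partial\phi(\theta(t))^\top \nabla F(z(t)).
$$
The matrix $\partial\phi(\theta)\partial\phi(\theta)^\top$ is symmetric positive semi-definite, but a priori depends on $\theta(t)$ rather than $z(t)$. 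The whole point is to show this dependence factors through $z(t)$ once the initial condition is fixed.

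The second step uses \Cref{mainthm}: since $\mathtt{rank}(\partial\phi(\theta))$ is constant on $\Omega$, say equal to $r$, and $W_\phi$ satisfies \eqref{eq:frob-crochets}, we have $\vdim\lie(W_\phi)(\theta) = \vdim W_\phi(\theta) = r$ on a neighborhood $\Omega' \subseteq \Omega$ of $\theta_{\mathtt{init}}$. Hence there exist $D-r$ independent conservation laws $h_1,\dots,h_{D-r}$ for $\phi$ on $\Omega'$, and by construction $h_j(\theta(t)) = h_j(\theta_{\mathtt{init}})$ as long as $\theta(t) \in \Omega'$.

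The third (and main) step is to invert $\theta \mapsto (\phi(\theta),h_1(\theta),\ldots,h_{D-r}(\theta))$ locally. Consider $\Phi: \theta \in \Omega' \mapsto (\phi(\theta), h_1(\theta),\ldots,h_{D-r}(\theta)) \in \R^{d+D-r}$. By \Cref{equiv}, each $\nabla h_j(\theta) \in \range(\partial\phi(\theta)^\top)^\perp$, and these gradients are independent, so they form a basis of the $(D-r)$-dimensional orthogonal complement of $\range(\partial\phi(\theta)^\top)$. Combined with the rows of $\partial\phi(\theta)$ (which span $\range(\partial\phi(\theta)^\top)$, of dimension $r$), the Jacobian $\partial\Phi(\theta)$ thus has full column rank $D$ on $\Omega'$. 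By the local immersion theorem, shrinking $\Omega'$ if needed, $\Phi$ is a diffeomorphism onto its image; in particular there is a smooth map $(z,c) \mapsto \Theta(z,c)$ with $\Theta(\Phi(\theta)) = \theta$ on $\Omega'$. The main technical care here is to make sure the shrunken neighborhood still contains the trajectory for some positive time, which follows from continuity of $\theta(\cdot)$.

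Finally, set $c := (h_1(\theta_{\mathtt{init}}),\ldots,h_{D-r}(\theta_{\mathtt{init}}))$ and let $T^\star_{\theta_{\mathtt{init}}} \in (0, T_{\theta_{\mathtt{init}}}]$ be the first exit time of $\theta(\cdot)$ from the refined $\Omega'$. For $t \in [0,T^\star_{\theta_{\mathtt{init}}})$, conservation gives $\theta(t) = \Theta(z(t),c)$, so defining
$$
M(z,\theta_{\mathtt{init}}) := \partial\phi(\Theta(z,c))\,\partial\phi(\Theta(z,c))^\top
$$
yields a symmetric positive semi-definite matrix depending only on $z$ and $\theta_{\mathtt{init}}$, and substituting into the chain-rule identity above gives \eqref{transfertGF-app}. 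The delicate step is the third one, since one must argue that the conservation laws from \Cref{mainthm} precisely account for the $D-r$ missing directions transverse to $\range(\partial\phi^\top)$; without the rank and Frobenius assumptions this number would be strictly smaller and the inversion would fail.
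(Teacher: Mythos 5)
Your proof is correct and follows essentially the same three-step plan as the paper's: push the flow through $\phi$ by the chain rule, invoke the constant-rank and Frobenius hypotheses to produce $D-r$ independent conservation laws, and invert $\theta\mapsto(\phi(\theta),h_1(\theta),\dots,h_{D-r}(\theta))$ locally so that $\partial\phi\,\partial\phi^\top$ factors through $(z,\theta_{\mathtt{init}})$. The only (harmless) divergence is in the inversion step: the paper selects $r=\mathtt{rank}(\partial\phi(\theta))$ coordinates of $\phi$ whose gradients are independent so that the resulting map $\Phi_I:\R^D\to\R^D$ has an invertible Jacobian and the inverse function theorem applies directly, whereas you keep all $d$ coordinates of $\phi$ and obtain an immersion $\Phi:\R^D\to\R^{d+D-r}$, then construct a smooth left inverse on its image; both yield the same local parametrization $\theta(t)=\Theta(z(t),c)$ and hence the same metric $M$.
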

\begin{proof}
    As  $z = \phi(\x)$ and as $\x$ satisfies \eqref{gradientflow}, we have:
\begin{align*}
\overset{.}{z} = \partial \phi (\x)  \overset{.}{\x}
 &= - \partial \phi (\x)  \nabla (f\circ \phi)(\x)
= - \partial \phi (\x) [\partial \phi(\x)]^\top \nabla f (z).
\end{align*}
Thus, we only need to show 
$M(t) \coloneqq \partial \phi (\x(t)) [\partial \phi(\x(t))]^\top$, which is a symmetric, positive semi-definite $d \times d$ matrix, 
only depends on $z(t)$ and $\x_{\text{init}}$.
Since $\dim W_\phi(\x)=\mathtt{rank}(\partial \phi(\x))$ is constant on $\Omega$ and $W_\phi$ satisfies~\eqref{eq:frob-crochets}, by
Frobenius Theorem (\Cref{frobenius}), 
for each $\x \in \ouv$, there exists a neighborhood $\Omega_1$ of $\x$ and $D-d'$ independent conserved functions $h_{d'+1}, \cdots, h_{D}$ through $(W_\phi)_{|\Omega'}$, with $d' := \dim W_\phi(\x) = \mathtt{rank}(\partial\phi(\x))$.  
Moreover, by definition of the rank, for the considered $\x$, there exists a set $I \subset \{1,\ldots,d\}$ of $d'$ indices such that the gradient vectors $\nabla \phi_i(\x)$, $i \in I$ are linearly independent. By continuity, they stay linearly independent on a neighborhood $\Omega_2$ of $\x$. Let us denote $P_I$ the restriction to the selected indices and
$$
\x' \in \RD \longmapsto \Phi_I(\x') \coloneqq (
P_I\phi(\x'),
h_{d'+1}(\x'), ..., h_{D}(\x')) \in \RD
$$
As the functions $h_i$ are \textit{independent} conserved functions, for each $\x' \in \Omega' \coloneqq \Omega_1 \cap \Omega_2$ their gradients $\nabla h_i(\x')$, $d'+1 \leq i \leq D$ are both linearly independent and (by \Cref{tracecharact} and~\eqref{eq:v-phi}) orthogonal to $W_\phi(\x') = \range [\partial \phi(\x')]^\top = \linspan \{\nabla \phi_i(\x): i \in I\}$.
Hence, on $\Omega'$, the Jacobian $\partial \Phi_I$ is an invertible $D \times D$ matrix. By the implicit function theorem, the function $\Phi_I$ is thus locally invertible. 
Applying this analysis to $\x = \x(0)$ and using that $h_i$ are conserved functions, we obtain that in an interval $[0,T^\star_{\theta_{\mathtt{init}}})$ 
we have
\begin{equation} \label{dep}
\Phi_I(\x(t)) = (P_I z(t), h_{d+1}(\x_{\text{init}}), ..., h_{D}(\x_{\text{init}}))
\end{equation}
By local inversion of $\Phi_I$ this allows to express $\x(t)$ (and therefore also $M(t) = \partial \phi(\x(t)) [\partial \phi(\x(t))]^\top$) as a function of $z(t)$ and of the initialization. 
\end{proof}

In complement to \Cref{ex:riemannian1} we provide another example related to \Cref{exNN}.
\begin{example}\label{ex:riemannian2}
Given the reparametrization 
$\phi: (u \in \R, v \in \R^{d-1}-\{0 \}) \mapsto (u \|v\| , v/ \|v\|) \in \R \times \mathcal{S}_{d-1} \subset \Rd$ (cf \eqref{paramNN}), the variable $z \coloneqq (r, h) = (u \|v\| , v/ \|v\|) $ satisfies \eqref{transfertGF-app} with:
    $M(z, \x_{\text{init}})~=~\begin{pmatrix}
  \sqrt{r^2 + \delta^2}
  & \rvline &   \quad 0_{1 \times k}  \quad \\
\hline
  0_{(d-1) \times 1} & \rvline &
  \begin{matrix}
   &  \\
   \frac{1}{\delta + \sqrt{r^2 + \delta^2}} P_h \\
   & 
  \end{matrix}
\end{pmatrix},
$
where $P_h := \mathrm{I}_{d-1} - h h^T / \| h \|^2$ and $\delta \coloneqq u_{\text{init}}^2-\| v _{\text{init}} \|^2$.
\end{example}

\section{Proofs of results of \Cref{sectionlinearNN}} \label{appendix:2layerLNN}
\subsection{Proof of Proposition \ref{classicinv}} \label{appC}
\begin{proposition} 
     Consider $\Psi: (U, V) \mapsto U^\top U - V^\top V \in \R^{r \times r}$ and assume that $(U; V)$ has full rank. Then:
     \begin{enumerate}
        \itemsep0em 
         \item if $n+m \leq r$, the function $\Psi$ gives $(n+m)(r-1/2(n+m-1))$ independent conserved functions,
         \item if $n+m > r$, the function $\Psi$ gives $r(r+1)/2$ independent conserved functions.
     \end{enumerate}
\end{proposition}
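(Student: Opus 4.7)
The plan is to recast $\Psi$ as $\Psi(M) = M^\top J M$ with $M \coloneqq (U;V) \in \R^{(n+m)\times r}$ and $J \coloneqq \mathrm{diag}(I_n,-I_m) \in \R^{(n+m)\times(n+m)}$, so that $\Psi$ is a smooth map into the space $\mathrm{Sym}(r)$ of symmetric $r\times r$ matrices (of dimension $r(r+1)/2$). Its entries yield at most $r(r+1)/2$ scalar functions, and the number of \emph{independent} conservation laws among them equals the generic dimension of $\mathrm{Im}(d\Psi_M)$. The two cases of the statement are then captured by the unified formula $\mathtt{rk}(2r+1-\mathtt{rk})/2$ with $\mathtt{rk} = \mathrm{rank}(M) = \min(n+m,r)$: setting $\mathtt{rk} = n+m$ recovers $(n+m)\bigl(r-(n+m-1)/2\bigr)$, and setting $\mathtt{rk} = r$ recovers $r(r+1)/2$.

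A direct calculation gives $d\Psi_M(H) = H^\top JM + M^\top JH = N^\top H + H^\top N$ with $N \coloneqq JM$, which has the same rank $\mathtt{rk}$ as $M$ since $J$ is invertible. To isolate the effective part of $H$, I decompose $H = PH_1 + P^\perp H_2$ where $P \in \R^{(n+m)\times\mathtt{rk}}$ has orthonormal columns spanning $\mathrm{col}(N)$ and $P^\perp$ completes it into an orthonormal basis of $\R^{n+m}$. Since $\mathrm{ker}(N^\top) = \mathrm{col}(N)^\perp$, one has $N^\top P^\perp = 0$, so $N^\top H = \tilde N H_1$ with $\tilde N \coloneqq N^\top P \in \R^{r\times\mathtt{rk}}$, and $\tilde N$ still has rank $\mathtt{rk}$ because $N^\top$ is injective on $\mathrm{col}(N)=\mathrm{col}(P)$. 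Hence $\mathrm{Im}(d\Psi_M) = \{\tilde N H_1 + H_1^\top \tilde N^\top : H_1 \in \R^{\mathtt{rk}\times r}\}$, and the map $H_2$ drops out.

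To count this dimension, I note that $\tilde N H_1$ sweeps exactly the subspace of $r\times r$ matrices whose columns lie in the $\mathtt{rk}$-dimensional subspace $\mathrm{col}(\tilde N) \subset \R^r$. After an orthogonal change of basis in $\R^r$ mapping $\mathrm{col}(\tilde N)$ onto $\mathrm{span}(e_1,\ldots,e_{\mathtt{rk}})$ (conjugation by an orthogonal matrix commutes with symmetrization and preserves $\mathrm{Sym}(r)$), such a matrix becomes one whose bottom $r-\mathtt{rk}$ rows vanish and whose top $\mathtt{rk}$ rows split as $[B \mid C]$ with $B \in \R^{\mathtt{rk}\times\mathtt{rk}}$ and $C \in \R^{\mathtt{rk}\times(r-\mathtt{rk})}$ both arbitrary. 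Symmetrizing turns the top-left block into $B+B^\top$ (sweeping $\mathrm{Sym}(\mathtt{rk})$, of dimension $\mathtt{rk}(\mathtt{rk}+1)/2$), ties the off-diagonal blocks into $C$ and $C^\top$ (contributing $\mathtt{rk}(r-\mathtt{rk})$ free dimensions), and forces the bottom-right block to zero. The total is $\mathtt{rk}(\mathtt{rk}+1)/2 + \mathtt{rk}(r-\mathtt{rk}) = \mathtt{rk}(2r+1-\mathtt{rk})/2$, which yields both cases upon specialization of $\mathtt{rk}$. The main obstacle is the clean bookkeeping of the two successive reductions (from $H$ to $H_1$ and from $\tilde N H_1$ to its symmetrization), in particular checking that the off-diagonal entries $C$ and $C^\top$ contribute only $\mathtt{rk}(r-\mathtt{rk})$ independent dimensions rather than doubling, and that none of the intermediate steps drops rank.
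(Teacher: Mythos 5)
Your proof is correct, and it takes a genuinely different route from the paper. The paper works entirely in coordinates: it writes each gradient $\nabla\Psi_{i,j}$ as an explicit sparse vector built from the columns of $(U;-V)$, picks a privileged subset of indices $(i,j)$ according to whether $n+m\le r$ or $n+m>r$, proves this subset has linearly independent gradients by a Gram-matrix style elimination, and shows all the remaining gradients are linear combinations by expanding the dependent columns of $(U;-V)$ in a basis formed by the independent ones. Your argument is basis-free: you recognize $d\Psi_M$ as the symmetrization $H\mapsto N^\top H + H^\top N$ of a rank-$\mathtt{rk}$ linear form with $N=JM$, kill the irrelevant directions of $H$ with an orthonormal splitting of $\R^{n+m}$, then push forward by an orthogonal change of basis of $\R^r$ so that the image becomes the symmetrization of matrices supported on the top $\mathtt{rk}$ rows, whose dimension is read off as $\mathtt{rk}(\mathtt{rk}+1)/2 + \mathtt{rk}(r-\mathtt{rk}) = \mathtt{rk}(2r+1-\mathtt{rk})/2$. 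Equivalently, you are computing $\dim\mathcal{V} - \dim(\mathcal{V}\cap\mathrm{Skew}(r))$ for the subspace $\mathcal{V}$ of matrices with columns in a fixed $\mathtt{rk}$-dimensional subspace, which is a clean way to see why the formula has the shape it does. Your approach buys a unified treatment of both cases of the statement and makes the rank-of-Jacobian interpretation of ``independent conserved functions'' explicit, at the cost of not producing a concrete list of which $\Psi_{i,j}$ form an independent subfamily (which the paper's proof does produce, and which is useful for \Cref{nbatteint}). Minor stylistic point: the phrase ``generic dimension'' is not needed; the rank of $d\Psi_M$ depends only on $\mathtt{rk}(M)$, so it is constant (not merely generic) on the open stratum where $M$ has full rank, and this is exactly what justifies local independence in the sense of the paper's \Cref{def:conserved_through_flow}-adjacent definition.
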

\begin{proof}
    Let write $U = (U_1; \cdots; U_r)$ and $V= (V_1; \cdots;V_r)$ then:
    $\Psi_{i, j}(U, V) = \langle U_i, U_j \rangle - \langle V_i, V_j \rangle$ for $i, j = 1, \cdots, r$. Then $f_{i, j} \coloneqq \nabla \Psi_{i, j} (U, V) = (0; \cdots; 0; \underset{(i)}{U_j}; \cdots; \underset{(j)}{U_i}; 0; \cdots; \underset{(i+r)}{-V_j}; \cdots; \underset{(j+r)}{-V_i}; \cdots; 0)^\top \in \R^{(n+m)r \times 1}$.

    \textit{1st case: $n+m \leq r$.} As $(U; V)$ has full rank, its rank is $n+m$. Thus, by elementary operations, $(U;-V)$ has full rank $(n+m)$ too.
    Let us note $(U; -V) = (w_1, \cdots, w_r)$.
    Without loss of generality we can assume that its $(n+m)$ first columns $w_1, \cdots, w_{(n+m)}$ are linearly independent.
    Now we want to count the number of $f_{i,j}$ that are linearly independent.
\begin{enumerate}
    \item if $i\in [\![ 1, n+m ]\!]$ and if $i \leq j \leq r$, then let us show that all the associated $f_{i, j}$ are linearly independent together. There are $(n+m)(n+m+1)/2 + (n+m)(r-(n+m))$ such functions. 
    Indeed, let us assume that there exists $(\lambda_{i, j})_{i, j}$ such that $\sum_{1 \leq i \leq j \leq n+m} \lambda_{i, j} f_{i, j} = 0$.
    Let $j_0 \in [\![ 1, r ]\!]$. By looking this sum at the restriction on the coordinates $[\![ n j_0, n (j_0 +1 ) ]\!] \cup  [\![ nr + m j_0, nr + m (j_0 +1) ]\!]$, we obtain:
    $$
    \sum_{i \leq \max(j_0, n+m)} \lambda_{i, j_0} w_i = 0.
    $$
    Then by independence of $w_1, \cdots, w_{n+m}$, $\lambda_{i, j_0} = 0$ for all $i \leq \max(j_0, n+m)$, and for all $j_0 \leq r$. Thus all these $f_{i, j}$ are linearly independent.
    \item if $i\leq j \in [\![ n+m +1, r]\!]$, the associated $f_{i, j}$ are linearly dependent of thus already obtained. Indeed as $w_i$ and $w_j$ are linear combinations of $\{w_1, \cdots w_{n+m}\}$, there exists $(\alpha_j)_j \neq (0)$ and $(\beta_j)_j \neq (0)$ such that $w_i = \sum_{k =1}^{n+m} \alpha_k w_k$ and $w_j = \sum_{k=1}^{n+m} \beta_k w_k$. Then, one has:
    $$
    f_{i, j} = \sum_k \beta_k f_{k, i} + \sum_k \alpha_k f_{k, j} - \sum_{k, l} \alpha_k \beta_l f_{k, l}.
    $$
    \end{enumerate}
Finally there are exactly $(n+m)(r-1/2(n+m-1)$ independent conserved functions given by $\Psi$.

\textit{2d case: $n+m > r$.} Then all $(U_i; -V_i)$ for $i = 1, \cdots r$ are linearly independent. Then there are $r(r+1)/2$ independent conserved functions given by $\Psi$.
\end{proof}

\subsection{Proofs of other results}

\begin{proposition} \label{linear_vf}
For every $\Delta \in \R^{n \times m}$ denote
$
S_{\Delta} \coloneqq  \left(\begin{matrix}
        0 & \Delta\\
        \Delta^\top & 0
    \end{matrix}\right),
$
one has
$
    \partial \phi(U, V)^\top : \Delta \in \R^{n \times m} \mapsto  
    S_\Delta \cdot
        (U; V). 
$
Hence 
$W_\phi = \linspan \{A_{\Delta}, \forall \Delta \in \R^{n \times m} \}$, where
$
    A_{\Delta}: (U; V) \mapsto  
    S_\Delta
    \cdot (U; V)
    $
is a linear endomorphism. 
Moreover one has $[A_{\Delta}, A_{\Delta'}]: (U, V) \mapsto  [S_{\Delta}, S_{\Delta'}]\times(U; V).$
\end{proposition}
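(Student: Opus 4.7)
The plan is to compute $\partial\phi(U,V)^\top$ directly by taking the adjoint of $d\phi$ with respect to the Frobenius inner product, recognize the result as the block-matrix product $S_\Delta\cdot (U;V)$, and then exploit linearity of $A_\Delta$ in $(U;V)$ to read off the Lie bracket formula.

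\textbf{Step 1 (differential and its adjoint).} Since $\phi(U,V)=UV^\top\in\R^{n\times m}$ is bilinear in $(U,V)$, for any increment $(\delta U,\delta V)$ I have
\[
 d\phi(U,V)[\delta U,\delta V] \;=\; \delta U\, V^\top + U\,\delta V^\top.
\]
Taking the adjoint with respect to the Frobenius inner product, for any $\Delta\in\R^{n\times m}$,
\[
 \langle \Delta,\, \delta U V^\top + U\,\delta V^\top\rangle_F
 = \langle \Delta V,\,\delta U\rangle_F + \langle \Delta^\top U,\,\delta V\rangle_F,
\]
so $\partial\phi(U,V)^\top \Delta = (\Delta V;\, \Delta^\top U) \in \R^{(n+m)\times r}$. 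A direct block computation shows
\[
S_\Delta\cdot(U;V) \;=\; \begin{pmatrix}0 & \Delta\\ \Delta^\top & 0\end{pmatrix}\begin{pmatrix}U\\V\end{pmatrix} \;=\; \begin{pmatrix}\Delta V\\ \Delta^\top U\end{pmatrix},
\]
which coincides with $\partial\phi(U,V)^\top\Delta$, proving the first identity.

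\textbf{Step 2 (description of $W_\phi$).} By definition~\eqref{eq:v-phi}, $W_\phi$ is the span of the $nm$ gradient fields $\nabla\phi_{i,j}(\cdot)$, equivalently the image of the linear map $\Delta\mapsto \partial\phi(\cdot)^\top \Delta$ as $\Delta$ ranges over $\R^{n\times m}$. By Step 1 this image is exactly $\{A_\Delta : \Delta\in\R^{n\times m}\}$, so $W_\phi=\linspan\{A_\Delta:\Delta\in\R^{n\times m}\}$. The map $A_\Delta$ is linear in $(U;V)$ with matrix $S_\Delta$, hence an endomorphism of $\R^{(n+m)\times r}$.

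\textbf{Step 3 (Lie bracket).} Linearity of $A_\Delta$ implies that its Jacobian at any point is the constant operator $S_\Delta$. Applying the definition~\eqref{eq:def-lie-brac},
\[
[A_\Delta, A_{\Delta'}](U;V) = S_\Delta\,\bigl(S_{\Delta'}(U;V)\bigr) - S_{\Delta'}\,\bigl(S_\Delta(U;V)\bigr) = [S_\Delta, S_{\Delta'}]\cdot(U;V),
\]
as claimed.

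There is no real obstacle here: the proof is essentially a bookkeeping verification once the Frobenius adjoint has been computed. The only point requiring care is the identification between the block-matrix product $S_\Delta\cdot(U;V)$ and the vertical concatenation $(\Delta V;\Delta^\top U)$, ensuring that the convention used for $\partial\phi(U,V)^\top$ (viewed as a map into $\R^{(n+m)\times r}$ rather than $\R^{(n+m)r}$) is consistent throughout.
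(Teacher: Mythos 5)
Your proof is correct and is exactly the bookkeeping verification the paper leaves implicit: compute $d\phi$, take the Frobenius adjoint to get $\partial\phi(U,V)^\top\Delta = (\Delta V;\Delta^\top U) = S_\Delta\cdot(U;V)$, and then use constancy of the Jacobian of a linear field to get the bracket formula. No gaps, same approach.
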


This proposition enables the computation of the Lie brackets of $W_\phi$ by computing the Lie bracket of matrices. In particular, $\lie(W_\phi)$ is necessarily of finite dimension.

\begin{proposition} \label{lieW}
    The Lie algebra $\lie(W_\phi)$ is equal to 
    $$
        \left\{ 
            (U; V) \mapsto 
            \begin{pmatrix}
                \mathrm{I}_{n} & 0 \\ 0 & -\mathrm{I}_{m}
            \end{pmatrix}
            %
            \times M \times             
            \begin{pmatrix}
                U\\V
            \end{pmatrix}
            : M \in \mathcal{A}_{n+m} 
        \right\}
    $$
    where $\mathcal{A}_{n+m} \subset \R^{(n+m)\times (n+m)}$ is the space of skew symmetric matrices.
\end{proposition}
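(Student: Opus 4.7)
The plan is to reduce everything to finite-dimensional matrix algebra by leveraging \Cref{linear_vf}: every element of $W_\phi$ is a linear vector field $\chi_S : (U;V) \mapsto S (U;V)$ with $S \in \R^{(n+m)\times(n+m)}$, and the Lie bracket of two such linear fields corresponds to the matrix commutator, $[\chi_S,\chi_{S'}]=\chi_{[S,S']}$. Hence $\lie(W_\phi)$ is characterized by the matrix Lie algebra $\mathfrak{L}$ generated by $\{S_\Delta : \Delta \in \R^{n\times m}\}$ inside $\R^{(n+m)\times(n+m)}$; the target is to show that $\mathfrak{L}=J\,\mathcal{A}_{n+m}$, where $J:=\mathrm{diag}(\mathrm{I}_n,-\mathrm{I}_m)$.

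First, I would set up the candidate algebra. Noting that $S_\Delta$ equals $J$ times the skew matrix with off-diagonal blocks $\Delta$ and $-\Delta^\top$ places $W_\phi$ (as matrices) inside $\mathfrak{g}:=J\mathcal{A}_{n+m}$. Writing a general element $S=JM\in\mathfrak{g}$ with $M\in\mathcal{A}_{n+m}$, a direct block check shows $\mathfrak{g}$ coincides with the space of matrices $\left(\begin{smallmatrix}A & B \\ B^\top & D\end{smallmatrix}\right)$ with $A\in\mathcal{A}_n$, $D\in\mathcal{A}_m$, $B\in\R^{n\times m}$. In particular $\mathfrak{g}$ decomposes as the direct sum of $W_\phi$ (the off-diagonal part) and the block-diagonal skew pieces $\{\mathrm{diag}(A,D) : A\in\mathcal{A}_n,\,D\in\mathcal{A}_m\}$. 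Next I would verify that $\mathfrak{g}$ is closed under the matrix commutator, thereby proving $\mathfrak{L}\subseteq\mathfrak{g}$. For $S=JM$ and $S'=JM'$ with $M,M'$ skew, a direct computation yields $[S,S']=J(MJM'-M'JM)$, and the identity $(MJM')^\top=(M')^\top J^\top M^\top=(-M')J(-M)=M'JM$ (using $J^\top=J$) shows that $MJM'-M'JM$ is itself skew, so $[S,S']\in\mathfrak{g}$.

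For the reverse inclusion I would exhibit enough explicit brackets to generate the block-diagonal complement of $W_\phi$ inside $\mathfrak{g}$. A direct matrix product gives
\begin{equation*}
[S_\Delta,S_{\Delta'}]=\begin{pmatrix}\Delta {\Delta'}^\top-\Delta' \Delta^\top & 0 \\ 0 & \Delta^\top\Delta'-{\Delta'}^\top\Delta\end{pmatrix}.
\end{equation*}
Choosing $\Delta=e_if_k^\top$ and $\Delta'=e_jf_k^\top$ with $i\neq j$ (where $(e_i),(f_k)$ are the canonical bases of $\R^n,\R^m$) makes the bottom block vanish while the top block equals $e_ie_j^\top-e_je_i^\top$; varying $i,j$ spans all of $\mathcal{A}_n\oplus\{0\}$. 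The symmetric choice $\Delta=e_if_k^\top$, $\Delta'=e_if_l^\top$ with $k\neq l$ produces $\{0\}\oplus\mathcal{A}_m$. Combining these block-diagonal brackets with $W_\phi$ itself then yields all of $\mathfrak{g}$, establishing $\mathfrak{g}\subseteq\mathfrak{L}$ and hence equality.

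The main obstacle is essentially bookkeeping: one has to verify carefully that the chosen rank-one $\Delta$'s make exactly one of the two blocks of $[S_\Delta,S_{\Delta'}]$ vanish, while the surviving block sweeps out a spanning set of the target skew space. The edge cases $n=1$ or $m=1$ are immediate since $\mathcal{A}_1=\{0\}$.
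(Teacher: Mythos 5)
Your proposal is correct and follows essentially the same route as the paper's own proof: reduce to the matrix Lie algebra generated by the $S_\Delta$, observe that the commutator $[S_\Delta, S_{\Delta'}]$ is block-diagonal skew, and then establish both that the candidate space $J\mathcal{A}_{n+m}$ is closed under commutators (the paper's Step 2, terminating the iteration of \Cref{buildingliealgebra} at $W_2=W_1$) and that the brackets together with $W_\phi$ span it (the paper's Step 1). The one place where you are actually more careful than the paper is the spanning direction: the paper simply asserts that $\{[S_\Delta,S_{\Delta'}]\}$ together with the $S_\Delta$ fill out all of $\{\left(\begin{smallmatrix}Y & X\\ X^\top & Z\end{smallmatrix}\right): Y\in\mathcal{A}_n, Z\in\mathcal{A}_m, X\in\R^{n\times m}\}$, whereas you exhibit concrete rank-one choices $\Delta=e_if_k^\top$, $\Delta'=e_jf_k^\top$ (resp.\ $\Delta'=e_if_l^\top$) that isolate a single elementary skew generator in each diagonal block. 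That detail is worth keeping.
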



\begin{remark}
    By the characterization of $\lie(W_\phi)$ in Proposition \ref{lieW} we have that the dimension of $\lie(W_\phi)$ is equal to $(n+m)\times (n+m-1)/2.$
\end{remark}

\begin{proof}
\textit{1st step: Let us characterize $W_1 = \linspan \{W_\phi + [W_\phi, W_\phi] \}$}.
Let $\Delta, \Delta' \in \R^{n \times m}$, then: 
\begin{equation} \label{liebracketmatrix}
[A_{\Delta}, A_{\Delta'}]((U, V)) = [S_{\Delta}, S_{\Delta'}] \times(U; V) =   \begin{pmatrix}
                Y& 0 \\ 0& Z
            \end{pmatrix}
            \times             
            \begin{pmatrix}
                U\\V
            \end{pmatrix},
\end{equation}
with $Y \coloneqq \Delta \Delta'^\top - \Delta' \Delta^\top \in \mathcal{A}_n$ and $Z \coloneqq \Delta^\top \Delta'- \Delta'^\top \Delta \in \mathcal{A}_m$.
Then: 
\begin{align*}
W_1 &= \left\{ (U; V) \mapsto  \begin{pmatrix}
                Y& X \\ X^\top& Z
            \end{pmatrix} \times  \begin{pmatrix}
                U\\V
            \end{pmatrix}: X \in \R^{n \times m }, Y \in \mathcal{A}_{n},  Z \in \mathcal{A}_m \right\}, \\
&= \left\{ u_M \coloneqq (U; V) \mapsto  \begin{pmatrix}
                \mathrm{I}_{n}& 0 \\ 0& -\mathrm{I}_{m}
            \end{pmatrix}
            \times M \times             
            \begin{pmatrix}
                U\\V
            \end{pmatrix}:  M \in \mathcal{A}_{n+m} \right\}. 
\end{align*}
\textit{2d step: Let us show that $W_2 = W_1$.}
Let  $M, M' \in \mathcal{A}_{n+m}$. Then:
$$
    [u_M, u_{M'}] =  \begin{pmatrix}
                \mathrm{I}_{n}& 0 \\ 0& -\mathrm{I}_{m}
            \end{pmatrix} \left(M \begin{pmatrix}
                \mathrm{I}_{n}&0 \\ 0& -\mathrm{I}_{m}
            \end{pmatrix} M' -  M' \begin{pmatrix}
                \mathrm{I}_{n}&0 \\ 0& -\mathrm{I}_{m}
            \end{pmatrix} M\right) 
    = \begin{pmatrix}
                \mathrm{I}_{n}& 0 \\ 0& -\mathrm{I}_{m}
            \end{pmatrix} \tilde{M},
$$
with $\tilde{M} \coloneqq M \begin{pmatrix}
                \mathrm{I}_{n}& 0 \\ 0& -\mathrm{I}_{m}
            \end{pmatrix} M' -  M' \begin{pmatrix}
                \mathrm{I}_{n}& 0 \\ 0& -\mathrm{I}_{m}
            \end{pmatrix} M \in \mathcal{A}_{n+m}$.

Finally: $\lie(W_\phi) = W_1 = \left\{ (U; V) \mapsto \begin{pmatrix}
                \mathrm{I}_{n}& 0 \\ 0&  -\mathrm{I}_{m}
            \end{pmatrix} \times M   \times             
            \begin{pmatrix}
                U\\V
            \end{pmatrix}:  M \in \mathcal{A}_{n+m} \right\}. $
\end{proof}

Eventually, what we need to compute is the dimension of the trace $\lie(W_\phi) (U, V)$ for any $(U, V)$. 

\begin{proposition}  \label{dim-lie-algebra}
Let us assume that $(U; V) \in \R^{(n+m) \times r}$ has full rank. Then:
\begin{enumerate}
    \item if $n+m \leq r$, then $\vdim (\lie(W_\phi) (U; V)) = (n+m)(n+m-1)/2$;
    \item if $n+m > r$, then $\vdim (\lie(W_\phi) (U; V)) = (n+m)r - r(r+1)/2$.
\end{enumerate}
\end{proposition}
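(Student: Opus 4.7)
}
The plan is to reduce the computation to a linear-algebraic one on skew-symmetric matrices, using the explicit characterization of $\lie(W_\phi)$ provided by \Cref{lieW}. Write $d = n+m$, $J \coloneqq \begin{pmatrix} \mathrm{I}_n & 0 \\ 0 & -\mathrm{I}_m \end{pmatrix} \in \R^{d \times d}$, and $W \coloneqq (U;V) \in \R^{d \times r}$. By \Cref{lieW}, the trace at $(U;V)$ is
\[
\lie(W_\phi)(U;V) = \bigl\{ J M W : M \in \mathcal{A}_d \bigr\}.
\]
Since $J$ is invertible, its dimension equals that of the image of the linear map $\Lambda \colon M \in \mathcal{A}_d \mapsto MW \in \R^{d \times r}$. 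Thus, by rank-nullity,
\[
\vdim \lie(W_\phi)(U;V) = \vdim \mathcal{A}_d - \vdim \ker \Lambda = \tfrac{d(d-1)}{2} - \vdim \ker \Lambda.
\]
The whole proof therefore boils down to computing $\vdim \ker \Lambda = \vdim \{M \in \mathcal{A}_d : MW = 0\}$, which is the main (and only mildly technical) step.

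To handle the kernel, I would perform an orthogonal change of coordinates adapted to the column space of $W$. Since $W$ has full rank $\mathtt{rk} = \min(d,r)$, there exists an orthogonal matrix $O \in \R^{d \times d}$ such that $OW = \begin{pmatrix} W' \\ 0 \end{pmatrix}$, where $W' \in \R^{\mathtt{rk} \times r}$ has full row rank (hence admits a right inverse). Setting $M' \coloneqq OMO^\top$, the map $M \mapsto M'$ is a bijection from $\mathcal{A}_d$ to itself (orthogonal conjugation preserves skew-symmetry), and $MW = 0$ is equivalent to $M'(OW) = 0$, i.e.\ to $M'[:,1{:}\mathtt{rk}] W' = 0$, which in turn (by the right-invertibility of $W'$) is equivalent to $M'[:,1{:}\mathtt{rk}] = 0$.

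Now the skew-symmetry of $M'$ forces the first $\mathtt{rk}$ rows to vanish as well, so $M'$ has the block form $\begin{pmatrix} 0 & 0 \\ 0 & N \end{pmatrix}$ with $N \in \mathcal{A}_{d-\mathtt{rk}}$. Hence $\vdim \ker \Lambda = (d-\mathtt{rk})(d-\mathtt{rk}-1)/2$. When $n+m \leq r$ one has $\mathtt{rk} = d$, so $\ker \Lambda = \{0\}$ and $\vdim \lie(W_\phi)(U;V) = d(d-1)/2 = (n+m)(n+m-1)/2$. When $n+m > r$ one has $\mathtt{rk} = r$, and a direct simplification
\[
\tfrac{d(d-1)}{2} - \tfrac{(d-r)(d-r-1)}{2} = dr - \tfrac{r(r+1)}{2}
\]
yields the announced value $(n+m)r - r(r+1)/2$. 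The only substantive step is the kernel computation above, which essentially relies on the fact that orthogonal conjugation preserves $\mathcal{A}_d$; no further technicality is expected.
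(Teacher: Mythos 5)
Your proof is correct, and it takes a genuinely different route from the paper's. Both proofs reduce, via \Cref{lieW} and rank--nullity, to computing $\vdim\{M \in \mathcal{A}_{n+m} : M(U;V) = 0\}$. Where you diverge is in how this kernel is evaluated. The paper proceeds ``coordinate by coordinate'': it establishes a completion lemma (\Cref{lemma:completion}) to pick a set $T$ of $n+m-r$ canonical indices making $\left(C,\mId{T}\right)$ invertible, then builds the admissible columns $M_{i_1},\ldots,M_{i_{n+m-r}}$ one at a time, counting $(n+m-r-1)+(n+m-r-2)+\cdots+0$ degrees of freedom, and treats the case $n+m\leq r$ separately via injectivity of $\Gamma$. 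You instead conjugate by an orthogonal matrix $O$ aligning $\operatorname{col}(W)$ with the first $\mathtt{rk}$ coordinate axes; this preserves $\mathcal{A}_{n+m}$, turns $MW = 0$ into the vanishing of the first $\mathtt{rk}$ columns (hence, by skew-symmetry, also rows) of $M' = OMO^\top$, and makes the kernel visibly isomorphic to $\mathcal{A}_{(n+m)-\mathtt{rk}}$ in one stroke, uniformly across both cases. Your approach is shorter and avoids the auxiliary completion lemma and the recursive bookkeeping, at the (small) cost of invoking an orthogonal basis change; the paper's argument is more elementary but considerably more laborious. Both yield the same arithmetic at the end.
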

\begin{proof}
Let us consider the linear application:
$$
\Gamma: M \in \mathcal{A}_{n+m}\mapsto \begin{pmatrix}
                \mathrm{I}_{n}& 0 \\ 0& -\mathrm{I}_{m}
            \end{pmatrix}
            %
            \times M \times             
            \begin{pmatrix}
                U\\V
            \end{pmatrix},
$$
where $\mathcal{A}_{n+m} \subset \R^{(n+m)^2}$ is the space of skew symmetric matrices. As $\range 
\Gamma (\mathcal{A}_{n+m}) = \lie(W_\phi)(U;V)$, we only want to calculate $\textrm{rank} \Gamma ( \mathcal{A}_{n+m})$. But by rank–nullity theorem, we have:
$$
\vdim\ \operatorname{ker}\ \Gamma + \operatorname{rank}\ \Gamma = (n+m)(n+m-1)/2.
$$
\textit{1st case: $n+m \leq r$.} Then as $\begin{pmatrix}
                \mathrm{I}_{n}& 0 \\ 0& -\mathrm{I}_{m}
            \end{pmatrix}$ is invertible and as $(U; V)$ has full rank $n+m$, $\Gamma$ is injective and then $\textrm{rank} \Gamma ( \mathcal{A}_{n+m})=  (n+m)(n+m-1)/2$.

\textit{2d case:  $n+m > r$.} Since $\begin{pmatrix}
                \mathrm{I}_{n}& 0 \\ 0& -\mathrm{I}_{m}
            \end{pmatrix}$ is invertible,
            $\operatorname{ker}( \Gamma)$ is the set of matrices $M \in \mathcal{A}_{n+m}$ such that $M (U;V)=0$. 
            Denote $M_i$, $1 \leq i \leq 2(n+m)$ the rows of such a matrix, so that $M^\top = (M_1; \cdots; M_{2(n+m)})$. Denoting
 $C_j$, $1 \leq j \leq r$ the columns of $\left(U; V\right)$
and $\mathcal{C} \coloneqq \underset{j = 1, \cdots, r}{\linspan} C_j$, we observe that since  $\left(U; V\right)$ has full rank $r = \min(m+n,r)$ the columns $C_j$ are linearly independent and $\vdim\ \mathcal{C} = r$.
Since $M\times \left(U; V\right) = 0$, we have $\langle M_i, C_j \rangle = 0$ for all $1 \leq i \leq n+m$ and $1 \leq j \leq  r$, i.e., each $M_i \in \R^{n+m}$ belongs to $\mathcal{C}^\perp$, of dimension $\vdim\ \mathcal{C}^\perp = n+m-r$. 

To determine  $\vdim\ \operatorname{ker}(\Gamma)$ we now count the number of degrees of freedom to choose $M \in \mathcal{A}_{n+m}$ such that $M_i \in \mathcal{C}^\perp$ for every $i$. 
We first show the following lemma.
\begin{lemma} \label{lemma:completion}
    The matrix $C \in \R^{(n+m) \times r}$ has full rank $r$ if and only if there exists a subset $T$ of $n+m-r$ indices such that the horizontal concatenation $\left(C, \mId{T}\right)$ is invertible, where $\mId{T} \in \R^{(n+m) \times (n+m-r)}$ is {the restriction of the identity matrix to its columns indexed by }$T$.
\end{lemma}

\begin{proof}
    The converse implication is clear. Let us show the direct one.
    By denoting $e_1, \cdots, e_{n+m}$ the canonical basis in $\R^{n+m}$, there is $i_1$ such that $e_{i_1}$ is linearly independent from all $C_j$: otherwise all $e_i$ would be spanned by $C_1, \cdots, C_r$, i.e. we would have ${\linspan} \{ e_i: 1 \leq i \leq n+m\}\subseteq \mathcal{C}$ hence  $n+m \leq r$, which contradicts our assumption. 
    Similarly, by recursion, after finding $i_1, \cdots, i_{k}$ for some $k < n+m-r$ such that $i_1, \cdots, i_k$ are linearly independent from $C_1, \dots, C_r$ (so that $\tilde{\mathcal{C}} \coloneqq {\linspan} \{\mathcal{C} , e_{i_l}: 1\leq l \leq k \}$ has dimension $r+k < n+m$), there exists $i_{k+1}$ such that $e_{i_{k+1}}$ is linearly independent from all $C_j$ and all $e_{i_1}, \cdots, e_{i_k}$.  
    Stopping this construction when $k = n+m-r$ yields $T:= \{i_1,\ldots, i_k\}$.
\end{proof}
Consider the index set $T = \{i_1, \cdots, i_{n+m-r} \}$ given by~\Cref{lemma:completion}, so that
$\left(C, \mId{T}\right) \in \R^{(n+m) \times (n+m)}$ is invertible.

We first build the column $M_{i_1}$. The $i_1$-th coordinate of $M_{i_1}$ is equal to $0$ as $M$ is a skew matrix, and its remaining $n+m-1$ coordinates can be freely chosen provided that
$M_{i_1}$ belongs to  $\mathcal{C}^\perp$.
 Thus, $M_{i_1}$ can be arbitrarily chosen in the space of dimension $n+m-r-1$ defined by 
$$
\left(C, e_{i_1}\right)^\top M_{i_1} = \begin{pmatrix}
    0 \\ \cdots \\ 0 \\ 0
\end{pmatrix},
$$
where the matrix $\left(C, e_{i_1}\right)^\top \in \R^{(r+1) \times (n+m)}$ has full rank $r+1$ by contruction.

Then, the $i_2$-th coordinate of $M_{i_2}$ is equal to $0$, and its $i_1$-th coordinate is determined by $M_{i_1}$ (and equal to the opposite of its $i_2$-th one) as $M$ is a skew matrix. Its remaining $n+m-2$ coordinates can be freely chosen provided that
$M_{i_2}$ belongs to  $\mathcal{C}^\perp$. Thus, $M_{i_2}$ can be arbitrarily chosen in the affine space of dimension $n+m-r-2$ defined by 
$$
\left(C, e_{i_1}, e_{i_2}\right)^\top M_{i_2} = \begin{pmatrix}
    0 \\ \cdots \\ 0 \\- M_{i_1}[i_2] \\ 0
\end{pmatrix},
$$
where the matrix $\left(C, e_{i_1}, e_{i_2} \right)^\top \in \R^{(r+2) \times (n+m)}$ has full rank $r+2$ by construction.
By recursion, after building $k$ columns $M_{i_1}, \cdots, M_{i_k}$ with $k < n+m-r$, the coordinates indexed by $i_1, \cdots, i_k$ of the column $M_{i_{k+1}}$ are determined by $M_{i_1}, \cdots, M_{i_k}$  and its $i_{k+1}$-th coordinate is equal to zero to ensure that $M$ is a skew matrix, and the remaining $n+m-(k+1)$ coordinates must ensure that   $M_{i_{k+1}} \in \mathcal{C}^\perp$. Thus $M_{i_{k+1}}$ can be arbitrarily chosen in the affine space of dimension 
$n+m-r-(k+1)$ defined by 
$$
\left(C, e_{i_1}, \cdots, e_{i_{k+1}} \right)^\top M_{i_{k+1}} = \begin{pmatrix}
    0 \\ \cdots \\ 0 \\ -M_{i_1}[i_{k+1}] \\ \cdots \\ -M_{i_k} [i_{k+1}] \\0
\end{pmatrix},
$$
where the matrix $\left(C, e_{i_1}, \cdots, e_{i_{k+1}} \right)^\top\in \R^{(r+k+1) \times (n+m)}$ has full rank $r+k+1$ by construction.
Finally the dimension of $\mathrm{ker}(\Gamma)$ is equal to:
$$
\sum_{i=1}^{n+m-r}(n+m-r-i) = (n+m-r-1)(n+m-r)/2.
$$
Eventually we obtain $\operatorname{rank}(\Gamma)= (n+m)r -r(r+1)/2$.
\end{proof}

Thanks to this explicit characterization of the trace of the generated Lie algebra, combined with \Cref{classicinv}, we conclude that \Cref{conservation}  has indeed exhausted the list of independent conservation laws. 

\begin{corollary} 
    If $(U; V)$ has full rank, then all conserved functions are given by $\Psi: (U, V) \mapsto U^\top U - V^\top V$. In particular, there exist no more conserved functions.
\end{corollary}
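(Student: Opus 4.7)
The plan is to combine the upper bound on the maximal number of independent conservation laws given by \Cref{mainthm} with the lower bound provided by the known laws of \Cref{classicinv}, and show that these two bounds coincide precisely when $(U;V)$ has full rank.

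First, I would unpack the counting. Here $D = (n+m)r$, and by \Cref{dimlie} the dimension of the trace of the generated Lie algebra is
\[
\vdim (\lie(W_\phi)(U;V)) = (n+m)r - \mathtt{rk}\,(2r+1-\mathtt{rk})/2,
\]
where $\mathtt{rk}$ denotes the (full) rank of $(U;V)$. Before invoking \Cref{mainthm}, I would check its local-constancy hypothesis: the full-rank condition is an open condition on $(U;V)$, so on a neighborhood of any such point the rank stays equal to $\mathtt{rk}$, and the formula above yields a constant dimension for $\vdim (\lie(W_\phi)(\cdot))$ on this neighborhood. Applying \Cref{mainthm}, every full-rank $(U;V)$ admits a neighborhood on which there are at most $D - \vdim (\lie(W_\phi)(U;V)) = \mathtt{rk}\,(2r+1-\mathtt{rk})/2$ independent conservation laws of $\phi$.

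Next, I would match this upper bound with the output of \Cref{classicinv}: the entries of $\Psi(U,V) = U^\top U - V^\top V$ (viewed as a symmetric $r\times r$ matrix) provide exactly $\mathtt{rk}\,(2r+1-\mathtt{rk})/2$ \emph{independent} conservation laws on the same open set. Since independent conserved functions cannot exceed the bound from \Cref{mainthm}, any other conservation law $h$ must have a gradient $\nabla h(\theta)$ lying, at each $\theta$, in the linear span of the gradients of these independent coordinates of $\Psi$. This yields the claimed conclusion that all conservation laws are (functionally) obtained from $\Psi$, and in particular there is no additional independent one.

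The only genuinely nontrivial ingredients, already established elsewhere in the paper, are \Cref{dimlie} (the exact dimension count of the Lie algebra trace) and \Cref{classicinv} (the independence count of the laws produced by $\Psi$); beyond them, the argument here is essentially bookkeeping: check local constancy, apply \Cref{mainthm}, and observe that upper and lower bounds match. The only subtlety I would be careful about is to phrase the second claim locally, since the result of \Cref{mainthm} is local in nature, while the laws given by $\Psi$ are globally defined on the open set of full-rank $(U;V)$.
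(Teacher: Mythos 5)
Your proposal is correct and follows essentially the same route as the paper: establish local constancy of $\vdim(\lie(W_\phi)(\cdot))$ from the openness of the full-rank condition, invoke \Cref{mainthm} for the upper bound $D - \vdim(\lie(W_\phi)(\x)) = \mathtt{rk}\,(2r+1-\mathtt{rk})/2$, and observe that this matches the count of independent laws furnished by $\Psi$ in \Cref{classicinv}. Your closing remark about the local nature of \Cref{mainthm} versus the global definition of $\Psi$ is a valid subtlety that the paper glosses over, but it does not alter the argument.
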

\begin{proof}
 As $(U;V)$ has full rank, this remains locally the case. By~\Cref{dimlie} the dimension of $\lie(W_\phi) (U;V)$ is locally constant, denoted $m(U,V)$. By Theorem \ref{mainthm}, the exact number of independent conserved functions is equal to $(n+m)r - m(U, V)$ and that number corresponds to the one given in Proposition~\ref{classicinv}.
\end{proof}

\section{About \Cref{ex-contre-ex-frobenius}} \label{contre-ex-frobenius}
\begin{proposition}
Let us assume that $(U; V) \in \R^{(n+m) \times r}$ has full rank. If $\max(n, m) > 1$ and $r > 1$, then $W_\phi$ does not satisfy the condition  \eqref{eq:frob-crochets}.
\end{proposition}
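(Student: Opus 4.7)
Let $(U;V)$ be of full rank and assume $\max(n,m) > 1$, $r > 1$. The goal is to exhibit $\Delta, \Delta' \in \R^{n\times m}$ such that $[A_\Delta, A_{\Delta'}](U;V) \notin W_\phi(U;V)$, which directly contradicts \eqref{eq:frob-crochets}. By the calculation in the proof of \Cref{lieW}, this bracket equals $\bigl(\begin{smallmatrix} YU \\ ZV \end{smallmatrix}\bigr)$ with $Y = \Delta{\Delta'}^\top - \Delta'\Delta^\top \in \mathcal{A}_n$ and $Z = \Delta^\top\Delta' - {\Delta'}^\top\Delta \in \mathcal{A}_m$, whereas Proposition~\ref{linear_vf} gives $W_\phi(U;V) = \{\bigl(\begin{smallmatrix} \tilde\Delta V \\ \tilde\Delta^\top U \end{smallmatrix}\bigr): \tilde\Delta \in \R^{n\times m}\}$. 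So the task reduces to: realize a pair $(Y, Z)$ as an actual bracket for which no $\tilde\Delta$ solves $\tilde\Delta V = YU$ and $\tilde\Delta^\top U = ZV$ simultaneously. I will do this for every full-rank $(U;V)$ by splitting on the ranks of $U$ and $V$.

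\emph{Generic case ($\mathtt{rank}\,U \geq 2$).} The linear map $A \mapsto U^\top A U : \mathcal{A}_n \to \mathcal{A}_r$ has image of dimension $\binom{\mathtt{rank}\,U}{2} \geq 1$ (via a $QR$-type factorization $U = QR$ with $Q$ isometric and $R$ of full row rank $\mathtt{rank}\,U$), and rank-$2$ skew matrices $ue^\top - eu^\top$ span all of $\mathcal{A}_n$, so one can pick a nonzero column $u$ of $U$ and $e \in \R^n$ with $Y := ue^\top - eu^\top$ satisfying $U^\top Y U \neq 0$. Set $Z = 0$. If a $\tilde\Delta$ existed with $\tilde\Delta V = YU$ and $\tilde\Delta^\top U = 0$, then $U^\top\tilde\Delta = 0$; left-multiplying the first equation by $U^\top$ gives $0 = U^\top Y U$, a contradiction. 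One realizes $(Y, 0)$ as an actual bracket by taking $\Delta = u\,e_1^\top$ and $\Delta' = e\,e_1^\top$: a direct computation yields $\Delta{\Delta'}^\top - \Delta'\Delta^\top = ue^\top - eu^\top = Y$ and $\Delta^\top\Delta' - {\Delta'}^\top\Delta = 0$ (using $u^\top e = e^\top u$). The symmetric argument (swapping the roles of $U$ and $V$) handles the case $\mathtt{rank}\,V \geq 2$.

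\emph{Degenerate case ($\mathtt{rank}\,U = \mathtt{rank}\,V = 1$).} The projection bound $\mathtt{rank}\,U + \mathtt{rank}\,V \geq \mathtt{rank}(U;V)$, combined with $\mathtt{rank}(U;V) = \min(n+m, r) \geq 2$ (since $n+m \geq 3$ and $r \geq 2$ under the hypotheses), forces $\mathtt{rank}(U;V) = 2$. Write $U = u\alpha^\top$, $V = v\beta^\top$ with $u \in \R^n$, $v \in \R^m$ nonzero, and $\alpha, \beta \in \R^r$ linearly independent (linear independence follows from $\mathtt{rank}(U;V) = 2$). WLOG $n \geq 2$ (else $m \geq 2$ and swap roles). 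Pick $e \in \R^n$ not collinear to $u$, set $Y = ue^\top - eu^\top$ (so that $Yu = (e^\top u)u - \|u\|^2 e \neq 0$) and $Z = 0$. The equation $\tilde\Delta V = YU$ reads $\tilde\Delta v\,\beta^\top = Yu\,\alpha^\top$; right-multiplying by any $\gamma \in \R^r$ with $\alpha^\top\gamma = 0$ and $\beta^\top\gamma \neq 0$ (such $\gamma$ exists because $\alpha, \beta$ are linearly independent, making $\gamma \mapsto (\alpha^\top\gamma, \beta^\top\gamma) \in \R^2$ surjective) gives $(\beta^\top\gamma)\tilde\Delta v = 0$, so $\tilde\Delta v = 0$, whence $Yu = 0$, a contradiction. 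As above, $(Y, 0)$ is realized by a concrete bracket.

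The main obstacle is the degenerate sub-case, where $U^\top A U \equiv 0$ on all of $\mathcal{A}_n$ (because $U = u\alpha^\top$ factors through a single column direction), so the generic $U^\top Y U \neq 0$ trick collapses; the linear independence of $\alpha, \beta$ enforced by $\mathtt{rank}(U;V) = 2$ is exactly what enables the alternative construction via the right-multiplication by $\gamma$.
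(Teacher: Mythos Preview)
Your proof is correct and takes a genuinely different route from the paper. The paper argues by dimension counting: it computes $\dim W_\phi(U;V)$ as the rank of $\Gamma': \Delta \mapsto S_\Delta\,(U;V)$ via an explicit kernel computation (with case splits on $n+m$ versus $r$, and then on $m$ versus $r$), and compares this number with the value of $\dim \lie(W_\phi)(U;V)$ already established in \Cref{dim-lie-algebra}, verifying the strict inequality in each regime. You instead produce an explicit witness: a bracket $[A_\Delta,A_{\Delta'}](U;V)$ lying outside $W_\phi(U;V)$, engineered so that $Z=0$ (or symmetrically $Y=0$), which forces any candidate $\tilde\Delta$ to satisfy $U^\top\tilde\Delta=0$ and hence $U^\top Y U=0$, contradicted by your choice of $Y$. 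Your case split (on $\mathtt{rank}\,U$ and $\mathtt{rank}\,V$) is also different from the paper's. What each approach buys: the paper's computation yields the exact value of $\dim W_\phi(U;V)$ as a byproduct, which is informative beyond this proposition; your argument is shorter, self-contained, and does not rely on the prior computation of $\dim\lie(W_\phi)(U;V)$. One minor expository remark: in your generic case the restriction that $u$ be a \emph{column} of $U$ is unnecessary (any $u,e$ with $U^\top(ue^\top-eu^\top)U\neq 0$ suffices, and such a pair exists because rank-two skew matrices span $\mathcal{A}_n$ and the map $A\mapsto U^\top A U$ is nonzero on $\mathcal{A}_n$ when $\mathtt{rank}\,U\geq 2$), though your stronger choice also works.
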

\begin{proof}
Let us consider the linear application:
$$
\Gamma': \Delta \in \R^{n \times m} \mapsto \begin{pmatrix}
                0 & \Delta \\ \Delta^\top &  0
            \end{pmatrix}
            \times 
            \begin{pmatrix}
                U\\V
            \end{pmatrix}.
            $$
    By \Cref{linear_vf},  $\range \Gamma' (\R^{n \times m}) = W_\phi(U;V)$. Thus, as by definition $W_\phi(U;V) \subseteq \lie(W_\phi (U;V))$, $W_\phi$ does not satisfy the condition \eqref{eq:frob-crochets} if and only if $\vdim (W_\phi(U;V)) < \vdim (\lie W_\phi (U;V))$.

    \textit{1st case: $n+m \leq r$.} Then as $(U; V)$ has full rank $n+m$, $\Gamma'$ is injective and then $\textrm{rank} \Gamma' ( \R^{n \times m})=  n \times m$.

    Thus by \Cref{dim-lie-algebra}, we only need to verify that:
    $ n\times m < (n+m)(n+m-1)/2 =: \lie \W (U;V)$. It is the case as $\max(n, m) > 1$.

\textit{2d case:  $n+m > r$.} We write $(U;V) = (C_1; \cdots; C_r)$ with $(C_1, \cdots, C_r)$ that are linearly independent as $(U; V)$ has full rank $r$. 
Let $\Delta \in  \R^{n \times m}$ such that $\Gamma'(\Delta) = 0$. Let us define the symmetric matrix $M$ by:
\begin{equation} \label{form-m}
M \coloneqq \begin{pmatrix}
                0& \Delta \\ \Delta^\top&  0
            \end{pmatrix}.
\end{equation}
Then $M\cdot (U;V) = 0$. Then we write $M^\top = (M_1; \cdots; M_{n+m})$. Then as  $M\times (U;V) = 0$, we have that $\langle M_i, C_j \rangle = 0$ for all $i = 1, \cdots, n+m$ and for all $j = 1, \cdots, r$. We note $C \coloneqq \underset{i = 1, \cdots, r}{\linspan} C_i$ that is of dimension $r$ as $(U;V)$ has full rank $r$.

For all $i = 1, \cdots, n$, $M_i$ must be in $C^\perp$ and its $n$ first coordinate must be zero by definition \eqref{form-m}. Then $M_i$ lies in a space of dimension $\max(0, n+m-r-n)$.
For all $j > n$, $M_j$ are entirely determined by $\{ M_i \}_{i \leq n}$ by definition \eqref{form-m}.
Finally the dimension of $\mathrm{ker} \Gamma'$ is equal to:
$
n \times \max(0, m-r).
$
Then: $\vdim (\W (U; V)) = \textrm{rank} \Gamma' ( \R^{n \times m})= nm - n \times \max(0, m-r)$.

Thus by \Cref{dim-lie-algebra}, we only need to verify that:
    $ nm - n\max(0, m-r) < (n+m)r - r(r+1)/2 =: \lie \W (U;V)$. 

\textit{Let us assume $m < r$.} Then by looking at $f(r) \coloneqq  (n+m)r - r(r+1)/2 - nm = \vdim (\lie \W (U;V)) -  \vdim (\W (U;V)) $ for $r \in \{m+1, \cdots,  n + m -1 \}=: I_{n, m}$, we have: $f'(r) = (n+m) - 1/2 - r > 0$ (as $n+m> r$ is an integer), so $f$ is increasing, so on $I_{n, m}$, we have (as $r> m$): $f(r) > f(m) = (n+m)m - m(m+1)/2-nm  = m^2 - m(m+1)/2 \geq 0$ as $m \geq 1$.

\textit{Let us assume $m \geq r$.} Then
\begin{align*}
    \vdim (\lie \W (U;V)) -  \vdim (\W (U;V)) &= (n+m)r - r(r+1)/2 - (nm -n(m-r)), \\
    &= mr - r(r+1)/2, \\
    &\geq r^2 - r(r+1)/2 \quad \text{as } m\geq r, \\
    &>  0 \quad \text{as } r > 1.
\end{align*}
Thus $\vdim (\lie \W (U;V)) -  \vdim (\W (U;V) )> 0$.
\end{proof}

\section{Details about experiments} \label{appendix_numeric}
We used the software SageMath \cite{sagemath} that relies on a Python interface. Computations were run in parallel using 64 cores on an academic HPC platform.

First we compared the dimension of the generated Lie algebra $\lie(\W)(\x)$ (computed using the algorithm presented in \Cref{subsection-method}) with $D-N$, where $N$ is the number of independent conserved functions known by the literature (predicted by \Cref{conservation} for ReLU and linear neural networks). 
We tested both linear and ReLU architectures (with and without biases) of various depths and widths, and observed that the two numbers matched in all our examples.

For this, we draw $50$ random ReLU (resp. linear) neural network architectures, with depth drawn uniformly at random between $2$ to $5$ and i.i.d. layer widths drawn uniformly at random between $2$ to $10$ (resp. between $2$ to $6$). For ReLU architectures, the probability to include biases was $1/2$.

Then we checked that all conservation laws can be explicitly computed using the algorithm presented in \Cref{section:constructibility} and looking for polynomial solutions of degree $2$ (as conservation laws already known by the literature are polynomials of degree $2$). As expected we found back all known conservation laws by choosing $10$ random ReLU (resp. linear) neural network architectures with depth drawn uniformly at random between $2$ to $4$ and i.i.d. layer widths drawn uniformly at random between $2$ to $5$. 

\end{document}